\crefname{equation}{Eq.}{Eqs.}
\Crefname{algocf}{Algorithm}{Algorithms}  
\theoremstyle{plain}
\newtheorem{theorem}{Theorem}
\newtheorem{proposition}[theorem]{Proposition}
\newtheorem{lemma}[theorem]{Lemma}
\newtheorem{corollary}[theorem]{Corollary}
\theoremstyle{definition}
\newtheorem{definition}[theorem]{Definition}
\newtheorem{remark}{Remark}
\renewcommand{\hat}{\widehat}
\renewcommand{\epsilon}{\varepsilon}
\renewcommand{\log}{\ln}
\def\e{\mathrm{e}}
\def\E{\mathbb{E}}
\def\R{\mathbb{R}}
\def\Rp{\mathbb{R}_{> 0}}
\def\calD{\mathcal{D}}
\def\calR{\mathcal{R}}
\DeclareMathOperator*{\argmin}{arg\,min}
\DeclareMathOperator*{\minimize}{minimize}
\DeclareMathOperator*{\maximize}{maximize}
\DeclareMathOperator{\dom}{dom}
\DeclareMathOperator{\interior}{int}
\DeclareMathOperator{\bd}{bd}
\DeclarePairedDelimiter{\abs}{\lvert}{\rvert} %
\DeclarePairedDelimiter{\brk}{[}{]}
\DeclarePairedDelimiter{\set}{\{}{\}}
\DeclarePairedDelimiter{\prn}{(}{)}
\DeclarePairedDelimiter{\nrm}{\|}{\|}
\DeclarePairedDelimiter{\inpr}{\langle}{\rangle}  
\DeclarePairedDelimiter{\ceil}{\lceil}{\rceil}
\newcommand{\Expect}[1]{\mathbb{E}{\left[#1\right]}}
\newcommand{\ExpectX}[2]{\E_{#1}{\left[#2\right]}}
\newcommand{\red}[1]{{\color{red}{#1}}}
\newcommand{\ones}{\mathbf{1}}
\newcommand{\ie}{\textit{i.e.,}}
\newcommand{\eg}{\textit{e.g.,}}
\newcommand{\Reg}{\mathsf{R}}
\newcommand{\nn}{\nonumber\\}
\newcommand{\n}{\nonumber}
\newcommand{\per}{\,.}
\newcommand{\com}{\,,}
\newcommand{\sumT}{\sum_{t=1}^T}
\newcommand{\prx}[1]{\left(#1\right)}
\newcommand{\one}{\mbox{(i)}}
\newcommand{\two}{\mbox{(i\hspace{-.1em}i)}}
\newcommand{\three}{\mbox{(i\hspace{-.1em}i\hspace{-.1em}i)}}
\newcommand{\xstar}{x_\star}
\newcommand{\gamst}{\gamma_\star}
\newcommand{\rhost}{\rho} 
\newcommand{\ball}{\mathbb{B}}
\newcommand{\Csc}{c_{\mathsf{sc}}}  
\newcommand{\Cec}{c_{\mathsf{ec}}}  
\newcommand{\Cg}{c_{\mathsf{g}}}  
\newcommand{\fcirc}{f^\circ}
\newcommand{\ftilcirc}{\widetilde{f}^\circ}
\newcommand{\xtilstar}{\widetilde{x}_\star}
\newcommand{\gamtilstar}{\widetilde{\gamma}_\star}
\title{Fast Rates in Stochastic Online Convex Optimization \\ by Exploiting the Curvature of Feasible Sets}
\author{%
  Taira Tsuchiya \\
  The University of Tokyo and RIKEN \\
  \texttt{tsuchiya@mist.i.u-tokyo.ac.jp} \\
  \And
  Shinji Ito \\
  The University of Tokyo and RIKEN \\
  \texttt{shinji@mist.i.u-tokyo.ac.jp} \\
}
\begin{document}

\maketitle

\begin{abstract}
In this work, we explore online convex optimization (OCO) and introduce a new condition and analysis that provides fast rates by exploiting the curvature of feasible sets. In online linear optimization, it is known that if the average gradient of loss functions exceeds a certain threshold, the curvature of feasible sets can be exploited by the follow-the-leader (FTL) algorithm to achieve a logarithmic regret. This study reveals that algorithms adaptive to the curvature of loss functions can also leverage the curvature of feasible sets. In particular, we first prove that if an optimal decision is on the boundary of a feasible set and the gradient of an underlying loss function is non-zero, then the algorithm achieves a regret bound of $O(\rho \log T)$ in stochastic environments. Here, $\rho > 0$ is the radius of the smallest sphere that includes the optimal decision and encloses the feasible set. Our approach, unlike existing ones, can work directly with convex loss functions, exploiting the curvature of loss functions simultaneously, and can achieve the logarithmic regret only with a local property of feasible sets. Additionally, the algorithm achieves an $O(\sqrt{T})$ regret even in adversarial environments, in which FTL suffers an $\Omega(T)$ regret, and achieves an $O(\rho \log T + \sqrt{C \rho \log T})$ regret in corrupted stochastic environments with corruption level $C$. Furthermore, by extending our analysis, we establish a matching regret upper bound of $O\Big(T^{\frac{q-2}{2(q-1)}} (\log T)^{\frac{q}{2(q-1)}}\Big)$ for $q$-uniformly convex feasible sets, where uniformly convex sets include strongly convex sets and $\ell_p$-balls for $p \in [2,\infty)$. This bound bridges the gap between the $O(\log T)$ bound for strongly convex sets~($q=2$) and the $O(\sqrt{T})$ bound for non-curved sets~($q\to\infty$).
\end{abstract}

\section{Introduction}\label{sec:introduction}
This paper considers online convex optimization (OCO), a framework in which a learner and an environment interact in a sequential manner.
At the beginning, a convex body (or feasible set) $K \subseteq \R^d$ is given.
At each round $t = 1,\dots,T$, the learner selects a decision $x_t \in K$ from the convex body $K$ using information obtained up to round $t-1$.
Then, the environment determines a convex loss function $f_t \colon K \to \R$,
and the learner suffers loss $f_t(x_t)$ and observes $\nabla f_t(x_t) \in \R^d$.
The goal of the learner is to minimize the regret, which is the expectation of the difference between the cumulative loss of decisions $(x_t)_{t=1}^T$ and that of a single optimal decision $\xstar$ fixed in hindsight, that is,
$
  \Reg_T = 
  \E\brk[\big]{ \sumT \prn*{ f_t(x_t) - f_t(\xstar) } }
$
for 
$\xstar = \argmin_{x \in K} \E\brk[\big]{ \sumT f_t(x)}$.
OCO is called online linear optimization (OLO)
when $(f_t)_t$ are linear functions, \ie~$f_t(\cdot) = \inpr{g_t, \cdot}$ for some $g_t \in \R^d$.

In OCO and OLO, the well-known online gradient descent (OGD) achieves an $O(\sqrt{T})$ regret upper bound for Lipschitz continuous $f_t$~\citep{zinkevich03online}.
In general, this upper bound cannot be improved and is known to match the $\Omega(\sqrt{T})$ regret lower bound~\citep{hazan07logarithmic}.
However, this lower bound can be circumvented under certain conditions.
The most typical way is to exploit the curvature of loss functions.
It is known that OGD with a learning rate of $\Theta(1/t)$ and online Newton step (ONS) can achieve an $O(\frac{1}{\alpha} \log T )$ and $O(\frac{d}{\beta} \log T)$ regret for $\alpha$-strongly-convex and $\beta$-exp-concave loss functions, respectively~\citep{hazan07logarithmic}.

Another way to circumvent the lower bound is to harness \emph{the curvature of the feasible set} $K$.
Existing studies proved that in OLO if the feasible set is curved and loss vectors $g_t$ are biased towards a specific direction, the follow-the-leader (FTL) algorithm can achieve a logarithmic regret.
In particular, \citet{huang17following} first proved that under the \emph{growth condition} that there exists $L > 0$ such that $\nrm{g_1 + \cdots + g_t}_2 \geq t L$ for any $t \in [T] = \set{1, \dots, T}$, FTL achieves an $O(\frac{G^2}{\lambda L}\log T)$ regret for $\lambda$-strongly convex $K$ and $G$-Lipschitz loss functions.
This bound matches their lower bound of $\Omega(\frac{1}{\lambda L}\log T)$.
\citet{molinaro22strong} also proves that FTL can achieve a logarithmic regret under the different assumption on the loss vectors that $g_t \leq 0$ for all $t \in [T]$, providing an intuitive and simple proof.

\begin{savenotes}
  \begin{table*}[t]
    \caption{Comparison of our regret upper bounds with existing bounds.
    All bounds assume that 
    loss functions are $G$-Lipschitz (except Lines 1--3) and
    $\xstar$ is on the boundary of $K$.
    The upper bounds that contain the variable $L > 0$ assume $\nrm{g_1 + \cdots + g_t}_2 \geq t L$ for all $t \in [T]$.
    We use
    $\fcirc = \E_{f\sim\calD}\brk{f}$,
    $C \geq 0$ is the corruption level,
    and the $\tilde{\Omega}$ notation ignores logarithmic factors.
    The $(\kappa, 2)$-uniformly convex set is $\kappa$-strongly convex.
    Theorem is abbreviated as as Thm, Corollary as Cor,
    and sphere-enclosed as sphere-enc.
    Note that regret bounds proven in this study can be simultaneously achieved by the same algorithm with identical parameters.
    }
    \label{table:regret}
    \centering
    \footnotesize
    \begin{tabular}{llp{2cm}l}
      \toprule
      Reference & Feasible set & Loss functions & Regret bound  \\
      \midrule
      \cite{huang17following}, \textbf{This work} (Thm~\ref{thm:lower_bound}) & \multirow{3}{6em}{ellipsoid $W_\lambda$ in~\Cref{subsubsec:existing_lower_bound} \par ($\lambda$-strongly convex)} & $f_t(\cdot) = \inpr{h_t^L, \cdot}$ in Thm~\ref{thm:lower_bound_growth}  
      & $\displaystyle \Omega\prn*{ \frac{1}{\lambda L} \log T }$,\, $\displaystyle \Omega\prn*{ \frac{1}{\lambda \nrm{\nabla \fcirc(\xstar)}_2} \log T }$ \\
      \textbf{This work} (Thm~\ref{thm:lower_bound_corrupt}) &  & corrupted 
      & $\displaystyle \tilde{\Omega}\prn*{ \frac{1}{\lambda \nrm{\nabla \fcirc(\xstar)}_2} + \sqrt{\frac{C}{\lambda \nrm{\nabla \fcirc(\xstar)}_2}} } $  \\
      \textbf{This work} (Cor~\ref{cor:matching_upper_bound}) & & $f_t(\cdot) = \inpr{h_t^L, \cdot}$ in Thm~\ref{thm:lower_bound_growth}  
      & $\displaystyle  O \prn*{ \frac{1}{\lambda L} \log T }$ \\
      \midrule
      \textbf{This work} (Thms~\ref{thm:main},~\ref{thm:main_sim_curvature}) & \multirow{1}{6em}{$(\rho,\xstar,\fcirc)$-sphere-enc.} & stochastic, \par convex
      & $\displaystyle  O\prn*{ \frac{G^2 \rho}{\nrm{\nabla \fcirc(\xstar)}}_2 \log T }$  \\
      \textbf{This work} (Thm~\ref{thm:main_corruption}) & \multirow{1}{6em}{$(\rho,\xtilstar,\ftilcirc)$-sphere-enc.} & corrupted,\par convex
      & $\displaystyle  O\prn*{ \frac{G^2 \rho}{\nrm{\nabla \ftilcirc(\xtilstar)}}_2 \log T + \sqrt{\frac{C G^2 \rho}{\nrm{\nabla \ftilcirc(\xtilstar)}}_2 \log T } }$  \\
      \midrule
      \citet{huang17following} & \multirow{2}{6em}{$\lambda$-strongly convex} & adversarial,\par  linear 
      & $\displaystyle O\prn*{ \frac{G^2}{\lambda L} \log T }$  \\
      \citet{molinaro22strong} &  & adversarial,\par  linear 
      & $\displaystyle O\prn*{ \frac{G \, c'}{\lambda} \log T }$ $\prn*{ c' = \max_{x \in \Rp^d \colon \nrm{x} = 1} \inpr{u, x} }$ \\
      \textbf{This work} (Thm~\ref{thm:main_uniform_cvx}) &  & stochastic, \par convex  
      & $\displaystyle O\prn*{ \frac{G^2}{ \lambda \nrm{ \nabla \fcirc(\xstar) }_\star} \log T }$  \\
      \midrule
      \citet{kerdreux21projection} 
      & \multirow{2}{6em}{$(\kappa, q)$-uniformly convex} & adversarial,\par  linear 
      & $\displaystyle O\prn*{ \frac{G^{\frac{q}{q-1}}}{ \prn*{\kappa L}^{\frac{1}{q-1}}} T^{ \frac{q-2}{q-1} } }$  \\
      \textbf{This work} (Thm~\ref{thm:main_uniform_cvx}) & & stochastic,\par  convex 
      & $\displaystyle O\prn*{ \frac{G^{\frac{q}{q-1}}}{ \prn*{ \kappa \nrm{ \nabla \fcirc(\xstar) }_\star}^{\frac{1}{q-1}}} T^{\frac{q-2}{2(q-1)}} \prn*{\log T}^{\frac{q}{2(q-1)}} }$  \\
      \bottomrule
    \end{tabular}
  \end{table*}
\end{savenotes}

Their approach, however, has several remaining limitations.
First, they only consider OLO.
While the linearization technique allows us to solve OCO by OLO, this may prevent us from leveraging the curvature of loss functions.
Second, their analysis requires the curvature over the entire boundary of the feasible set, which is a rather limited condition.
Finally, some of their approach suffers an $\Omega(T)$ regret if the ideal conditions on loss vectors, such as the growth condition, are not satisfied.
Note that we cannot know in advance whether such conditions are satisfied or not.
Exceptions are the method based on the expert tracking algorithm in \cite[Section 4]{huang17following}, in which FTL is combined with follow-the-regularized-leader, and the work by \citet{anderson21lazy}, who investigated the online lazy gradient descent over the strongly convex sets.

To overcome these limitations, we consider using algorithms adaptive to the curvature of loss functions~\citep{erven16metagrad,erven21metagrad,wang20adaptivity}, also known as \emph{universal online learning}.
The original motivation of this line of work is to automatically achieve a regret bound that depends on the true curvature level of loss functions, \eg~parameters of strong convexity or exp-concavity, without knowing them.
The crux of their analysis is to derive a bound of 
$\sumT \inpr{\nabla f_t(x_t), x_t - \xstar} = O \prn[\big]{ \sqrt{\sumT \nrm{x_t - \xstar}_2^2 \log T} }$. 

\paragraph{Contributions of this paper}
We introduce a new condition for achieving fast rates in OCO.
We first show that algorithms adaptive to the curvature of loss functions can exploit the curvature of feasible sets and overcome the three limitations mentioned earlier.
We prove the following theorem:
\begin{theorem}[informal version of \Cref{thm:main,thm:main_corruption}]\label{thm:main_intro}
  Any algorithm with
   $\sumT \inpr{\nabla f_t(x_t), x_t - \xstar} = O \prn[\big]{ \Csc \sqrt{\sumT \nrm{x_t - \xstar}_2^2 \log T} }$ for some $\Csc > 0$ achieves
  $
    \Reg_T 
    = 
    O\prn*{ \frac{\Csc^2 \, \rhost}{\nrm{\nabla \fcirc(\xstar)}_2} \log T }
  $
  in stochastic environments,
  where $\fcirc = \E_{f_t}\brk{f_t}$ and 
  $\rho > 0$ is the smallest radius of a sphere that includes $\xstar$ and encloses $K$.
  The same algorithm achieves 
  $
    \Reg_T 
    = 
    O(\rho \log T + \sqrt{ C \rho \log T})
  $ in corrupted stochastic environments for corruption level $C$ and $\Reg_T = O(\sqrt{T})$ in adversarial environments.
\end{theorem}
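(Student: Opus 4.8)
The plan is to combine the algorithm's data-dependent guarantee $\sumT\inpr{\nabla f_t(x_t),\, x_t-\xstar} = O\bigl(\Csc\sqrt{\sumT\nrm{x_t-\xstar}_2^2\log T}\bigr)$ with a geometric consequence of the sphere-enclosing condition, and then close a self-bounding inequality for $\Reg_T$. The first step is to turn the sphere-enclosing condition into a quadratic-growth inequality around $\xstar$. The defining property of a $(\rho,\xstar,\fcirc)$-sphere-enclosed set furnishes a ball $\ball(c,\rho)\supseteq K$ with $\xstar$ on its boundary and $c-\xstar$ a positive multiple of $\nabla\fcirc(\xstar)$; this is consistent precisely because $\xstar\in\bd K$ and, by optimality of $\xstar$ for $\fcirc$ over $K$, the vector $-\nabla\fcirc(\xstar)$ lies in the normal cone of $K$ at $\xstar$. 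Expanding $\nrm{x-c}_2^2\le\rho^2=\nrm{\xstar-c}_2^2$ for $x\in K$ and substituting $c-\xstar=\rho\,\nabla\fcirc(\xstar)/\nrm{\nabla\fcirc(\xstar)}_2$ gives, for every $x\in K$,
\[
  \nrm{x-\xstar}_2^2 \;\le\; \frac{2\rho}{\nrm{\nabla\fcirc(\xstar)}_2}\,\inpr{\nabla\fcirc(\xstar),\; x-\xstar}
  \;\le\; \frac{2\rho}{\nrm{\nabla\fcirc(\xstar)}_2}\bigl(\fcirc(x)-\fcirc(\xstar)\bigr),
\]
the last inequality by convexity of $\fcirc$. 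Taking $x=x_t$ and summing yields $\sumT\nrm{x_t-\xstar}_2^2 \le \tfrac{2\rho}{\nrm{\nabla\fcirc(\xstar)}_2}\sumT\bigl(\fcirc(x_t)-\fcirc(\xstar)\bigr)$.

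Next I would close the loop in the stochastic case. Since $x_t$ is determined by the past and $f_t\sim\calD$ is independent of it, $\E[f_t(x_t)]=\E[\fcirc(x_t)]$, so $\Reg_T=\E\bigl[\sumT(f_t(x_t)-f_t(\xstar))\bigr]=\E\bigl[\sumT(\fcirc(x_t)-\fcirc(\xstar))\bigr]$; and by convexity of each $f_t$, $\sumT(f_t(x_t)-f_t(\xstar))\le\sumT\inpr{\nabla f_t(x_t),\,x_t-\xstar}$ pathwise, hence $\Reg_T\le\E\bigl[\sumT\inpr{\nabla f_t(x_t),\,x_t-\xstar}\bigr]$. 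Combining the algorithm's hypothesis (invoking $G$-Lipschitzness if the data term is stated with $\inpr{\nabla f_t(x_t),\,x_t-\xstar}^2$ inside the root), Jensen's inequality $\E\sqrt{X}\le\sqrt{\E X}$ in the favorable direction, and the previous display,
\[
  \Reg_T \;\le\; \E\Bigl[\sumT\inpr{\nabla f_t(x_t),\,x_t-\xstar}\Bigr]
  \;\le\; \Csc\sqrt{\frac{2\rho}{\nrm{\nabla\fcirc(\xstar)}_2}\,\Reg_T\,\log T} \;+\; \text{(lower-order terms)}.
\]
Solving this inequality of the type $Z\le a\sqrt{Z}+b$ (which forces $Z\le 2a^2+2b$) gives $\Reg_T=O\!\bigl(\frac{\Csc^2\rho}{\nrm{\nabla\fcirc(\xstar)}_2}\log T\bigr)$, the first claim.

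For corruption level $C$, the identity in the second step and the inequality in the first hold only up to an additive $O(C)$ (losses on corrupted rounds need not be aligned with $\ftilcirc$), and one runs the argument against $\xtilstar$ and the clean mean $\ftilcirc$; the self-bounding inequality becomes $\Reg_T\le\Csc\sqrt{\frac{2\rho}{\nrm{\nabla\ftilcirc(\xtilstar)}_2}\bigl(\Reg_T+O(C)\bigr)\log T}+\text{(lower-order terms)}$, whose solution is $O\!\bigl(\frac{\Csc^2\rho}{\nrm{\nabla\ftilcirc(\xtilstar)}_2}\log T+\sqrt{\frac{\Csc^2\rho\,C}{\nrm{\nabla\ftilcirc(\xtilstar)}_2}\log T}\bigr)$. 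In an adversarial environment the first two steps are dropped: either bound $\sumT\nrm{x_t-\xstar}_2^2$ by $T$ times the squared diameter of $K$ inside the algorithm's guarantee, or invoke the algorithm's standalone minimax bound; either way $\Reg_T=O(\sqrt{T})$.

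I expect the first step to be the main obstacle: extracting the quadratic-growth inequality $\nrm{x-\xstar}_2^2\lesssim\frac{\rho}{\nrm{\nabla\fcirc(\xstar)}_2}\inpr{\nabla\fcirc(\xstar),\,x-\xstar}$ from a purely \emph{local} curvature condition at $\xstar$, and verifying that the constant is exactly the radius of the smallest enclosing sphere rather than a global curvature modulus of $K$ — this is what makes the bound apply to sets curved only near $\xstar$ while still letting the curvature of the losses be exploited through $\Csc$. The remaining ingredients — the exchange of expectation and square root, and the bookkeeping of the lower-order terms carried by the adaptive bound — are routine.
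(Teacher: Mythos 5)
Your proposal is correct and follows essentially the same route as the paper: your quadratic-growth inequality $\nrm{x-\xstar}_2^2 \le \frac{2\rho}{\nrm{\nabla\fcirc(\xstar)}_2}\inpr{\nabla\fcirc(\xstar),x-\xstar}$, obtained by expanding $\nrm{x-c}_2^2\le\rho^2$, is exactly the paper's condition $K\subseteq B_{\gamst}^K$ with $\gamst=\nrm{\nabla\fcirc(\xstar)}_2/(2\rho)$, and the subsequent lower-bounding of $\Reg_T$ by $\gamst\,\E\bigl[\sum_t\nrm{x_t-\xstar}_2^2\bigr]$, the self-bounding inequality, and the corrupted/adversarial modifications all match the paper's arguments. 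The step you flagged as the main obstacle is in fact the easy part here, since sphere-enclosedness is already a global containment $K\subseteq\ball(c,\rho)$ anchored at $\xstar$ rather than a purely infinitesimal curvature condition.
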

This upper bound matches an existing lower bound~\cite[Theorem 9]{huang17following}, specifically when considering the environment employed to construct their lower bound.
This will be formally stated in \Cref{cor:matching_upper_bound}.

The advantage of our approach over the existing approach is that it overcomes all three limitations of the existing approach mentioned earlier.
That is, \one~in contrast to existing studies, it can work with OCO without the linearization, allowing us to simultaneously exploit the curvature of feasible sets and the curvature of loss functions (see \Cref{thm:main_sim_curvature}).
\two~Even in worst cases, where the specific conditions on loss vectors, such as the growth condition, are not satisfied, an $O(\sqrt{T})$ regret upper bound can be achieved.
\three~The local structure of $K$ around optimal decision $\xstar$ is sufficient for our approach to achieve the logarithmic regret.
As a further advantage, our approach can achieve an $O(\rho \log T + \sqrt{C \rho \log T})$ regret bound for corrupted stochastic environments with corruption level $C \geq 0$, which are intermediate environments between stochastic and adversarial environments~(\Cref{thm:main_corruption}).
We provide a regret lower bound that nearly matches this upper bound~(\Cref{thm:lower_bound_corrupt}).

Our approach can also be used to obtain fast rates on \emph{uniformly convex} feasible sets,
a broader class that includes \emph{strongly convex} sets and $\ell_p$-balls for $p \in [2,\infty)$.
For $q$-uniformly convex $K$,
\citet{kerdreux21projection} proves an regret bound of $O\prn[\big]{ T^{ \frac{q-2}{q-1} }}$, which is smaller than $O(\sqrt{T})$ only when $q \in (2,3)$.
We improve this bound by proving the following upper bound, which matches the lower bound in~\cite{bhaskara20online}:
\begin{theorem}[informal version of \Cref{thm:main_uniform_cvx}]
  In online convex optimization with $q$-uniformly convex feasible set $K$, the same algorithm as \Cref{thm:main_intro} achieves $\Reg_T = O\prn[\big]{T^{\frac{q-2}{2(q-1)}} \prn*{ \log T }^{\frac{q}{2(q-1)}}}$.
\end{theorem}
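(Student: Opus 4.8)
The plan is to reproduce, for general $q \ge 2$, the self-bounding argument that yields the logarithmic bound in the strongly convex case ($q=2$): lower-bound $\Reg_T$ by a $q$-th power path length $\sum_t\nrm{x_t-\xstar}_2^q$ using $q$-uniform convexity of $K$, upper-bound $\Reg_T$ via the algorithm's adaptive guarantee (which naturally produces the $\ell_2$-squared path length $\sum_t\nrm{x_t-\xstar}_2^2$), convert between these two powers with Hölder's inequality, and then solve the resulting scalar inequality for $\Reg_T$. As a first step, since in a stochastic environment $x_t$ depends only on $f_1,\dots,f_{t-1}$, we have $\Reg_T=\E\big[\sum_{t=1}^T(\fcirc(x_t)-\fcirc(\xstar))\big]$, and convexity gives both $\fcirc(x_t)-\fcirc(\xstar)\ge\inpr{\nabla\fcirc(\xstar),x_t-\xstar}$ and $f_t(x_t)-f_t(\xstar)\le\inpr{\nabla f_t(x_t),x_t-\xstar}$.

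For the lower bound, note that $\xstar\in\bd K$ and $\xstar=\argmin_{x\in K}\fcirc(x)$ with $\nabla\fcirc(\xstar)\neq 0$, so $-\nabla\fcirc(\xstar)$ lies in the normal cone of $K$ at $\xstar$. The defining property of a $q$-uniformly convex body yields a scaling inequality: for a unit outward normal $n$ at a boundary point $\xstar$ there is a constant $c_q>0$ with $-\inpr{n,x-\xstar}\ge c_q\kappa\nrm{x-\xstar}_2^q$ for every $x\in K$ (for $q=2$ this is exactly the set-curvature inequality behind \Cref{thm:main_intro}). Taking $n=-\nabla\fcirc(\xstar)/\nrm{\nabla\fcirc(\xstar)}_\star$ and summing over $t$ gives, with $S:=\E\big[\sum_{t=1}^T\nrm{x_t-\xstar}_2^q\big]$,
\[
  \Reg_T \;\ge\; \E\Big[\sum_{t=1}^T\inpr{\nabla\fcirc(\xstar),x_t-\xstar}\Big] \;\ge\; c_q\,\kappa\,\nrm{\nabla\fcirc(\xstar)}_\star\, S.
\]

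For the upper bound, the hypothesis on the algorithm (\Cref{thm:main_intro}), which holds pathwise, together with the convexity bound above gives $\Reg_T\le\E\big[O(\Csc\sqrt{(\log T)\sum_t\nrm{x_t-\xstar}_2^2})\big]$. Applying Jensen (concavity of $\sqrt{\cdot}$), then Hölder's inequality with exponents $q/2$ and $q/(q-2)$, namely $\sum_t\nrm{x_t-\xstar}_2^2 \le (\sum_t\nrm{x_t-\xstar}_2^q)^{2/q}\, T^{1-2/q}$, and Jensen once more (concavity of $y\mapsto y^{2/q}$), we obtain $\Reg_T\le O(\Csc\sqrt{\log T})\, T^{1/2-1/q}\, S^{1/q}$. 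Substituting $S\le\Reg_T/(c_q\kappa\nrm{\nabla\fcirc(\xstar)}_\star)$ and isolating $\Reg_T$ (using $(\tfrac12-\tfrac1q)\tfrac{q}{q-1}=\tfrac{q-2}{2(q-1)}$) yields
\[
  \Reg_T \;=\; O\!\left(\frac{\Csc^{q/(q-1)}}{\big(\kappa\,\nrm{\nabla\fcirc(\xstar)}_\star\big)^{1/(q-1)}}\; T^{\frac{q-2}{2(q-1)}}\,(\log T)^{\frac{q}{2(q-1)}}\right),
\]
which with $\Csc=G$ is the claimed bound; $q=2$ recovers \Cref{thm:main_intro} and $q\to\infty$ recovers $O(\sqrt T)$. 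The corrupted-stochastic variant should follow from the same scheme, carrying the corruption term through the reduction step as in \Cref{thm:main_corruption}.

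The main obstacle is the geometric step: establishing the sharp power-$q$ scaling inequality for uniformly convex bodies and, more delicately, reconciling the norm in which set curvature and Lipschitzness are measured (the dual norm $\nrm{\cdot}_\star$) with the Euclidean path length $\sum_t\nrm{x_t-\xstar}_2^2$ that the universal-learning regret bound produces, so that the two bounds on $\Reg_T$ combine with consistent constants. A secondary point is to confirm that the adaptive bound of \Cref{thm:main_intro} holds pathwise rather than only in expectation; if only the latter, one keeps the entire quantity inside a single expectation and applies Jensen just once.
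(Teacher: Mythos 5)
Your proposal is correct and follows essentially the same route as the paper's proof of \Cref{thm:main_uniform_cvx}: lower-bound the regret by the $q$-th-power path length via the uniform-convexity scaling inequality (exactly the paper's \Cref{lem:unif_convex}, with the norm mismatch you flag as the ``main obstacle'' handled simply by the equivalence constant $\xi$ satisfying $\nrm{x}_2 \le \xi \nrm{x}$), upper-bound it by the adaptive guarantee, relate the two path lengths by the power-mean/H\"older inequality, and solve the resulting self-bounding relation. The only cosmetic difference is that you isolate $\Reg_T$ directly from $\Reg_T \le A\,\Reg_T^{1/q}$, whereas the paper writes $\Reg_T = 2\Reg_T - \Reg_T$ and invokes the scalar inequality $a\sqrt{x} - b x^{q/2} \le a^{\frac{q}{q-1}}/(qb)^{\frac{1}{q-1}}$; both yield identical exponents.
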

This becomes a fast rate for any $q > 2$ and is strictly better than the bound in~\cite{kerdreux21projection}.
Our bound interpolates between the $O(\log T)$ bound for strongly convex sets (when $q=2$) and the $O(\sqrt{T})$ bound for non-curved feasible sets (when $q\to\infty$).
\Cref{table:regret} summarizes the regret comparison.

\section{Preliminaries}\label{sec:preliminaries}
Let $e_{i} \in \{0,1\}^d$ be the $i$-th standard basis of $\R^d$, and $\ones$ be the all-one vector.
For $p \in [1, \infty]$ and vector $x$, let $\nrm{x}_p$ be $\ell_p$-norm.
Let $\xi > 0$ be a constant satisfying $\nrm{x}_2 \leq \xi \nrm{x}$ for any $x \in \R^d$.
For a norm $\nrm{\cdot}$, we use $\nrm{x}_\star = \sup\set{ \inpr{x, y} \colon \nrm{y} \leq 1}$ to denote its dual norm.
Let $\ball_{\nrm{\cdot}}(x, r)$ be a ball with radius $r$ centered at $x$ associated with~$\nrm{\cdot}$, \ie~$\ball_{\nrm{\cdot}}(x, r) = \set*{ z \colon \nrm{z - x} \leq r}$.
We use $\ball(x, r)$ to denote the Euclidean ball with radius $r$ centered at $x$ and $\ball_{\nrm{\cdot}}$ to denote the unit ball. 
Let $\bd(K)$ be the boundary of $K$.
A function $f \colon \R^d \to (-\infty, \infty]$ is convex if
for all $x \in \interior\dom f$,
$f(y) \geq f(x) + \inpr{\nabla f(x), y - x}$ for all $y \in \R^d$.\footnote{
  For simplicity, this paper only considers the case that loss functions $f_t$'s are differentiable, but one can extend all the results to the subdifferentiable case in a straightforward manner.
}
For $\alpha > 0$, $f \colon K \to (-\infty, \infty]$ is $\alpha$-strongly convex over $K \subseteq \dom f$ w.r.t.~$\nrm{\cdot}$ if
for all $x, y \in K$,
$f(y) \geq f(x) + \inpr{\nabla f(x), y - x} + \frac{\alpha}{2} \nrm{x - y}^2$.
For $\beta > 0$, $f \colon K \to (-\infty, \infty]$ is $\beta$-exp-concave if $\exp(- \beta f(x))$ is concave.

\subsection{Online convex optimization}
We consider online convex optimization (OCO).
In OCO, a convex body (or feasible set) $K \subseteq \R^d$ is given before the game starts.
Let $D = \max_{x, y \in K} \nrm{x - y}_2$ be the diameter of $K$.
At each round $t \in [T]$, 
the learner selects a decision $x_t \in K$ using information obtained up to round $t-1$, and 
a convex loss function $f_t \colon K \to \R$ is determined by the environment.
The learner then suffers a loss $f_t(x_t)$ and observes $\nabla f_t(x_t) \in \R^d$.
The goal of the learner is to minimize the regret, which is defined as $
\Reg_T = \E\brk[\big]{ \sumT \prn*{ f_t(x_t) - f_t(\xstar) } } $ for the optimal decision $\xstar = \argmin_{x \in K} \E\brk[\big]{ \sumT f_t(x) }$.
When loss functions are restricted solely to linear functions, that is, when $f_t(\cdot) = \inpr{g_t, \cdot}$ for some $g_t \in \R^d$, OCO is referred to as online linear optimization (OLO).

\subsection{Assumptions on loss functions}
In this study, we assume that $f_t$ is $G$-Lipschitz, \ie~$\sup_{x \in K} \nrm{\nabla f_t(x)}_2 \leq G$.
In the following, we list three assumptions on how a sequence of $f_1, \dots, f_T$ is generated.
In stochastic environments, $f_t$ is sampled in an i.i.d.~manner from a certain probability distribution $\calD$.
The expectation of $f_t$ is denoted as $\fcirc = \ExpectX{f\sim \calD}{f}$. 
In adversarial environments, $f_t$ is arbitrarily determined depending on the past history, and $f_t$ may depend on $x_t$.
The corrupted stochastic environment is an intermediate setting between stochastic and adversarial environments.
The motivation for considering this environment is that in real-world problems, a sequence of loss functions is neither stochastic nor (fully) adversarial.
In this environment, at each round $t \in [T]$, $\tilde{f}_t \sim \tilde{\calD}$ is obtained according to a certain distribution $\tilde{\calD}$,
where the expectation of $\tilde{f}_t$ is defined by $\ftilcirc = \E_{\tilde{f} \sim \tilde{\calD}}\brk{\tilde{f}}$.
Then, possibly depending on $\tilde{f}_t$ and the past history, loss function $f_t$ is determined by the environment so that 
$
  \E\brk[\big]{\sumT \nrm{f_t - \tilde{f}_t}_\infty} \leq C
  ,
$
where $\E\brk[\big]{\sumT \nrm{f_t - \tilde{f}_t}_\infty}$ is the corruption level.
In this paper, we consider these three environments.

\subsection{Exploiting the curvature of feasible sets}
We start by introducing the definition of strongly and uniformly convex sets.
We then define a new notion of convex bodies, \emph{sphere-enclosed set}, 
for which we can also achieve the fast rates of $O(\log T)$.
We finally discuss the existing lower bound when exploiting the curvature.

\subsubsection{Strong convexity and sphere-enclosedness}
One common way to describe the curvature of a convex body is with the following strong convexity.
\begin{definition}\label[definition]{def:strongly_convex}
  A convex body $K$ is \emph{$\lambda$-strongly convex w.r.t.~a norm $\nrm{\cdot}$}
  if for any $ x, y \in K $ and any $\theta \in [0,1]$, it holds that
  $
  \theta x + ( 1 - \theta ) y + \theta (1 - \theta) \frac{\lambda}{2} \nrm{x - y}^2 \cdot \ball_{\nrm{\cdot}} \subseteq K \per    
  $
\end{definition}
For example, $\ell_p$-balls for $p \in [1,2]$ are $(p-1)/2$-strongly convex w.r.t.~$\nrm{\cdot}_p$~\cite[Theorem 2]{hanner56uniform},
and another various examples of strongly convex sets can be found in~\cite[Section 5]{garber15faster}.
A more general notion of the curvature is by the following uniform convexity:
\begin{definition}\label[definition]{def:uniformly_convex}
  A convex body $K$ is \emph{$(\kappa, q)$-uniformly convex w.r.t.~a norm $\nrm{\cdot}$ (or $q$-uniformly convex)}
  if for any $ x, y \in K $ and any $\theta \in [0,1]$, it holds that
  $
    \theta x + (1 - \theta) y 
    +
    \theta (1 - \theta)
    \kappa
    \nrm{ x - y }^q
    \cdot \ball_{\nrm{\cdot}}
    \subseteq 
    K \per    
  $
\end{definition}
For example, $\ell_p$-balls for $p \geq 2$ are $(1/p, p)$-uniformly convex w.r.t.~$\nrm{\cdot}_p$~\cite[Theorem 2]{hanner56uniform},
and $p$-Schatten balls are $(1/p ,p)$-uniformly convex w.r.t.~the Schatten norm $\nrm{\cdot}_{S(p)}$ 
(See \cite{kerdreux21projection} and \Cref{app:unif_convex_relation} for the connection between the uniform convexity of a normed space and the uniform convexity of sets.)
Note that $(\kappa, 2)$-uniformly convex sets are $\kappa$-strongly convex.

\begin{wrapfigure}[9]{r}{0.45\columnwidth}
  \vspace{-10pt}
  \centering
  \includegraphics[width=0.4\columnwidth]{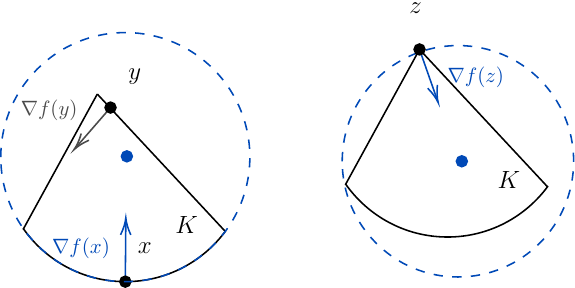}
  \caption{
    Examples of sphere-enclosed sets.
  }
  \label{fig:feasible_set1}
\end{wrapfigure}

In this paper, we introduce a new, different characterization of convex bodies.
\begin{definition}[sphere-enclosed sets]\label[definition]{def:ses}
  Let $K \subseteq \R^d$ be a convex body, $u \in \bd(K)$, and $f \colon K \to \R$.
  Then, $K$ is \emph{$(\rho, u, f)$-sphere-enclosed}
  (or simply sphere-enclosed facing $u$)
  if there exists a sphere of radius $\rho$ that has $u$ on it, encloses $K$, and the gradient of $f$ at point $u$ is directed towards the center of the sphere.
  That is,
  there exists a ball $\ball(c, \rho)$ with $c \in \R^d$ and $\rho > 0$ satisfying \one~$u \in \bd(\ball(c, \rho))$, \two~$K \subseteq \ball(c, \rho)$, and \three~there exists $r_0 > 0$ such that $u + r_0 \nabla f(u) = c$.\footnote{We will see that the third condition that the gradient of $f$ at point $u$ is directed towards the center of the sphere necessitates careful consideration when optimal decision $\xstar$ is on corners of feasible sets.}
\end{definition}
One might think that the sphere-enclosed condition is complicated but Condition \three~in \Cref{def:ses} is only for the case when $\xstar$ is at the corner of $K$.
\Cref{fig:feasible_set1} shows examples of sphere-enclosed sets.
The area enclosed by the solid black lines is the convex body $K$.
In the left figure, we can see that $K$ is sphere-enclosed facing $x$ (the red dotted line is the minimum sphere facing $x$), but $K$ is not sphere-enclosed facing $y$.
In the right figure, we can see that $K$ is sphere-enclosed facing $z$ (the blue dotted line is the minimum sphere facing $z$ for $K$).
Note that the notion of sphere-enclosedness is a local property defined for each point of the boundary of convex bodies, in contrast to the definition of strong convexity.
In the next section, we will see that we can achieve a logarithmic regret if $K$ is sphere-enclosed facing at optimal decision $\xstar$. 

\subsubsection{Existing lower bound}\label{subsubsec:existing_lower_bound}
Here, we discuss a lower bound when exploiting the curvature of feasible sets.
For $\lambda \in (0,1)$, let
$
  W_\lambda = \set{ (x, y) \in \R^2 \colon x^2 + {y^2}/{\lambda^2} \leq 1}
$
be an ellipsoid with principal curvature $\lambda$.
From \cite[Proposition 4]{huang17following}, ellipsoid $W_\lambda$ is $\lambda$-strongly convex w.r.t.~$\nrm{\cdot}_2$.
The following lower bound provided in~\cite[Theorem 9]{huang17following} is for this $W_\lambda$, which matches the upper bound in~\cite[Theorem 5]{huang17following}.
\begin{theorem}\label{thm:lower_bound_growth}
  Consider online linear optimization.
  Let $\lambda, L \in (0,1)$ and $K = W_\lambda$.
  Then, for any algorithm, there exists a sequence of linear loss functions 
  $f_1, \dots, f_T$ satisfying
  $f_t(\cdot) = \inpr{g_t, \cdot}$, 
  $g_1, \dots, g_T \in \{ (1, -L), (-1, -L) \}$,
  and the growth condition that
  $ \nrm{g_1 + \dots + g_t}_2 \geq t L $ for all $t \in [T]$ such that
  $
    \Reg_T 
    \geq 
    \frac{1}{84 \sqrt{2}} \frac{1}{\lambda L} \log T
    -
    \delta
  $
  for 
  $
  \delta =
  \frac{1}{\lambda L} \prx{
    \frac{2}{1 - \e^{\lambda^2 L^2}}
    +
    \frac{\pi^2}{108}
  }
  $.
\end{theorem}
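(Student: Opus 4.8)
The plan is to follow the standard randomized-adversary / minimax lower-bound recipe, but specialized to the ellipsoid $W_\lambda$ where the follow-the-leader-type geometry forces the comparator point to sit at a sharply curved boundary point. First I would fix the loss vectors to be of the form $g_t = (s_t, -L)$ with $s_t \in \{+1,-1\}$ chosen i.i.d.\ uniformly at random, so that the growth condition $\nrm{g_1 + \dots + g_t}_2 \ge \sqrt{(\text{anything})^2 + t^2 L^2} \ge tL$ holds deterministically regardless of the signs $s_t$; this is the reason the construction uses a biased second coordinate. The cumulative loss vector after $t$ rounds is $(S_t, -tL)$ where $S_t = \sum_{s\le t} s_s$ is a simple symmetric random walk, and the minimizer of $\inpr{(S_t,-tL),\cdot}$ over $W_\lambda$ is the boundary point of the ellipsoid in the direction $-(S_t,-tL) = (-S_t, tL)$. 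Against such a sequence, $\Reg_T = \E[\sum_t \inpr{g_t, x_t - \xstar}]$, and since the first coordinate of $g_t$ is a mean-zero random sign independent of $x_t$ (which depends only on $g_1,\dots,g_{t-1}$), the contribution of the first coordinate to $\E[\inpr{g_t,x_t}]$ vanishes, so the regret is essentially governed by how far $x_t$ can be from $\xstar$ in the direction where the ellipsoid is flat.

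Next I would pin down $\xstar = \argmin_{x\in W_\lambda}\E[\sum_t f_t(x)]$: since $\E[g_t] = (0,-L)$, we get $\xstar$ is the top of the ellipsoid, namely $\xstar = (0,\lambda)$, lying on the highly curved part of the boundary (curvature $\sim 1/\lambda$). The key quantitative step is a lower bound on the per-round regret $\E[f_t(x_t) - f_t(\xstar)] = \E[\inpr{g_t, x_t - \xstar}] = -L\,\E[(x_t)_2 - \lambda] = L\,\E[\lambda - (x_t)_2]$. Because $x_t \in W_\lambda$, we have $(x_t)_1^2 + (x_t)_2^2/\lambda^2 \le 1$, hence $\lambda - (x_t)_2 \ge \lambda(1 - \sqrt{1-(x_t)_1^2}) \ge \frac{\lambda}{2}(x_t)_1^2$. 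So it suffices to lower bound $\sum_t \E[(x_t)_1^2]$. Here I would argue that any algorithm, facing a sequence whose "good" comparator direction is the random walk direction $S_{t-1}/\|(S_{t-1}, (t-1)L)\|$, must on average place $(x_t)_1$ of order $\abs{S_{t-1}}/((t-1)L) \wedge 1$, or else pay linearly; more carefully, one plays $x_t$ to be near-optimal for the observed prefix, and the optimal prefix point has first coordinate $\Theta(S_{t-1}/(tL))$ as long as $\abs{S_{t-1}} \lesssim tL$. Since $\E[S_{t-1}^2] = t-1$, this gives $\E[(x_t)_1^2] \gtrsim \frac{t}{t^2 L^2} = \frac{1}{tL^2}$ (for the rounds where the walk hasn't escaped the linear regime, i.e.\ $t \lesssim 1/L^2$, where it is automatically in the regime; beyond that a separate, easier argument or a truncation handles it), and summing $\sum_{t} \frac{1}{tL^2} = \Theta(\frac{\log T}{L^2})$. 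Folding in the $\frac{\lambda L}{2}$ factor from the curvature bound yields $\Reg_T \gtrsim \frac{\lambda L}{2}\cdot\frac{\log T}{\lambda^2 L^2} \cdot (\text{const}) = \Omega(\frac{1}{\lambda L}\log T)$, and tracking the constants (the $\frac{1}{84\sqrt2}$ and the lower-order $\delta$ coming from the truncation / small-$t$ boundary terms, including the $\frac{2}{1-\e^{\lambda^2 L^2}}$ geometric-series remainder and a $\frac{\pi^2}{108}$ tail) gives the stated inequality.

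The main obstacle I anticipate is making rigorous the claim that any algorithm is forced to have $\E[(x_t)_1^2] = \Omega(1/(tL^2))$: an adaptive learner could try to hedge by keeping $(x_t)_1 = 0$, but then it pays on the prefix rounds where the sign was consistently $+1$ or $-1$, so one needs a clean exchange argument or a direct potential/information-theoretic argument (e.g.\ comparing against the two deterministic comparators $(\pm \text{something}, \cdot)$, or a Bayesian argument over the random walk path) showing that the anytime-optimal trade-off forces this variance. The cleanest route is probably to bound $\Reg_T$ from below by $\E[\sum_t \inpr{g_t,x_t}] - \E[\min_{x}\sum_t \inpr{g_t,x}]$ and lower-bound the gap between the running loss of any predictable sequence $(x_t)$ and the hindsight optimum of the random walk — a fluctuation/regret-of-FTL-is-unavoidable statement — then convert the resulting $\ell_2$-type deviation into the $\log T$ via $\E[\abs{S_t}] = \Theta(\sqrt t)$ and the curvature conversion $\lambda - (x_t)_2 \ge \frac{\lambda}{2}(x_t)_1^2$ above. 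The remaining work — verifying the growth condition for every prefix, checking $W_\lambda$ is $\lambda$-strongly convex (cited from \cite[Proposition 4]{huang17following}), and bookkeeping the constants into the precise $\delta$ — is routine.
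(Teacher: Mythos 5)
There is a genuine gap, and it is fatal to the construction rather than a technical loose end. You draw the signs $s_t$ i.i.d.\ uniformly from $\{\pm1\}$, i.e.\ a fixed, known Bernoulli parameter $p=1/2$. Under that law the point $\xstar=\argmin_{x\in K}\E\brk[\big]{\sumT f_t(x)}$ is the deterministic vertex of the ellipse on the $y$-axis, and the algorithm that plays $x_t=\xstar$ at every round has pseudo-regret exactly $0$: the first coordinate of $g_t$ is mean-zero and independent of the (constant) play, so nothing is paid. Even if one measures regret against the best point in hindsight, the constant player's expected regret is $\E\brk[\big]{\sqrt{S_T^2+\lambda^2T^2L^2}}-\lambda TL\le \E\brk{S_T^2}/(2\lambda TL)=1/(2\lambda L)$, a constant in $T$ --- this is the strong convexity of the support function of $W_\lambda$ working \emph{against} you. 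So no distribution with a known, fixed sign-bias can force $\Omega\prn*{\frac{1}{\lambda L}\log T}$; your key claim that any algorithm must have $\E\brk{(x_t)_1^2}=\Omega(1/(t\lambda^2L^2))$ is simply false (the learner controls $(x_t)_1$ and may set it to zero at no cost), and the ``obstacle'' you flag in your last paragraph is not a rigor issue to be patched but the point where the argument collapses.

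The $\log T$ in the theorem comes from forcing the learner to \emph{estimate an unknown drift}. This is why the construction described in the paper (taken from Huang et al.) first draws $P\sim\mathrm{Beta}(k,k)$ and then sets $h_t^L=(2X_t-1,-L)$ with $X_t\sim\mathrm{Bernoulli}(P)$ i.i.d.\ given $P$. Conditionally on $P$ the loss mean is $(2P-1,-L)$, the optimal point has first coordinate of order $(2P-1)/(\lambda L)$, any estimator of $P$ from $t-1$ samples errs by $\Theta(1/\sqrt{t})$, and on the $\lambda$-strongly-convex boundary a directional error of $\epsilon$ costs $\Theta(\epsilon^2/(\lambda L))$ per round; summing $\Theta(1/(t\lambda L))$ over $t$ gives $\Theta(\log T/(\lambda L))$, with the Beta prior making the Bayesian averaging and the constants ($\frac{1}{84\sqrt2}$, the $\frac{2}{1-\e^{\lambda^2L^2}}$ and $\frac{\pi^2}{108}$ terms in $\delta$) tractable. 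Your curvature inequality $\lambda-(x_t)_2\ge\frac{\lambda}{2}(x_t)_1^2$ and your verification of the growth condition are correct and are indeed the right local ingredients, but the randomization must be over the drift itself, not over a driftless walk. (Note also that the paper does not reprove this theorem; it cites \cite[Theorem~9]{huang17following} and only records the construction $h_t^L$, so any self-contained proof you write must reproduce that estimation-theoretic argument.)
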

In their proof, they use the following sequence of linear functions $f_t(\cdot) = \inpr{h_t^L, \cdot}$.
Let $P$ be a random variable following a Beta distribution, $\mathrm{Beta}(k, k)$, for some $k > 0$.
For this $P$, let $(X_t)_{t=1}^T$ be i.i.d.~random variables following a Bernoulli distribution with parameter $P$.
Then for $L \in (0,1)$, let $h_t^L = (2 X_t - 1, - L)$, which indeed satisfies the growth condition $ \nrm{h_1^L + \dots + h_t^L}_2 \geq t L $ for all $t \in [T]$. 
This construction of loss functions will be exploited to prove lower bounds in \Cref{sec:lower_bounds},
and we will provide a matching upper bound in \Cref{cor:matching_upper_bound}.

\subsection{Universal online learning}
Our algorithm is based on the results of universal online learning.
In the literature, the following regret upper bound is the crux for being adaptivity to the curvature of loss functions:
\begin{lemma}\label{lem:universal_ol}
  Consider online convex optimization.
  Then, there exists an (efficient) algorithm such that $\sumT \inpr{\nabla f_t(x_t), x_t - \xstar}$ is bounded from above by the order of
  \begin{equation}
      \min\set[\Bigg]{
        \!
        \Csc
        \sqrt{\sumT \nrm{x_t - \xstar}_2^2 \, \log T} + \Csc' \log T
        \com \,
        \Cec
        \sqrt{\sumT \prn*{ \inpr{\nabla f_t(x_t), x_t - \xstar} }^2 \, \log T} 
        + 
        \Cec' \log T
        \com \,
        G D \sqrt{ T \Cg }
      }
    \com 
    \label{eq:uol_upper_bound}
  \end{equation}
  where 
  $\Csc, \Csc', \Cec, \Cec', \Cg > 0$ are algorithm dependent variables provided in the following.\footnote{The subscripts $\mathsf{sc}$ and $\mathsf{ec}$ in $\Csc$ and $\Cec$ are the abbreviations of strongly-convex and exp-concave.}
\end{lemma}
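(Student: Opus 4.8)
The plan is to instantiate a two-level (``meta--expert'') universal online learning scheme in the spirit of~\citep{erven16metagrad,erven21metagrad,wang20adaptivity}, whose three regimes of adaptivity correspond to the three terms in~\eqref{eq:uol_upper_bound}. Since $f_t(x_t) - f_t(\xstar) \le \inpr{\nabla f_t(x_t), x_t - \xstar}$ by convexity, it is enough to control the linearized quantity $R := \sumT \inpr{\nabla f_t(x_t), x_t - \xstar}$. I would fix a geometric grid $\calE$ of $O(\log T)$ learning rates spanning $[\eta_{\min}, \eta_{\max}]$ with $\eta_{\min} = \Theta(1/(GD\sqrt{T}))$ and $\eta_{\max} = \Theta(1/(GD))$, and run a pool of experts: \one~for each $\eta \in \calE$, an exp-concave expert running online Newton step~\citep{hazan07logarithmic} on the surrogate loss $\ell_t^{\eta}(u) = -\eta\inpr{\nabla f_t(x_t), x_t - u} + \eta^2 \inpr{\nabla f_t(x_t), x_t - u}^2$; \two~for each $\eta \in \calE$, a strongly-convex expert running online gradient descent with step size $\Theta(1/(\eta^2 G^2 t))$ on the surrogate loss $\tilde\ell_t^{\eta}(u) = -\eta\inpr{\nabla f_t(x_t), x_t - u} + \eta^2 G^2\nrm{x_t - u}_2^2$. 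The master maintains unnormalized weights $w_1^j = \pi_j$, $w_{t+1}^j = w_t^j\, e^{-\ell_t^j(x_t^j)}$ for a uniform prior $\pi$ over the $O(\log T)$ experts, and at each round predicts the learning-rate-weighted combination $x_t = \big(\sum_j w_t^j \eta_j x_t^j\big)/\big(\sum_j w_t^j \eta_j\big)$ of the experts' iterates.

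For the analysis I would combine a master bound with per-expert bounds. For the master bound, the elementary inequality $e^{-(b + b^2)} \le 1 - b$ (valid for $\abs{b}$ small enough) applied with $b = -\eta\inpr{\nabla f_t(x_t), x_t - u}$, which satisfies $\abs{b} \le \eta GD \le 1$ for $\eta \le \eta_{\max}$ and $u \in K$, gives $e^{-\ell_t^\eta(u)} \le 1 + \eta\inpr{\nabla f_t(x_t), x_t - u}$; crucially the same inequality holds for the surrogate in family~\two~because $\eta^2 G^2\nrm{x_t - u}_2^2 \ge \eta^2\inpr{\nabla f_t(x_t), x_t - u}^2$, so $\tilde\ell_t^\eta(u) \ge \ell_t^\eta(u)$ pointwise. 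These pointwise inequalities, evaluated at each expert's own iterate, together with the learning-rate-weighted master prediction, make the potential $t \mapsto \sum_j w_t^j$ non-increasing (the MetaGrad master lemma); hence $\sum_j w_{T+1}^j \le 1$, so $\sumT \ell_t^j(x_t^j) \ge -\ln(1/\pi_j) = -O(\log T)$ for every expert $j$ (and likewise with $\tilde\ell$). For the per-expert bounds, online Newton step gives $\sumT \ell_t^\eta(x_t^\eta) \le \sumT \ell_t^\eta(\xstar) + O(d\log T)$ (the surrogate is $\Theta(1)$-exp-concave along $\nabla f_t(x_t)$ and uniformly bounded on $K$), and online gradient descent gives $\sumT \tilde\ell_t^\eta(x_t^\eta) \le \sumT \tilde\ell_t^\eta(\xstar) + O(\log T)$ (the surrogate is $2\eta^2 G^2$-strongly convex with gradient norm $O(\eta G)$). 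Chaining these with the identities $\sumT \ell_t^\eta(\xstar) = -\eta R + \eta^2 B$, where $B := \sumT \inpr{\nabla f_t(x_t), x_t - \xstar}^2$ (resp.\ $\sumT\tilde\ell_t^\eta(\xstar) = -\eta R + \eta^2 \tilde B$ with $\tilde B := G^2\sumT\nrm{x_t - \xstar}_2^2$) yields, for every $\eta \in \calE$,
\begin{equation*}
  R \;\le\; \eta B + \frac{O(d\log T)}{\eta}
  \qquad\text{and}\qquad
  R \;\le\; \eta\tilde B + \frac{O(\log T)}{\eta}.
\end{equation*}
Tuning $\eta$ over the grid (a constant-factor loss, since $\calE$ is geometric) converts the first inequality into $R = O\big(\sqrt{B\, d\log T} + d\log T\big)$ and the second into $R = O\big(\sqrt{\tilde B\,\log T} + \log T\big) = O\big(G\sqrt{\sumT\nrm{x_t-\xstar}_2^2\,\log T}\big) + O(\log T)$, which are the first two terms of~\eqref{eq:uol_upper_bound} for appropriate algorithm-dependent $\Csc,\Csc',\Cec,\Cec'$ (the exp-concave ones scaling with $\sqrt d$). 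Finally, since $B \le G^2 D^2 T$ always, plugging $\eta = \eta_{\min} = \Theta(1/(GD\sqrt{T}))$ into the first inequality gives $R = O(GD\sqrt{T\,\Cg})$ for a $\Cg$ polylogarithmic in $T$ (and polynomial in $d$) --- the third term. Taking the minimum over the three cases recovers~\eqref{eq:uol_upper_bound}.

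The step I expect to be the main obstacle is showing that the single master potential $t \mapsto \sum_j w_t^j$ is non-increasing over the \emph{heterogeneous} pool: the precise shape of both surrogates --- the $\eta^2(\cdot)^2$ and the $\eta^2 G^2\nrm{\cdot}_2^2$ curvature corrections --- must be matched to the learning-rate-weighted prediction rule so that the one-step inequality $\sum_j w_t^j e^{-\ell_t^j(x_t^j)} \le \sum_j w_t^j$ holds, which is the MetaGrad argument and relies on the curvature domination $\tilde\ell_t^\eta \ge \ell_t^\eta$ to absorb family~\two~at no extra cost. The remaining points are routine: verifying that $\eta_{\max} = \Theta(1/(GD))$ keeps every surrogate inside the valid range of $e^{-(b+b^2)} \le 1 - b$; running the online Newton step and online gradient descent regret bounds over $K$ with the appropriate (matrix-weighted, resp.\ Euclidean) projections; a limiting argument when $\xstar \in \bd(K)$ is only approached by the iterates; and checking the ``efficient'' claim, which holds because the pool contains only $O(\log T)$ experts, each updatable in $\mathrm{poly}(d)$ time per round.
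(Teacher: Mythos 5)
Your proposal is correct in substance, but it takes a different route from the paper only in the sense that the paper does not prove \Cref{lem:universal_ol} at all: it imports the bound as a black box, citing MetaGrad~\citep{erven16metagrad,erven21metagrad} for one set of constants and Maler~\citep{wang20adaptivity} for another. What you have written is a faithful reconstruction of the internals of those cited algorithms --- the heterogeneous pool of exp-concave (ONS) and strongly-convex (OGD) surrogate experts, the learning-rate-weighted master, the potential argument via $e^{a - a^2} \le 1 + a$, and the pointwise domination $\tilde\ell_t^\eta \ge \ell_t^\eta$ by Cauchy--Schwarz that lets the two families share one master --- and you correctly flag the master-potential step as the crux. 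Two loose ends are worth noting. First, your derivation of the third term by plugging $\eta_{\min}$ into the ONS-chained inequality $R \le \eta B + O(d\log T)/\eta$ yields $R = O(GD\sqrt{T}\,d\log T)$, i.e.\ only $\Cg = \mathrm{poly}(d)\,\mathrm{polylog}(T)$; the cited works instead obtain the worst-case term directly from the master's second-order expert bound (the master is itself a Squint/iProd-type algorithm whose regret to any expert is $O(\sqrt{V\log\log T})$ with $V \le (GD)^2T$, bypassing the slave regret entirely), which is how MetaGrad gets $\Cg = \log\log T$ and Maler gets $\Cg = 1$. Your weaker constant still satisfies the lemma as stated and suffices for the paper's downstream $\tilde O(\sqrt{T})$ adversarial guarantee, but it does not reproduce the specific values the paper quotes after the lemma. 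Second, with a uniform prior over $O(\log T)$ experts the master overhead is $\ln(1/\pi_j) = O(\log\log T)$, not $O(\log T)$ as you wrote, and the inequality $e^{a-a^2}\le 1+a$ forces $\eta_{\max}$ to be a small constant multiple of $1/(GD)$ (e.g.\ $1/(5GD)$ in MetaGrad) rather than merely $\eta GD \le 1$; both are harmless but should be tightened if you intend the sketch to be a full proof rather than a pointer to the literature.
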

For example, upper bound~\eqref{eq:uol_upper_bound} can be achieved by
  the MetaGrad algorithm with 
  $\Csc = G \sqrt{d}$, $\Csc' = d$,
  $\Cec = \sqrt{d}$, $\Cec' = d$,
  and $\Cg = \log \log T$~\citep{erven16metagrad,erven21metagrad}
  and 
  the Maler algorithm with 
  $\Csc = G $, $\Csc' = G D$,
  $\Cec = \sqrt{d}$, $\Cec' = G D + d$, 
  and $\Cg = 1$~\citep[Theorem 1]{wang20adaptivity}.
We will see that our regret bounds depend on $\Csc, \Csc', \Cec, \Cec', \Cg > 0$, and one can use any algorithm with bound~\eqref{eq:uol_upper_bound}.

\section{Regret lower bounds}\label{sec:lower_bounds}
In this section, we construct lower bounds that align with the assumptions of our regret bounds.
Considering a sequence of loss functions to construct the lower bound in~\Cref{thm:lower_bound_growth}, we can immediately obtain the following lower bound.
\begin{theorem}\label[theorem]{thm:lower_bound}
  Consider online linear optimization.
  Let $\lambda, L \in (0,1)$ and $K = W_\lambda$.
  Then, for any algorithm, there exists a stochastic sequence of loss functions 
  $f_1, \dots, f_T$ satisfying
  $f_t(\cdot) = \inpr{g_t, \cdot}$, 
  $g_1, \dots, g_T \in \{ (1, -L), (-1, -L) \}$, 
  and
  $\nrm{\nabla \fcirc(\xstar)}_2 = L$ such that 
  $
    \Reg_T 
    \geq 
    \frac{1}{84 \sqrt{2}} \frac{1}{\lambda \nrm{ \nabla \fcirc(\xstar) }}_2 \log T
    -
    \delta
    ,
  $
  where $\delta$ is defined in \Cref{thm:lower_bound_growth}.
\end{theorem}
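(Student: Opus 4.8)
The plan is to reduce \Cref{thm:lower_bound} to \Cref{thm:lower_bound_growth} by observing that the hard instance constructed there is already stochastic, and then to verify that the relevant gradient norm equals $L$. Recall that the lower-bound construction in \Cref{thm:lower_bound_growth} uses loss vectors $h_t^L = (2X_t - 1, -L)$, where $X_1,\dots,X_T$ are i.i.d.\ $\mathrm{Bernoulli}(P)$ conditionally on a single draw $P \sim \mathrm{Beta}(k,k)$. By symmetry of $\mathrm{Beta}(k,k)$ about $1/2$ we have $\E[P] = 1/2$, hence the \emph{unconditional} law of each $h_t^L$ is the same i.i.d.\ distribution: $g_t = (1,-L)$ with probability $1/2$ and $g_t = (-1,-L)$ with probability $1/2$. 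Thus, taking this unconditional product distribution as $\calD$ and setting $f_t(\cdot) = \inpr{g_t,\cdot}$ gives a genuine i.i.d.\ stochastic sequence satisfying $g_t \in \{(1,-L),(-1,-L)\}$ and, by the argument in \Cref{thm:lower_bound_growth}, the growth condition $\nrm{g_1+\cdots+g_t}_2 \geq tL$ for all $t$.

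Next I would compute $\fcirc$ and its gradient. Since $\fcirc(\cdot) = \E_{g\sim\calD}[\inpr{g,\cdot}] = \inpr{\E[g],\cdot}$ and $\E[g] = (0,-L)$, we get $\nabla\fcirc \equiv (0,-L)$, so $\nrm{\nabla\fcirc}_2 = L$ everywhere, in particular at $\xstar$. It remains to identify $\xstar = \argmin_{x\in W_\lambda}\inpr{(0,-L),x}$, which is the unique maximizer of the second coordinate over the ellipsoid $W_\lambda = \{(x,y): x^2 + y^2/\lambda^2 \le 1\}$, namely $\xstar = (0,\lambda) \in \bd(W_\lambda)$; this confirms $\xstar$ lies on the boundary and that $\nrm{\nabla\fcirc(\xstar)}_2 = L$, which is exactly the quantity appearing in the claimed bound.

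Finally, the regret bound itself is inherited: the proof of \Cref{thm:lower_bound_growth} shows that for any algorithm, \emph{on the random sequence $(h_t^L)_t$}, $\Reg_T \ge \frac{1}{84\sqrt2}\frac{1}{\lambda L}\log T - \delta$, where the expectation defining $\Reg_T$ is over both the randomness of $(h_t^L)_t$ (equivalently, our $\calD$) and any internal randomness of the algorithm. Since our $\calD$ realizes precisely this distribution and $L = \nrm{\nabla\fcirc(\xstar)}_2$, substituting gives $\Reg_T \ge \frac{1}{84\sqrt2}\frac{1}{\lambda\nrm{\nabla\fcirc(\xstar)}_2}\log T - \delta$, as desired.

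The main obstacle — really the only substantive point — is the interchange between the hierarchical description of the instance (first draw $P$, then draw the $X_t$'s) and the flat i.i.d.\ description required by our stochastic setting: one must check that marginalizing out $P$ genuinely yields an i.i.d.\ product distribution (it does, because conditionally on $P$ the $X_t$ are i.i.d., and after integrating out $P$ the $X_t$ become exchangeable but \emph{not} independent in general — however, the lower-bound argument of \citet{huang17following} only uses the marginal $\mathrm{Bernoulli}(1/2)$ law of each $g_t$ and the almost-sure growth condition, both of which survive the marginalization, so we may simply declare $\calD$ to be the $\mathrm{Bernoulli}(1/2)^{\otimes}$ product law on the sign coordinate, and the growth condition, being a statement about every realization of the sign sequence that the construction guarantees, is unaffected). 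Verifying that the growth condition and the regret lower bound continue to hold verbatim under this reinterpretation, and that nothing in \Cref{thm:lower_bound_growth}'s proof secretly relied on the correlation induced by $P$, is the one place requiring care.
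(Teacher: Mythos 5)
Your core reduction is exactly the paper's proof: take the sequence $(h_t^L)_t$ from the construction following \Cref{thm:lower_bound_growth}, observe that $\E[h_t^L]=(0,-L)$ so that $\nrm{\nabla \fcirc(\xstar)}_2 = L$, and inherit the regret bound. The paper's argument is precisely these two lines and attempts nothing more.

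The problem is your final paragraph, where you propose to flatten the Beta--Bernoulli mixture into the i.i.d.\ $\mathrm{Bernoulli}(1/2)$ product law on the sign coordinate and assert that the lower-bound argument of \citet{huang17following} ``only uses the marginal $\mathrm{Bernoulli}(1/2)$ law of each $g_t$ and the almost-sure growth condition.'' That is false, and the replacement destroys the bound. Under the product law the instance is a single fixed distribution with comparator $\xstar = \argmin_{x\in W_\lambda}\inpr{(0,-L),x} = (0,\lambda)$, so the algorithm that plays $x_t=(0,\lambda)$ at every round has pseudo-regret exactly $0$; even measured against the empirical minimizer, its expected regret is $\E\brk[\big]{\sqrt{Z_T^2+\lambda^2 L^2T^2}}-\lambda L T = O\prn{1/(\lambda L)}$, where $Z_T$ is a sum of $T$ i.i.d.\ Rademacher variables --- no $\log T$ appears. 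The $\Omega\prn{\frac{1}{\lambda L}\log T}$ bound is a Bayesian argument whose entire force comes from the correlation induced by the latent $P\sim\mathrm{Beta}(k,k)$: no algorithm can resolve $\sign(P-\tfrac12)$ quickly, and summing the resulting per-round losses of order $1/(\lambda L t)$ produces the logarithm. So the exchangeable mixture must be kept as the ``stochastic sequence,'' which is what the paper does (somewhat loosely, since the mixture is exchangeable but not i.i.d.\ in the sense of the paper's own definition of a stochastic environment); the marginalization you describe is not an available simplification, and the one step you flagged as ``requiring care'' is in fact the step that fails.
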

\begin{proof}
  Consider the sequence of loss vectors $h_1^L, \dots, h_T^L$ after \Cref{thm:lower_bound_growth} and let $f_t(\cdot) = \inpr{h_t^L, \cdot}$ for all $t \in [T]$.
  For this sequence of $(f_t)_{t=1}^T$, it holds that 
  $
    \nrm{ \nabla \fcirc(x_t) }_2
    =
    \nrm{ \Expect{h_t^L} }_2
    =
    \nrm{ \prn*{0, -L} }_2
    =
    L \neq 0
    ,
  $
  which completes the proof.
\end{proof}

With this lower bound, we have the following lower bound for corrupted stochastic environments. 
\begin{theorem}\label[theorem]{thm:lower_bound_corrupt}
  Consider online linear optimization.
  Let $\lambda, L \in (0,1)$ and $K = W_\lambda$.
  Suppose that $T \geq C/(\lambda L)^2$ and $C \geq 1/(\lambda L)$.
  Then, for any algorithm, 
  there exists a corrupted stochastic environment with corruption level at most $C \geq 0$ satisfying
  $\nrm{\nabla \fcirc(\xstar)}_2 = L$
  such that 
  \begin{equation}
    \Reg_T
    \geq 
        \frac{1}{168 \sqrt{2}} \prn*{
        \frac{1}{\lambda \nrm{\nabla \fcirc(x_t)}_2} 
        + 
        \sqrt{\frac{C}{\lambda \nrm{\nabla \fcirc(x_t)}_2}  }
        }
    \sqrt{\log \prn*{\frac{C}{\lambda \nrm{\nabla \fcirc(x_t)}_2}} }
    -
    \delta
    \com 
    \n
  \end{equation}
  where $\delta$ is defined in \Cref{thm:lower_bound_growth}.
\end{theorem}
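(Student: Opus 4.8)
The plan is to graft a corruption phase onto the stochastic construction behind \Cref{thm:lower_bound}, reusing the ellipsoid $K=W_\lambda$ — which is $\lambda$-strongly convex w.r.t.\ $\nrm{\cdot}_2$ by \cite[Proposition~4]{huang17following} — and loss vectors drawn from $\{(1,-L),(-1,-L)\}$ built from a near-fair Bernoulli sequence. Since the hypothesis $C\ge 1/(\lambda L)$ gives $\sqrt{C/(\lambda L)}\ge 1/(\lambda L)$, the additive term $\tfrac{1}{\lambda\nrm{\nabla\fcirc(\xstar)}_2}\sqrt{\log(\cdot)}$ is dominated by the other one; it is in any case recovered for free by invoking \Cref{thm:lower_bound} on a sub-horizon of length $\Theta(C/(\lambda L))$ (a purely stochastic, hence $0$-corrupted, environment) and padding the remaining rounds with the neutral loss $\inpr{(0,-L),\cdot}$, which leaves $\fcirc$ and $\xstar$ unchanged and adds only non-negative per-round regret. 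So the substantive task is to produce the $\sqrt{\tfrac{C}{\lambda L}\log\tfrac{C}{\lambda L}}$ term.

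For that I take the clean law $\tilde\calD$ to be the same Beta–Bernoulli ``hard'' law as in \Cref{thm:lower_bound}, but \emph{calibrated so that its hard time-scale is $\Theta(C/(\lambda L))$ rather than $T$} (so the effective gap is the scale at which the curvature of $W_\lambda$ both makes the instance hard and keeps $\nrm{\nabla\ftilcirc(\xtilstar)}_2=L$), and I prepend a corruption ``blackout'' of $m=\Theta(C/(\lambda L))$ rounds in which the environment flips sign bits so that the feedback on that block carries no information about which side of the top of $W_\lambda$ the optimum lies on. One flip costs $\nrm{f_t-\tilde f_t}_\infty = 2\sup_{x\in W_\lambda}\abs{x_1}=2$, and with an effective gap of order $\lambda L$ the expected number of flips per round is $\Theta(\lambda L)$, so the expected total corruption is $\Theta(\lambda L\cdot m)=\Theta(C)$ — this is precisely why a budget $C$ buys a confusion block of length $\Theta(C/(\lambda L))$, and why the extra $O(1/\gamma^2)$ ``re-learning'' period after the block is, under $C\ge1/(\lambda L)$, absorbed into it.

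The steps, in order: (i) a change-of-measure / likelihood-ratio argument — since the blackout feedback is $\sigma$-independent, the learner's decision is $\sigma$-independent on the block, and on the block plus its re-learning tail the posterior on $\sigma$ stays essentially flat; this is carried out exactly as in the proof of \Cref{thm:lower_bound}, with the same comparator bookkeeping. (ii) Converting uncertainty into regret via the strong-convexity-of-sets inequality $\inpr{g,x-x_\star}\ge\tfrac{\lambda\nrm{g}_2}{2}\nrm{x-x_\star}_2^2$, valid for $x_\star=\argmin_{x\in W_\lambda}\inpr{g,x}$ and all $x\in W_\lambda$, which lower-bounds the round-$t$ regret by $\Theta(\lambda L)$ times the squared horizontal distance between the learner's uninformed coordinate and the slightly-off-top optimum, i.e.\ $\Theta((\mathrm{gap})^2/(\lambda L))$ per confused round. (iii) Summing over the $\Theta(C/(\lambda L))$ confused rounds, adding the tail terms $\sum_j\Theta(1/(\lambda L j))$ as in \Cref{thm:lower_bound}, and optimising over the Beta parameter / effective gap; aggregating the $\Theta(\log(C/(\lambda L)))$ dyadic scales contributed by the prior is what yields the order $\sqrt{\tfrac{C}{\lambda L}\log\tfrac{C}{\lambda L}}$. (iv) Passing from the $\sigma$-average (or the prior average) to $\sup$, noting $\nrm{\nabla\ftilcirc(\xtilstar)}_2=L$ as in the proof of \Cref{thm:lower_bound}, and folding residual terms into $\delta$ exactly as in \Cref{thm:lower_bound_growth}.

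I expect the main obstacle to be the joint calibration in step (iii): the corruption budget, the effective gap, and the length of the confused window all interact, and extracting $\sqrt{C/(\lambda L)\log(C/(\lambda L))}$ — rather than a cruder $C/(\lambda L)$ or $1/(\lambda L)$ — out of this optimisation is exactly where the hypotheses $T\ge C/(\lambda L)^2$ and $C\ge 1/(\lambda L)$ enter: they ensure the blackout has length $\Theta(C/(\lambda L))\le T$ and dominates the $O(1/\gamma^2)$ re-learning period, so that neither the horizon nor the re-learning term degrades the bound. A secondary subtlety, handled exactly as in \Cref{thm:lower_bound}, is that $\xstar$ is the optimum of the \emph{actual} (corrupted) loss sequence, so one must argue that the blackout genuinely costs the learner rather than merely relocating the comparator toward the top of $W_\lambda$; randomising the side of the optimum and comparing to the side-dependent optimum is what makes this work.
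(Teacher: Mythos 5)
There is a genuine gap, and it sits exactly at the point you defer as a ``secondary subtlety.'' Your central mechanism --- spending the budget to flip the sign bits of the first coordinate so that a block of length $m=\Theta(C/(\lambda L))$ carries no information about the side of the optimum --- cannot work in this setting, because the feedback is \emph{full information on the corrupted losses} ($\nabla f_t(x_t)=g_t$) and the regret is measured against the optimum of those same corrupted losses. Any corruption that erases the bias from what the learner observes also erases it from the comparator: on the blackout block the corrupted sequence itself has (near-)zero first-coordinate drift, so an uninformed learner playing the apex of $W_\lambda$ optimal for the pure drift direction $(0,-L)$ incurs no regret there --- indeed it does at least as well as the fixed comparator $\xstar$ on those rounds, so the block contributes nonpositive regret. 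The only residual cost is that after the block the learner faces a fresh, clean stochastic instance, which yields only the $\frac{1}{\lambda L}\log T$ term you already had and not the $\sqrt{C/(\lambda L)}$ term. ``Randomising the side of the optimum and comparing to the side-dependent optimum'' does not repair this, because the side-dependent optimum is a function of the corrupted losses, which the learner sees in full; this is the essential difference from bandit-style corruption lower bounds, where the blackout device originates. (Your corruption accounting is also shaky --- the bias of the Beta--Bernoulli coin is not $\Theta(\lambda L)$ --- but that is moot given the structural obstruction.)

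The paper's proof uses the budget in a completely different and much simpler way: it does not hide information, it \emph{weakens the drift}, and hence the curvature advantage. Define $\hat{L}$ by $\lambda\hat{L}=\sqrt{\lambda L/C}$ (so $\hat{L}\le L$ and $\hat{L}\in(0,1)$ by $C\ge 1/(\lambda L)$) and $\tau=\ceil*{C/(\lambda L)}\le T$. On rounds $t\le\tau$ the clean losses are $\tilde{f}_t=\inpr{h_t^{L},\cdot}$ while the corrupted losses are $f_t=\inpr{h_t^{\hat{L}},\cdot}$, i.e., only the deterministic second coordinate is changed from $-L$ to $-\hat{L}$; afterwards there is no corruption. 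The cost is $\tau\,\lambda\,\abs{L-\hat{L}}\le C$, and the corrupted sequence on $\set{1,\dots,\tau}$ is itself a bona fide stochastic hard instance with drift $\hat{L}$, so \Cref{thm:lower_bound} applies verbatim with parameters $(\hat{L},\tau)$ and yields $\Reg_T\ge\frac{1}{84\sqrt{2}}\frac{1}{\lambda\hat{L}}\log\tau-\delta=\frac{1}{84\sqrt{2}}\sqrt{C/(\lambda L)}\,\log\prn*{C/(\lambda L)}-\delta$; averaging this with its consequence under $\sqrt{C/(\lambda L)}\ge 1/(\lambda L)$ gives the stated bound. If you want to keep your overall scaffolding, replacing the blackout by this drift-shrinking device is the necessary fix.
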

The assumption that $T \geq C / (\lambda L)^2$ makes some sense
since the construction of this lower bound relies on \Cref{thm:lower_bound}, and if the assumption does not hold then the lower bound becomes vacuous.

\begin{proof}
We will construct $(f_t)_{t=1}^T$ in a corrupted stochastic environment, where $(f_t)_{t}$ are generated so that $\tilde{f}_t(\cdot) = \inpr{\tilde{g}_t, \cdot}$ with $\tilde{g}_1, \dots, \tilde{g}_T$ following a distribution $\calD$ and $f_t$ is a corrupted function of $\tilde{f}_t$.  

We first note that we have $T \geq C / (\lambda L) \geq 1 / (\lambda L)^2$.
Define $\hat{L} > 0$ such that $ \lambda \hat{L} = \sqrt{ \lambda L / C }$.
Note that since $C \geq 1 / (\lambda L)$, we have $\lambda \hat{L} \leq \lambda L$, implying that $\hat{L} \in (0,1)$.
We also define $\tau \coloneqq \ceil{ {1}/{(\lambda \hat{L})^2} } = \ceil{ { C }/\prn{ \lambda L } } \leq T$, which follows from $\lambda \hat{L} = \sqrt{\lambda L / C}$ and $T \geq C / (\lambda L)$.

With these definitions, we then consider the following corrupted stochastic environments: 
\begin{itemize}[topsep=-2pt, itemsep=-3pt, partopsep=0pt, leftmargin=15pt]
  \item 
  For $t \in \set{1,\dots,\tau}$,
  define $\tilde{f}_t$ by $\tilde{f}_t(\cdot) = \inpr{\tilde{g}_t, \cdot}$ for $\tilde{g}_t = h_t^L$, where $h_t^L$ is defined after \Cref{thm:lower_bound},
  and define loss function $f_t$ by $f_t(\cdot) = \inpr{g_t, \cdot}$ with $g_t = h_t^{\hat{L}}$.
  \item
  For $t \in \set{\tau + 1, \dots, T}$, let $\tilde{f}_t(\cdot) = f_t(\cdot) = \inpr{g_t, \cdot}$ with $g_t = h_t^L$, where there is no corruption.
\end{itemize}
In fact, the corruption level of this environment is bounded by $C$ since
$
  \sum_{t=1}^T \E\brk[\big]{ \nrm{f_t - \tilde{f}_t}_\infty }
  =
  \sum_{t=1}^\tau \E\brk[\big]{ \sup_{x \in K} \abs{ \inpr{g_t - \tilde{g}_t, x} } }
  \leq
  \tau \abs{L - \hat{L}} \lambda
  \leq 
  \ceil{{C}/\prn{\lambda L}} \cdot \abs{L - \hat{L}} \lambda
  \leq 
  C
  ,
$
where in the first inequality we used the fact that the first elements of $g_t$ and $\tilde{g}_t$ are the same and that $K = W_\lambda$
and in the second inequality we used $L \geq \hat{L} > 0$.
This implies that the sequence of $(f_t)_{t=1}^T$ is a corrupted stochastic environment with the corruption level at most $C$.

Hence, from \Cref{thm:lower_bound} with $\hat{L} \in (0,1)$, $\lambda \hat{L} = \sqrt{\lambda L / C}$, and the definition of $\tau$,
the regret is bounded from below as
$
  \Reg_T
  \geq 
  \frac{1}{84 \sqrt{2}}
  \frac{1}{\lambda \hat{L}} \log \tau - \delta
  \geq
  \frac{1}{84 \sqrt{2}}
  \sqrt{ \frac{C}{\lambda L} } \log \prn*{\frac{C}{\lambda L}} - \delta
  \geq 
  \frac{1}{84 \sqrt{2}}
  \frac{1}{\lambda L} \log \prn*{\frac{C}{\lambda L}} - \delta
  .
$
Taking the average of the last two inequalities completes the proof.
\end{proof}
Note that our lower bounds are not for general sphere-enclosed feasible sets, and establishing a new lower bound is important future work.

\section{Regret upper bounds}\label{sec:upper_bounds}
In this section, we provide regret upper bounds that nearly match the lower bounds in \Cref{sec:lower_bounds}, by the universal online learning framework, whose regret is bounded as~\eqref{eq:uol_upper_bound}.
Note that this section works with convex loss functions.

\begin{wrapfigure}[12]{r}{0.35\columnwidth}
  \centering
  \includegraphics[width=0.33\columnwidth]{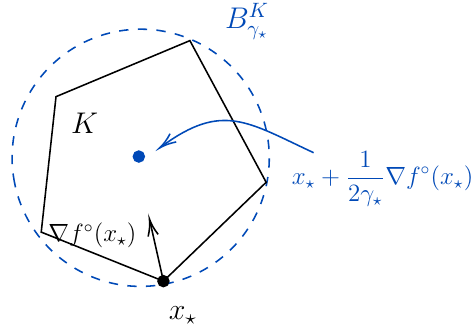}
  \caption{
    The region enclosed by the black solid line is a feasible set $K$ and the red dotted line $B_{\gamst}^K$ is the smallest sphere enclosing $K$ and facing $\xstar$.
  }
  \label{fig:sphere}
\end{wrapfigure}

\subsection{Regret bounds in stochastic environments}
We provide logarithmic regret for stochastic environments.
Define the ball $B_\gamma^K \subseteq \R^d$ for $\gamma > 0$ by
\begin{equation}
  B_\gamma^K
  =
  \ball\prn*{
    \xstar + \frac{1}{2 \gamma} \nabla \fcirc(\xstar) \com \,
    \frac{1}{2 \gamma} \nrm*{\nabla \fcirc(\xstar)}_2
  }
  \per 
  \n
\end{equation}
By the definition, we have $\xstar \in \bd(B_\gamma^K)$.
See \Cref{fig:sphere}.

\begin{remark}
The ball $B_\gamma^K$ is determined in the following manner.
We will see in the following proof that 
the inequality 
$\inpr*{\nabla \fcirc(\xstar), x - \xstar} \geq \gamma  \nrm{x - \xstar}_2^2$
that holds for some $\gamma > 0$ and any action $x$ plays a key role in proving a logarithmic regret.
This inequality is equivalent to
$
  \nrm{x - \prn[\big]{ \xstar + \frac{1}{2 \gamma} \nabla \fcirc(\xstar) } }_2
  \leq 
  \frac{1}{2 \gamma} \nrm{ \nabla \fcirc(\xstar)}_2
$
and we define $B_\gamma^K$ as the set of all $x \in \R^d$ satisfying this inequality.
\end{remark}
Using this $B_{\gamma}^K$, we let 
$
  \gamst 
  = 
  \sup
  \set{\gamma \geq 0 \colon K \subseteq B_\gamma^K}
$.
Then, we can prove the following theorem.
\begin{theorem}\label[theorem]{thm:main}
  Consider online convex optimization in stochastic environments,
  where the optimal decision is $\xstar = \argmin_{x \in K} \fcirc(x)$.
  Suppose that $K$ is $(\rho, \xstar, \fcirc)$-sphere-enclosed
  and that $\nabla \fcirc(\xstar) \neq 0$.
  Then,
  any algorithm with the bound~\eqref{eq:uol_upper_bound} achieves
  \begin{equation}
    \Reg_T 
    = 
    O \prn*{ \frac{\Csc^2}{\gamst} \log T + \Csc' \log T}
    =
    O \prn*{ \frac{\Csc^2 \, \rhost}{\nrm{\nabla \fcirc(\xstar)}}_2 \log T + \Csc' \log T}
    \per 
    \n
  \end{equation}
\end{theorem}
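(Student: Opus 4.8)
The plan is to sandwich the regret between the universal online learning guarantee \eqref{eq:uol_upper_bound} from above and a quadratic‑growth inequality for $\fcirc$ at $\xstar$ from below, and then solve the resulting self‑bounding recursion; the sphere‑enclosed hypothesis enters only to guarantee that the quadratic growth constant $\gamst$ is strictly positive and of the claimed size. First I would reduce to a linearized regret: since $x_t$ is independent of $f_t$ in a stochastic environment, $\E[f_t(x_t)-f_t(\xstar)] = \E[\fcirc(x_t)-\fcirc(\xstar)]$, so $\Reg_T = \sumT\E[\fcirc(x_t)-\fcirc(\xstar)]$. Convexity of each $f_t$ gives $f_t(x_t)-f_t(\xstar)\le\inpr{\nabla f_t(x_t),x_t-\xstar}$, hence $\Reg_T\le\E\brk[\big]{\sumT\inpr{\nabla f_t(x_t),x_t-\xstar}}$. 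Plugging in the first branch of \eqref{eq:uol_upper_bound} and using concavity of $\sqrt{\cdot}$ (Jensen) yields $\Reg_T = O\bigl(\Csc\sqrt{\E[\sumT\nrm{x_t-\xstar}_2^2]\,\log T}+\Csc'\log T\bigr)$.

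Next I would lower bound the regret by the same sum of squared distances. By convexity of $\fcirc$ evaluated at $\xstar$, $\fcirc(x_t)-\fcirc(\xstar)\ge\inpr{\nabla\fcirc(\xstar),x_t-\xstar}$. The key observation is that, as noted in the remark following the definition of $B_\gamma^K$, the membership $x\in B_\gamma^K$ is exactly the inequality $\inpr{\nabla\fcirc(\xstar),x-\xstar}\ge\gamma\nrm{x-\xstar}_2^2$; therefore $K\subseteq B_{\gamst}^K$ (the defining supremum is attained, since the inequality passes to the limit pointwise in $\gamma$) gives $\inpr{\nabla\fcirc(\xstar),x_t-\xstar}\ge\gamst\nrm{x_t-\xstar}_2^2$ for every $t$, because $x_t\in K$. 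Summing and taking expectations, $\Reg_T\ge\gamst\,\E[\sumT\nrm{x_t-\xstar}_2^2]$. Writing $R=\Reg_T$ and combining with the upper bound, one gets $R\le c_1\Csc\sqrt{R\log T/\gamst}+c_2\Csc'\log T$ for absolute constants $c_1,c_2$; solving this standard self‑bounding inequality (treat it as a quadratic in $\sqrt{R}$) gives $\Reg_T = O\bigl(\Csc^2\log T/\gamst+\Csc'\log T\bigr)$.

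Finally I would convert $1/\gamst$ into the geometric quantity $\rhost/\nrm{\nabla\fcirc(\xstar)}_2$. By \Cref{def:ses}, $(\rho,\xstar,\fcirc)$-sphere‑enclosedness provides a ball $\ball(c,\rho)\supseteq K$ with $\xstar\in\bd(\ball(c,\rho))$, so $\nrm{c-\xstar}_2=\rho$, and $c=\xstar+r_0\nabla\fcirc(\xstar)$ for some $r_0>0$; hence $r_0=\rho/\nrm{\nabla\fcirc(\xstar)}_2$, and choosing $\gamma=1/(2r_0)=\nrm{\nabla\fcirc(\xstar)}_2/(2\rho)$ makes $B_\gamma^K$ coincide with $\ball(c,\rho)$, so that $K\subseteq B_\gamma^K$ and therefore $\gamst\ge\nrm{\nabla\fcirc(\xstar)}_2/(2\rho)$. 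Substituting into the previous display gives the second form of the bound. The main obstacle is precisely this middle/last step: checking that the sphere‑enclosed condition is exactly what forces $\gamst>0$ (otherwise the self‑bounding inequality is vacuous) and keeping track that the quadratic growth inequality is invoked at the fixed optimum $\xstar$ — where $\nabla\fcirc(\xstar)\ne0$ already forces $\xstar\in\bd(K)$ — rather than along the iterates $x_t$; everything else is convexity, Jensen, and an elementary quadratic inequality.
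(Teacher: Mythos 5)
Your proposal is correct and follows essentially the same route as the paper's proof: lower-bound the regret by $\gamst\,\E\brk[\big]{\sumT\nrm{x_t-\xstar}_2^2}$ via convexity of $\fcirc$ and $K\subseteq B_{\gamst}^K$, upper-bound it by the first branch of~\eqref{eq:uol_upper_bound}, solve the resulting self-bounding inequality, and identify $\gamst$ with $\nrm{\nabla\fcirc(\xstar)}_2/(2\rho)$ from the sphere-enclosed condition. Your write-up is in fact slightly more careful than the paper's on two minor points it leaves implicit (Jensen's inequality to move the expectation inside the square root, and the explicit computation $r_0=\rho/\nrm{\nabla\fcirc(\xstar)}_2$ identifying $B_\gamma^K$ with the enclosing ball).
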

\begin{wrapfigure}[11]{r}{0.35\columnwidth}
  \vspace{-13pt}
  \centering
  \includegraphics[width=0.28\columnwidth]{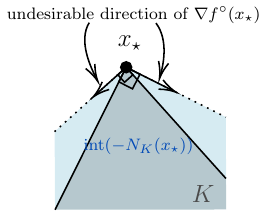}
  \caption{
    An example of an undesirable direction of $\nabla f^\circ(\xstar)$.
  }
  \label{fig:undesirable_nabla_fxstar}
\end{wrapfigure}
The assumption that $K$ is sphere-enclosed around $\xstar$ is satisfied for many typical feasible sets.
It holds if the feasible set $K$ is a ball, or a polytope with a mild condition on $\nabla f^\circ(\xstar)$. 
Specifically, 
the condition $\nabla f^\circ(x_\star) \in \interior(- N_K(\xstar))$ is sufficient to ensure that the feasible $K$ is sphere-enclosed around $\xstar$,
where $- N_K(\xstar) = \set{g \in \R^d \colon \inpr{g, x - \xstar} \geq 0 \, \forall x \in K}$ is the \emph{negative} normal cone.
This condition is mild since, from the optimality condition of $\xstar$, we have $\nabla \fcirc(\xstar) \in - N_K(\xstar)$.
Hence the undesirable case is restricted to $\nabla \fcirc(\xstar) \in \bd\prn{- N_K(\xstar)}$ (see \Cref{fig:undesirable_nabla_fxstar} for an example).
One might think that the assumption that $x^*$ is on the boundary of $K$ is restrictive, but for example, when the loss functions are linear, the minimizer is indeed on the boundary of the feasible set.
We will see in \Cref{cor:matching_upper_bound} that this upper bound matches the lower bound in \Cref{thm:lower_bound} in the environment used to construct the lower bound.

\begin{proof}
  The regret is bounded from below by
    $
    \Reg_T 
    =
    \E\brk[\big]{ \sumT \prx{ \fcirc(x_t) - \fcirc(\xstar) } }
    \geq
    \E\brk[\big]{ 
      \sumT \inpr{\nabla \fcirc(\xstar), x_t - \xstar} 
    }
    \geq
    \E\brk[\big]{ 
      \sumT \gamst \nrm{x_t - \xstar}_2^2
    }
    ,
    $
  where the first inequality follows from the convexity of $\fcirc$, and the last inequality follows from $x_t \in K \subseteq B_{\gamst}^K$ and the definition of $\gamst$.
  By combining this inequality with inequality~\eqref{eq:uol_upper_bound},
  the regret is bounded as 
  $
    \Reg_T
    \leq 
    \E\brk[\big]{\sumT \inpr{\nabla f_t(x_t), x_t - \xstar}}
    =
    O \prn*{
      \Csc \sqrt{\frac{\Reg_T}{\gamst} \log T}
      +
      \Csc' \log T
    }
    .
  $
  Solving this inequation w.r.t.~$\Reg_T$, we get 
  $
    \Reg_T 
    =
    O \prn*{ \frac{\Csc^2}{\gamst} \log T + \Csc' \log T}
    .
  $
  Observing that $\frac{1}{2 \gamst} \nrm{\nabla \fcirc(\xstar)}_2 = \rhost$, which holds from the assumption that $K$ is $(\rho, \xstar, \fcirc)$-sphere-enclosed, we complete the proof.
  \end{proof}

The advantages of the regret bound in \Cref{thm:main} compared to the existing upper bounds are the following:
  \one~The logarithmic regret can be achieved as long as the boundary of $K$ is curved around the optimal decision $\xstar$ or $\xstar$ is on corners (see \Cref{fig:feasible_set1}),
  while the existing analysis requires strong convexity over the entire feasible set $K$.
  \two~While the existing analysis only considers linear loss functions,
  our approach can handle convex loss functions and thus the curvature of loss functions (\eg~strong convexity or exp-concavity) can be simultaneously exploited (see \Cref{subsec:simultaneous}).
  \three~Even if the growth assumptions on loss vectors $g_1, \dots, g_T$ are not satisfied, the $O(\sqrt{T})$ regret upper bound can be achieved in adversarial environments, while the existing approach, FTL, can suffer $\Omega(T)$ regret.

A limitation of the proposed approach is that it assumes stochastic environments.
However, our approach at least guarantees an $O(\sqrt{T})$ bound in the (fully) adversarial environments, where the growth assumption needed for FTL to achieve the fast rates is not satisfied, and as we will see in the following section, we can achieve the fast rates also in corrupted stochastic environments.

For the assumption on loss vectors, 
the existing studies consider the following assumptions on loss vectors $g_t$:
There exists $L > 0$ such that 
$\nrm{g_1 + \dots + g_t}_2 \geq t L$ for all $t \in [T]$,
or 
$g_t \leq 0$ for all $t \in [T]$.
These assumptions cannot be directly comparable with our assumption that $\nabla \fcirc(\xstar) \neq 0$.
Note that the assumption that $\inf_{x \in K} \nrm{\nabla \fcirc(x)} > 0$ is standard in the literature of offline optimization, when deriving the fast convergence rate, see~\cite{levitin66constrained,demyanov70approximate,dunn79rates} and discussion in~\cite{garber15faster} for details.

Extending the sphere-enclosed condition to an arbitrary norm is challenging because the sphere-enclosed condition leverages the fact that the enclosing ball is an Euclidean ball.
However, we will see in \Cref{subsec:unif_convex} that fast rates for uniformly convex sets can be achieved for general norms (\Cref{thm:main_uniform_cvx}).

\begin{remark}
The sphere-enclosed condition is similar to the Bernstein condition investigated by~\citet{koolen16combining}.
These conditions are different for general convex loss functions;
however, when the loss functions are linear, the sphere-enclosed condition implies the Bernstein condition, allowing us to apply their analysis in this case.
Thus, our research can also be viewed as identifying a new condition that satisfies the Bernstein condition.
Still, our analysis is more general in the sense that we can deal with general convex loss functions.
See \Cref{app:connection_with_bernstein} for a detailed comparison between the sphere-enclosed condition and the Bernstein condition.
\end{remark}

\paragraph{Tightness of regret upper bound in \Cref{thm:main}}
In the remainder of this subsection, we investigate the tightness of the regret upper bound in \Cref{thm:main}.
To see the tightness of our regret bound, 
we consider the case when $K$ is an ellipsoid.
The following proposition implies that the regret upper bound in~\Cref{thm:main} matches the lower bound in~\Cref{thm:lower_bound}.
\begin{proposition}\label[proposition]{prop:ellipsoid_mes}
  For $\lambda \in (0,1)$, let $W_\lambda$ be the ellipsoid defined in~\Cref{subsubsec:existing_lower_bound}.
  Then, its minimum enclosing sphere $S$ such that $(0,-\lambda) \in S$ is
  $
    S 
    =
    \set{
      (x, y) \in \R^2
      \colon
      x^2 + \brk{ y - \prn{1-\lambda^2}/{\lambda} }^2
      = 
      {1}/{\lambda^2}
    }
    \per
  $
\end{proposition}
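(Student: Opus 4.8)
The plan is to reduce the problem over all enclosing circles to a one-parameter family, then collapse the enclosure requirement to a single scalar inequality via an exact algebraic identity, and finally read off the minimizer and the minimal radius.

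First I would show that every circle $S'$ enclosing $W_\lambda$ with $(0,-\lambda)$ on it is centered on the $y$-axis. Write $D'$ for the closed disk bounded by $S'$. Since $(0,-\lambda)\in\bd(D')$, the tangent line to $S'$ at $(0,-\lambda)$ supports $D'$, hence supports the subset $W_\lambda$ at $(0,-\lambda)$. As $(0,-\lambda)$ is a smooth point of the strictly convex body $W_\lambda$, its unique supporting line there is $\{y=-\lambda\}$; therefore the radius of $S'$ at $(0,-\lambda)$ is vertical, so the center is $(0,b)$ for some $b\in\R$, with radius $\rho'=\lvert b+\lambda\rvert$. Moreover $b\le-\lambda$ would force $D'\subseteq\{y\le-\lambda\}$, contradicting $(0,\lambda)\in W_\lambda\subseteq D'$; hence $b>-\lambda$ and $\rho'=b+\lambda>0$.

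Next I would make the enclosure condition explicit. Parametrizing $\bd(W_\lambda)$ by $(\cos\theta,\lambda\sin\theta)$ and setting $s=\sin\theta\in[-1,1]$, the squared distance from $(0,b)$ to a boundary point equals $h(s):=1+b^2-(1-\lambda^2)s^2-2\lambda b s$, and since $D'$ is convex, $W_\lambda\subseteq D'$ iff $h(s)\le(b+\lambda)^2$ for all $s\in[-1,1]$. The key step is the identity, obtained by direct expansion,
\[
  (b+\lambda)^2-h(s)=(1+s)\bigl[\,2\lambda b-(1-\lambda^2)(1-s)\,\bigr].
\]
For $s\in(-1,1]$ the factor $1+s$ is positive, so the enclosure condition is equivalent to $2\lambda b\ge(1-\lambda^2)(1-s)$ for all $s\in(-1,1]$; taking $s\downarrow-1$ this forces $\lambda b\ge1-\lambda^2$, and conversely $b\ge(1-\lambda^2)/\lambda$ makes both bracketed factors nonnegative on $[-1,1]$. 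Hence the enclosing circles through $(0,-\lambda)$ are exactly those with $b\ge(1-\lambda^2)/\lambda$; the radius $\rho'=b+\lambda$ is minimized uniquely at $b=(1-\lambda^2)/\lambda$, giving $\rho=1/\lambda$ and the circle $\{(x,y):x^2+(y-(1-\lambda^2)/\lambda)^2=1/\lambda^2\}$, which is the claimed $S$.

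All the computation here is elementary; the only point needing a little care is the first reduction, namely the supporting-line argument that pins the center to the symmetry axis through $(0,-\lambda)$ (together with ruling out the degenerate case $b\le-\lambda$ so that $\rho'=b+\lambda$ is genuinely the radius). Once that is in place, the displayed identity does the rest with no case analysis.
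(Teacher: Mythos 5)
Your proof is correct, and while it shares the paper's overall reduction to a one-parameter family of circles centered on the $y$-axis through $(0,-\lambda)$, the two halves of your argument are carried out differently. First, you actually justify that reduction: the supporting-line argument at the smooth boundary point $(0,-\lambda)$ forces the center onto the $y$-axis and rules out $b\le-\lambda$, whereas the paper simply posits a center $(0,c)$ and invokes ``geometric observations'' for $c-r=-\lambda$. Second, and more substantively, the paper evaluates $\sup_{(x,y)\in W_\lambda}\{x^2+y^2-2cy\}$ by Lagrange multipliers, obtaining a maximum of three candidate expressions and then arguing that the feasibility constraint pins down $c=(1-\lambda^2)/\lambda$; your identity $(b+\lambda)^2-h(s)=(1+s)\,[\,2\lambda b-(1-\lambda^2)(1-s)\,]$ collapses the enclosure condition to the single scalar inequality $b\ge(1-\lambda^2)/\lambda$ with no case analysis. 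Your route is cleaner and also more faithful to the geometry: the set of admissible centers is the half-line $b\ge(1-\lambda^2)/\lambda$, not a singleton --- the paper's three-term formula for the maximum is only valid when the interior Lagrangian critical point actually lies on the ellipse (which requires $c\le(1-\lambda^2)/\lambda$), which is why its feasibility analysis appears to produce a single point. The final answer is the same either way, since the radius $b+\lambda$ is minimized at the left endpoint $b=(1-\lambda^2)/\lambda$, giving $\rho=1/\lambda$.
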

The proof of \Cref{prop:ellipsoid_mes} is deferred to \Cref{app:ellipsoid_mes}.
This result immediately implies the following:
\begin{corollary}\label[corollary]{cor:matching_upper_bound}
  Let $K$ be $W_\lambda$ and $x^* = (0, - \lambda)$.
  Under the same assumption as in \Cref{thm:main},
  in the environment considered in the construction of the lower bound in~\Cref{thm:lower_bound_growth},
  the algorithm in~\cite{wang20adaptivity} achieves
  $
  \Reg_T 
  = 
  O\prn*{
    \frac{1}{\lambda L} \log T
    +
    G D \log T
  }
  .
  $
\end{corollary}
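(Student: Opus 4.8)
The plan is to combine Proposition~\ref{prop:ellipsoid_mes} with Theorem~\ref{thm:main} in the most direct way possible, treating the corollary purely as a specialization. First I would recall that Theorem~\ref{thm:lower_bound_growth} constructs its loss sequence from the vectors $h_t^L = (2X_t - 1, -L)$, so that $\fcirc(\cdot) = \inpr{\E[h_t^L], \cdot} = \inpr{(0,-L), \cdot}$, and the minimizer over $K = W_\lambda$ of this linear function is exactly $\xstar = (0, -\lambda)$, which sits on $\bd(K)$. I would then verify the hypotheses of Theorem~\ref{thm:main}: we have $\nabla \fcirc(\xstar) = (0,-L) \neq 0$, and the sphere-enclosedness of $W_\lambda$ facing $\xstar$ follows from Proposition~\ref{prop:ellipsoid_mes}, since the sphere $S$ it identifies has radius $\rho = 1/\lambda$, passes through $\xstar = (0,-\lambda)$, encloses $W_\lambda$, and has center $(0, (1-\lambda^2)/\lambda)$, which lies along the ray $\xstar + r_0 \nabla\fcirc(\xstar) = (0, -\lambda - r_0 L)$ — wait, that direction points downward, so I would instead note that what matters for Condition~\three of Definition~\ref{def:ses} is that $\nabla\fcirc(\xstar)$ points toward the center; since the outward normal to $W_\lambda$ at $(0,-\lambda)$ is $(0,-1)$ and the center lies in the direction $(0,+1)$ from $\xstar$, I should double-check the sign convention of the gradient of the objective versus the FTL loss vectors. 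The cleanest resolution is that the loss vectors $g_t$ push decisions toward $-g_t$, i.e. the effective ``pull'' direction is $(0, +L)$, which is precisely the direction from $\xstar$ to the center $c = (0,(1-\lambda^2)/\lambda)$; this is exactly the content of the footnote attached to Condition~\three and is what makes $\xstar = \argmin_{x\in K}\fcirc(x)$ consistent with sphere-enclosedness. I expect this sign/orientation bookkeeping to be the only genuinely delicate point.

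Having checked the hypotheses, I would apply Theorem~\ref{thm:main}, which yields $\Reg_T = O\big(\tfrac{\Csc^2\,\rho}{\nrm{\nabla\fcirc(\xstar)}_2}\log T + \Csc'\log T\big)$. Next I would substitute the concrete quantities for this instance: $\rho = 1/\lambda$ from Proposition~\ref{prop:ellipsoid_mes}, and $\nrm{\nabla\fcirc(\xstar)}_2 = \nrm{(0,-L)}_2 = L$, as already computed in the proof of Theorem~\ref{thm:lower_bound}. This gives $\tfrac{\Csc^2\,\rho}{\nrm{\nabla\fcirc(\xstar)}_2} = \tfrac{\Csc^2}{\lambda L}$. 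Finally I would instantiate the universal-online-learning algorithm: for the Maler algorithm of~\citet{wang20adaptivity} we have $\Csc = G$ and $\Csc' = GD$, hence $\Reg_T = O\big(\tfrac{G^2}{\lambda L}\log T + GD\log T\big)$, which is the claimed bound.

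A minor point worth spelling out is that Theorem~\ref{thm:main} is stated for general stochastic environments, whereas Theorem~\ref{thm:lower_bound_growth} uses a specific (stochastic) construction $f_t(\cdot) = \inpr{h_t^L,\cdot}$ with $h_t^L$ i.i.d.\ given $P$; but conditionally on the Beta random variable $P$, the sequence $(h_t^L)_t$ is i.i.d., so the bound of Theorem~\ref{thm:main} applies conditionally and then in expectation over $P$ as well — alternatively one simply observes that the lower-bound construction is already a valid instance of the stochastic setting of Theorem~\ref{thm:main} once we fix any legitimate generating distribution. I would state this in one sentence. The main obstacle, to reiterate, is making sure the geometric picture (direction of the gradient at the boundary minimizer versus the direction to the center of the enclosing sphere) is stated with the correct signs so that $\xstar=(0,-\lambda)$ is simultaneously the objective minimizer and a point at which $W_\lambda$ is $(\rho,\xstar,\fcirc)$-sphere-enclosed; everything else is a substitution.
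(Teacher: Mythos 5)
Your proposal is correct and follows essentially the same route as the paper's own (two-line) proof: read off $\rho = 1/\lambda$ from \Cref{prop:ellipsoid_mes}, note $\nrm{\nabla\fcirc(\xstar)}_2 = L$, and plug the Maler constants $\Csc = G$, $\Csc' = GD$ into \Cref{thm:main}. The sign/orientation issue you flag is a real wrinkle that the paper silently glosses over (with $\fcirc(\cdot)=\inpr{(0,-L),\cdot}$ the minimizer is the antipodal point of the one used in \Cref{prop:ellipsoid_mes}, but by the symmetry of $W_\lambda$ the enclosing sphere has the same radius $1/\lambda$ and Condition~\three{} holds with $r_0 = 1/(\lambda L)$ once the orientation is fixed consistently), so your extra care there is warranted and does not change the conclusion.
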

This upper bound matches the lower bound in~\Cref{thm:lower_bound} up to the additive $G D \log T$ factor.
\begin{proof}
  From \Cref{prop:ellipsoid_mes} and the fact that Euclidean ball with radius $r$ is $1/r$ strongly convex w.r.t.~$\nrm{\cdot}_2$, we have $\rho = 1/\lambda$.
  This proposition with~\Cref{thm:main} gives the desired bound.
\end{proof}
The upper bound in \Cref{thm:main} is applicable when $K$ is a polytope.
We will see that our approach work also in the corrupted stochastic environment in \Cref{subsec:corruption},
and to our knowledge,
this is the first upper bound that achieves fast rates when the feasible set is a polytope in non-stochastic environments. 
A further discussion regarding the case when $K$ is a polytope can be found in \Cref{app:polytope}.

\subsection{Regret bounds in corrupted stochastic environments}\label{subsec:corruption}
Another advantage of our approach is that it can achieve nearly optimal regret upper bounds even in corrupted stochastic environments.
For $\gamma > 0$, we define ball $\widetilde{B}_\gamma^K \subseteq \R^d$ by
$
  \widetilde{B}_\gamma^K
  =
  \ball\prn[\big]{
    \xtilstar + \frac{1}{2 \gamma} \nabla \ftilcirc(\xtilstar) \com \,
    \frac{1}{2 \gamma} \nrm[\big]{\nabla \ftilcirc(\xtilstar)}_2
  }
  \com 
$
which is defined in the same manner as $B_\gamma^K$.
For this $\tilde{B}_\gamma^K$, we let 
$    
\gamtilstar = \sup
\set{\gamma \geq 0  \colon K \subseteq \widetilde{B}_\gamma^K}.
$
Then, we can prove the following regret upper bound.
\begin{theorem}\label[theorem]{thm:main_corruption}
  Consider online convex optimization in corrupted stochastic environments with corruption level at most $C$, where
  $\xtilstar = \argmin_{x \in K} \ftilcirc(x)$. 
  Suppose $K$ is $(\rho, \xtilstar, \ftilcirc)$-sphere enclosed 
  and $\nabla \ftilcirc(\xtilstar) \neq 0$.
  Then, any algorithm with the bound~\eqref{eq:uol_upper_bound} achieves
  \begin{equation}
    \Reg_T 
    = 
    O \prn*{ \frac{\Csc^2}{\gamtilstar} \log T 
      +
      \sqrt{
        \frac{C \Csc^2}{\gamtilstar} \log T 
      }
      +
      \Csc' \log T
    }
    =
    O \prn*{ 
      \frac{\Csc^2 \, \rhost}{\nrm{\nabla \ftilcirc(\xtilstar)}}_2 \log T 
      +
      \sqrt{
        \frac{ C \Csc^2 \, \rhost}{\nrm{\nabla \ftilcirc(\xtilstar)}}_2 \log T 
      }
      + 
      \Csc' \log T
    }
    \per 
    \n
  \end{equation}
\end{theorem}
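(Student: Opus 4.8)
The plan is to run the argument of \Cref{thm:main} against the \emph{clean} comparator $\xtilstar$ rather than $\xstar$, and to absorb the discrepancy between the observed losses $f_t$ and their clean counterparts $\tilde{f}_t$ into the corruption budget $C$. Since $\xstar = \argmin_{x \in K}\E\brk[\big]{\sumT f_t(x)}$, we have $\E\brk[\big]{\sumT f_t(\xstar)} \le \E\brk[\big]{\sumT f_t(\xtilstar)}$, so it suffices to bound $\Reg_T' \coloneqq \E\brk[\big]{\sumT \prn{f_t(x_t) - f_t(\xtilstar)}} \ge \Reg_T$. Convexity of each $f_t$ gives $\Reg_T' \le \E\brk[\big]{\sumT \inpr{\nabla f_t(x_t), x_t - \xtilstar}}$, and applying the universal bound~\eqref{eq:uol_upper_bound} with comparator $\xtilstar$ (such bounds hold against any fixed competitor) together with Jensen's inequality yields $\Reg_T' = O\prn*{\Csc \sqrt{\E\brk[\big]{\sumT \nrm{x_t - \xtilstar}_2^2}\log T} + \Csc'\log T}$, exactly as in the proof of \Cref{thm:main}.

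Next I would lower-bound each per-round loss gap in terms of $\nrm{x_t - \xtilstar}_2^2$. Since $x_t \in K \subseteq \widetilde{B}_{\gamtilstar}^K$, the definitions of $\widetilde{B}_\gamma^K$ and of $\gamtilstar$ give $\gamtilstar \nrm{x_t - \xtilstar}_2^2 \le \inpr{\nabla \ftilcirc(\xtilstar), x_t - \xtilstar} \le \ftilcirc(x_t) - \ftilcirc(\xtilstar)$. Because $x_t$ depends only on the history before round $t$ while $\tilde{f}_t \sim \tilde{\calD}$ is independent of that history, $\E\brk{\tilde{f}_t(x_t)} = \E\brk{\ftilcirc(x_t)}$, hence $\gamtilstar \, \E\brk[\big]{\sumT \nrm{x_t - \xtilstar}_2^2} \le \E\brk[\big]{\sumT \prn{\tilde{f}_t(x_t) - \tilde{f}_t(\xtilstar)}}$. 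Writing $\tilde{f}_t = f_t - (f_t - \tilde{f}_t)$ and bounding $\abs{(f_t - \tilde{f}_t)(x_t) - (f_t - \tilde{f}_t)(\xtilstar)} \le 2\nrm{f_t - \tilde{f}_t}_\infty$, the right-hand side is at most $\Reg_T' + 2\,\E\brk[\big]{\sumT \nrm{f_t - \tilde{f}_t}_\infty} \le \Reg_T' + 2C$ by the corruption-level assumption. Thus $\gamtilstar \, \E\brk[\big]{\sumT \nrm{x_t - \xtilstar}_2^2} \le \Reg_T' + 2C$.

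Substituting this into the display above gives the self-bounding inequality $\Reg_T' = O\prn*{\Csc \sqrt{\prn{\Reg_T' + C}\log T / \gamtilstar} + \Csc'\log T}$. Splitting the square root via $\sqrt{a+b} \le \sqrt{a} + \sqrt{b}$ and solving the resulting quadratic inequality in $\Reg_T'$ (if $X \le p\sqrt{X} + q$ then $X = O(p^2 + q)$) yields $\Reg_T' = O\prn*{\frac{\Csc^2}{\gamtilstar}\log T + \sqrt{\frac{C\Csc^2}{\gamtilstar}\log T} + \Csc'\log T}$. Finally, as in \Cref{thm:main}, $(\rho, \xtilstar, \ftilcirc)$-sphere-enclosedness together with $\nabla \ftilcirc(\xtilstar) \ne 0$ gives $\frac{1}{2\gamtilstar}\nrm{\nabla \ftilcirc(\xtilstar)}_2 = \rho$, which converts the bound into the second stated form; since $\Reg_T \le \Reg_T'$, the theorem follows.

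The main obstacle is the bookkeeping in the middle step: one must carefully pass between the observable corrupted regret, the unobservable clean regret $\E\brk[\big]{\sumT \prn{\ftilcirc(x_t) - \ftilcirc(\xtilstar)}}$, and the corruption budget, invoking the independence of $\tilde{f}_t$ from the past at the right moment so that $\E\brk{\tilde{f}_t(x_t)} = \E\brk{\ftilcirc(x_t)}$, and applying Jensen's inequality to $\sqrt{\sumT \nrm{x_t - \xtilstar}_2^2}$ under the expectation. The remaining ingredients --- convexity, the $\gamtilstar$-quadratic-growth inequality, and solving the self-bounding recursion --- are the same routine computations as in the proof of \Cref{thm:main}.
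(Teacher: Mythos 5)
Your overall strategy --- split the regret into a clean part plus a corruption budget, lower-bound the clean part by $\gamtilstar \, \E\brk[\big]{\sum_t \nrm{x_t - \xtilstar}_2^2}$, feed this into the universal bound to obtain a self-bounding inequality with $C$ inside the square root, and solve --- is exactly the paper's route, and the middle of your argument (the use of the independence of $\tilde{f}_t$ from $x_t$, the $2\nrm{f_t - \tilde{f}_t}_\infty$ bound, Jensen's inequality, and the quadratic solve) is carried out correctly. However, there is a genuine error at the two ends: the inequality between $\Reg_T$ and $\Reg_T'$ is reversed. Since $\xstar = \argmin_{x\in K}\E\brk[\big]{\sum_t f_t(x)}$, we have $\E\brk[\big]{\sum_t f_t(\xstar)} \le \E\brk[\big]{\sum_t f_t(\xtilstar)}$, and subtracting the \emph{smaller} quantity makes the regret \emph{larger}: $\Reg_T = \E\brk[\big]{\sum_t f_t(x_t)} - \E\brk[\big]{\sum_t f_t(\xstar)} \ge \E\brk[\big]{\sum_t f_t(x_t)} - \E\brk[\big]{\sum_t f_t(\xtilstar)} = \Reg_T'$. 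So your claims ``$\Reg_T' \ge \Reg_T$'' and ``since $\Reg_T \le \Reg_T'$ the theorem follows'' are both backwards: you have upper-bounded the regret against the clean comparator, which is the smaller of the two quantities, and this does not bound $\Reg_T$ as defined.

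The gap is not free to close. The cheap patch $\Reg_T \le \Reg_T' + 2C$ (valid because $\E\brk[\big]{\sum_t(f_t(\xtilstar) - f_t(\xstar))} \le \E\brk[\big]{\sum_t(\tilde{f}_t(\xtilstar) - \tilde{f}_t(\xstar))} + 2C \le 2C$) introduces an additive $+C$ into the final bound, which is strictly weaker than the claimed $\sqrt{C\Csc^2 \log T/\gamtilstar}$ term when $C$ is large. To recover the stated rate you should keep $\xstar$ as the comparator on the upper-bound side, $\Reg_T \le \E\brk[\big]{\sum_t\inpr{\nabla f_t(x_t), x_t - \xstar}} = O\prn[\big]{\Csc\sqrt{\E\brk{\sum_t\nrm{x_t-\xstar}_2^2}\log T} + \Csc'\log T}$, and then control the comparator mismatch via $\nrm{x_t - \xstar}_2^2 \le 2\nrm{x_t - \xtilstar}_2^2 + 2\nrm{\xtilstar - \xstar}_2^2$ together with $T\gamtilstar\nrm{\xstar - \xtilstar}_2^2 \le T\prn[\big]{\ftilcirc(\xstar) - \ftilcirc(\xtilstar)} \le 2C$; combined with your (correct) bound $\gamtilstar\,\E\brk[\big]{\sum_t\nrm{x_t-\xtilstar}_2^2} \le \Reg_T' + 2C \le \Reg_T + 2C$, this yields the paper's self-bounding inequality $\Reg_T = O\prn[\big]{\Csc\sqrt{(\Reg_T + C)\log T/\gamtilstar} + \Csc'\log T}$, after which your solving step goes through verbatim. (The paper's own write-up is terse about this comparator mismatch, but your version states the wrong direction explicitly and leans on it.)
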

The proof of \Cref{thm:main_corruption} can be found in \Cref{app:corruption}.
One can see that this upper bound matches the lower bound in \Cref{thm:lower_bound_corrupt} up to logarithmic factors.
Note that all upper bounds provided in this study can be extended following the same line as the proof of \Cref{thm:main_corruption}.

\subsection{Exploiting the curvature of loss functions and feasible set simultaneously}\label{subsec:simultaneous}
One of the advantages of directly solving OCO over reducing to OLO is that we can obtain upper bounds that can simultaneously exploit the curvature of feasible sets and loss functions:
\begin{theorem}\label[theorem]{thm:main_sim_curvature}
Suppose that the same assumption as in \Cref{thm:main} holds.
If $f_1, \dots, f_T$ are $\alpha$-strongly convex w.r.t.~a norm $\nrm{\cdot}$, then
$
  \Reg_T
  = 
  O \prn*{ \frac{\Csc^2}{\gamst + \alpha / \xi} \log T + \Csc' \log T}
  .
$
If $f_1, \dots, f_T$ are $\beta$-exp-concave, then
$
  \Reg_T
  = 
  O\prn*{
    \min \set*{
      \frac{\Csc^2}{\gamst}
      \com
      \frac{\Cec^2}{\beta' + \gamst / G^2}
    }
    \log T
    +
    \max\set*{\Csc', \Cec'} \log T
  }
$
for $\beta' \leq \frac12 \min\set*{\frac{1}{4 G D}, \beta}$.
\end{theorem}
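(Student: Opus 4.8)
The plan is to run the same self-bounding argument as in the proof of \Cref{thm:main} — substitute a quadratic lower bound on $\Reg_T$ into the universal-learning bound \eqref{eq:uol_upper_bound} and solve for $\Reg_T$ — but to strengthen that quadratic lower bound so that the curvature of the loss functions is \emph{added} to the curvature $\gamst$ contributed by the sphere-enclosedness of $K$, and then to pick the appropriate branch of \eqref{eq:uol_upper_bound}: the $\Csc$-branch for the strongly convex case, the $\Cec$-branch for the exp-concave case.

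\textbf{Strongly convex case.} First note that $\fcirc=\ExpectX{f\sim\calD}{f}$ inherits $\alpha$-strong convexity w.r.t.\ $\nrm{\cdot}$ from the $f_t$'s, since strong convexity is preserved under expectations. Then, exactly as in \Cref{thm:main} but retaining the quadratic term, $\Reg_T=\E\big[\sumT(\fcirc(x_t)-\fcirc(\xstar))\big]\ge\E\big[\sumT\big(\inpr{\nabla\fcirc(\xstar),x_t-\xstar}+\tfrac{\alpha}{2}\nrm{x_t-\xstar}^2\big)\big]$. Bounding the linear term below by $\gamst\nrm{x_t-\xstar}_2^2$ via $x_t\in K\subseteq B_{\gamst}^K$ and the definition of $\gamst$, and converting the quadratic term to the Euclidean norm using $\nrm{\cdot}_2\le\xi\nrm{\cdot}$, gives $\Reg_T\ge(\gamst+\Theta(\alpha/\xi))\,\E[\sumT\nrm{x_t-\xstar}_2^2]$. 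Plugging this into the $\Csc$-branch of \eqref{eq:uol_upper_bound}, namely $\Reg_T\le\E[\sumT\inpr{\nabla f_t(x_t),x_t-\xstar}]=O(\Csc\sqrt{\E[\sumT\nrm{x_t-\xstar}_2^2]\,\log T}+\Csc'\log T)$, and solving the resulting quadratic inequality for $\Reg_T$ yields the claimed bound.

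\textbf{Exp-concave case.} Here I would use instead the standard consequence of $\beta$-exp-concavity, $G$-Lipschitzness and diameter $D$: for $\beta'\le\tfrac12\min\{\tfrac{1}{4GD},\beta\}$ one has $f_t(x_t)-f_t(\xstar)\le\inpr{\nabla f_t(x_t),x_t-\xstar}-\tfrac{\beta'}{2}\inpr{\nabla f_t(x_t),x_t-\xstar}^2$. Writing $A\coloneqq\E[\sumT\inpr{\nabla f_t(x_t),x_t-\xstar}]$ and $Q\coloneqq\E[\sumT\inpr{\nabla f_t(x_t),x_t-\xstar}^2]$, summing and taking expectations gives $\tfrac{\beta'}{2}Q\le A-\Reg_T$. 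Separately, Cauchy--Schwarz together with $\nrm{\nabla f_t(x_t)}_2\le G$ gives $Q\le G^2\,\E[\sumT\nrm{x_t-\xstar}_2^2]$, and the sphere-enclosed lower bound $\Reg_T\ge\gamst\,\E[\sumT\nrm{x_t-\xstar}_2^2]$ from the proof of \Cref{thm:main} then yields $\tfrac{\gamst}{G^2}Q\le\Reg_T$. Adding the two inequalities on $Q$, the $\Reg_T$ terms cancel and $(\tfrac{\beta'}{2}+\tfrac{\gamst}{G^2})Q\le A$, i.e.\ $Q=O\big(A/(\beta'+\gamst/G^2)\big)$. Feeding this into the $\Cec$-branch of \eqref{eq:uol_upper_bound}, $\Reg_T\le A=O(\Cec\sqrt{Q\log T}+\Cec'\log T)$, and solving for $A$ gives $\Reg_T=O\big(\tfrac{\Cec^2}{\beta'+\gamst/G^2}\log T+\Cec'\log T\big)$. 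Since the same algorithm simultaneously obeys the $\Csc$-branch, \Cref{thm:main} already provides $\Reg_T=O(\tfrac{\Csc^2}{\gamst}\log T+\Csc'\log T)$, and taking the smaller of the two bounds produces the stated minimum.

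\textbf{Main obstacle.} The delicate part is the exp-concave case. The universal-learning bound naturally controls $Q=\sumT\inpr{\nabla f_t(x_t),x_t-\xstar}^2$, whereas the sphere-enclosed condition controls $\sumT\nrm{x_t-\xstar}_2^2$; the only bridge between them is the lossy step $\inpr{\nabla f_t(x_t),x_t-\xstar}^2\le G^2\nrm{x_t-\xstar}_2^2$, which is exactly why the set-curvature contribution ends up damped by $G^2$ in the denominator. One has to check that this is the only place a factor is lost and, crucially, that the exp-concavity inequality (of the form $\tfrac{\beta'}{2}Q\le A-\Reg_T$) and the set-curvature inequality (of the form $\tfrac{\gamst}{G^2}Q\le\Reg_T$) genuinely add — which they do, because the $\Reg_T$ terms carry opposite signs and cancel on addition. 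The strongly convex case is essentially routine once one observes that $\fcirc$ is $\alpha$-strongly convex and that the norm conversion $\nrm{\cdot}_2\le\xi\nrm{\cdot}$ is the only source of the $\xi$ factor.
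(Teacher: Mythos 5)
Your proposal is correct and follows essentially the same self-bounding argument as the paper: strengthen the quadratic lower bound on $\Reg_T$ using the loss curvature (via $\nrm{\cdot}_2 \le \xi\nrm{\cdot}$ in the strongly convex case, and via $Q \le G^2\,\E[\sumT\nrm{x_t-\xstar}_2^2]$ together with \cite[Lemma 3]{hazan07logarithmic} in the exp-concave case), then feed it into the appropriate branch of \eqref{eq:uol_upper_bound} and solve. The only differences are cosmetic reorganizations — you apply strong convexity of $\fcirc$ at $\xstar$ rather than of $f_t$ at $x_t$, and you add the two inequalities on $Q$ so that $\Reg_T$ cancels instead of using the paper's $2\Reg_T-\Reg_T$ decomposition — which yield the same bounds.
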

The proof can be found in Appendix~\ref{app:sim_curvature}.
\Cref{thm:main_sim_curvature} implies that one can simultaneously exploit the curvature of feasible sets and loss functions.

\subsection{Matching regret upper bound for uniformly convex sets}\label{subsec:unif_convex}
Here, we prove that a regret bound smaller than $O(\sqrt{T})$ can be achieved when $K$ is uniformly convex.
This can be proven by a similar argument using the idea of exploiting the lower bound, as in the proof for sphere-enclosed sets. 
For uniformly convex feasible sets, we can prove the following theorem.
\begin{theorem}\label[theorem]{thm:main_uniform_cvx}
  Consider online convex optimization in stochastic environments,
  where the optimal decision is $\xstar = \argmin_{x \in K} \fcirc(x)$.
  Suppose that $K$ is $(\kappa, q)$-uniformly convex w.r.t.~a norm $\nrm{\cdot}$ for some $q \geq 2$
  and that $\nabla \fcirc(\xstar) \neq 0$.
  Then, any algorithm with bound~\eqref{eq:uol_upper_bound} achieves
  \begin{equation}\label{eq:unif_convex_upper}
    \Reg_T 
    =
    O
    \prn*{ 
      \frac
        { \prn*{ \xi \Csc }^{\frac{q}{q-1}} }
        {
          \prn*{
          q
          \kappa
          \nrm{ \nabla \fcirc(\xstar) }_\star
          }^{\frac{1}{q-1}}
        }
      T^{\frac{q-2}{2(q-1)}}
      \prn*{ \log T }^{\frac{q}{2(q-1)}}
      + 
      \Csc' \log T
    }
    \per 
    \n
  \end{equation}
  In particular,
  when $K$ is $\lambda$-strongly convex w.r.t.~$\nrm{\cdot}$,
  $
    \Reg_T
    = 
    O \prn*{ 
      \frac
        { \xi \Csc }
        {
          \lambda
          \nrm{ \nabla \fcirc(\xstar) }_\star
        } 
        \log T
        + 
        \Csc' \log T
    }
    \per 
  $ 
\end{theorem}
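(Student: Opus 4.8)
The plan is to mimic the proof of \Cref{thm:main}, replacing the quadratic lower bound $\inpr{\nabla\fcirc(\xstar),x-\xstar}\geq\gamst\nrm{x-\xstar}_2^2$ by the appropriate $q$-th-power lower bound that uniform convexity of $K$ provides. First I would establish the key geometric inequality: if $K$ is $(\kappa,q)$-uniformly convex w.r.t.\ $\nrm{\cdot}$ and $\xstar=\argmin_{x\in K}\fcirc(x)$ is on $\bd(K)$ with $\nabla\fcirc(\xstar)\neq0$, then for every $x\in K$,
\begin{equation}
  \inpr*{\nabla\fcirc(\xstar),\,x-\xstar}\;\geq\;q\,\kappa\,\nrm{\nabla\fcirc(\xstar)}_\star\,\nrm{x-\xstar}^q\per\n
\end{equation}
This should follow by taking $y=\xstar$ in \Cref{def:uniformly_convex}, so that $\theta x+(1-\theta)\xstar+\theta(1-\theta)\kappa\nrm{x-\xstar}^q\cdot\ball_{\nrm{\cdot}}\subseteq K$; since $\xstar$ minimizes the linear functional $z\mapsto\inpr{\nabla\fcirc(\xstar),z}$ over $K$ (optimality/first-order condition), every point in that ball has inner product with $\nabla\fcirc(\xstar)$ at least $\inpr{\nabla\fcirc(\xstar),\xstar}$; evaluating at the ``worst'' boundary point of the ball in direction $-\nabla\fcirc(\xstar)$, rearranging, and letting $\theta\to1$ (which maximizes $\theta(1-\theta)$ after division, or rather choosing the optimal $\theta$) yields the stated bound with the constant $q\kappa$ (the factor $q$ comes from optimizing $\theta(1-\theta)$ against a $\theta$-independent term — one should double-check whether the clean constant is $q\kappa$ or $4\kappa$; I expect the former after the standard $\theta=1-1/q$ type optimization, matching the strongly convex case $q=2$ which gives $2\kappa$, consistent with \Cref{thm:main}).

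Next I would combine this with convexity of $\fcirc$ exactly as in the proof of \Cref{thm:main}: writing $\Reg_T=\E[\sumT(\fcirc(x_t)-\fcirc(\xstar))]\geq\E[\sumT\inpr{\nabla\fcirc(\xstar),x_t-\xstar}]\geq q\kappa\nrm{\nabla\fcirc(\xstar)}_\star\,\E[\sumT\nrm{x_t-\xstar}^q]$. On the other side, the universal-online-learning bound~\eqref{eq:uol_upper_bound} gives $\Reg_T\leq\E[\sumT\inpr{\nabla f_t(x_t),x_t-\xstar}]=O(\Csc\sqrt{\sumT\nrm{x_t-\xstar}_2^2\log T}+\Csc'\log T)$, and using $\nrm{x_t-\xstar}_2\leq\xi\nrm{x_t-\xstar}$ converts the Euclidean norm to $\nrm{\cdot}$ at the cost of $\xi$. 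Letting $S\coloneqq\E[\sumT\nrm{x_t-\xstar}^q]$, I would apply Hölder's (or the power-mean) inequality to relate $\E[\sumT\nrm{x_t-\xstar}^2]$ to $S$: since $2\leq q$, $\sumT\nrm{x_t-\xstar}^2\leq T^{1-2/q}(\sumT\nrm{x_t-\xstar}^q)^{2/q}$, so $\E[\sumT\nrm{x_t-\xstar}^2]\leq T^{1-2/q}S^{2/q}$ by Jensen. Combining the two directions then yields, ignoring the additive $\Csc'\log T$, an inequality of the form $q\kappa\nrm{\nabla\fcirc(\xstar)}_\star\,S=O(\xi\Csc\,T^{(1/2)(1-2/q)}S^{1/q}\sqrt{\log T})$, i.e.\ $S^{1-1/q}=O\big(\tfrac{\xi\Csc}{q\kappa\nrm{\nabla\fcirc(\xstar)}_\star}T^{(q-2)/(2q)}\sqrt{\log T}\big)$.

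Then I would back out the regret. Solving the displayed inequality for $S$ gives $S=O\big((\tfrac{\xi\Csc}{q\kappa\nrm{\nabla\fcirc(\xstar)}_\star})^{q/(q-1)}T^{(q-2)/(2(q-1))}(\log T)^{q/(2(q-1))}\big)$, and plugging this back into $\Reg_T=O(q\kappa\nrm{\nabla\fcirc(\xstar)}_\star\,S)$... no — more directly, $\Reg_T\leq\E[\sumT\inpr{\nabla f_t(x_t),x_t-\xstar}]=O(\xi\Csc\sqrt{T^{1-2/q}S^{2/q}\log T}+\Csc'\log T)=O(\xi\Csc T^{(q-2)/(2q)}S^{1/q}\sqrt{\log T}+\Csc'\log T)$, and substituting the bound on $S$ produces exactly $O\big(\tfrac{(\xi\Csc)^{q/(q-1)}}{(q\kappa\nrm{\nabla\fcirc(\xstar)}_\star)^{1/(q-1)}}T^{(q-2)/(2(q-1))}(\log T)^{q/(2(q-1))}+\Csc'\log T\big)$, as claimed; alternatively one can set up the single self-bounding inequality $\Reg_T=O(\xi\Csc\sqrt{T^{1-2/q}(\Reg_T/(q\kappa\nrm{\nabla\fcirc(\xstar)}_\star))^{2/q}\log T}+\Csc'\log T)$ directly from $S\leq\Reg_T/(q\kappa\nrm{\nabla\fcirc(\xstar)}_\star)$ and solve for $\Reg_T$, which is cleaner. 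The strongly convex case $q=2$ is the special case where the exponents become $1/(q-1)=1$, $T^{(q-2)/(2(q-1))}=T^0=1$, $(\log T)^{q/(2(q-1))}=(\log T)^1$, recovering the stated $O\big(\tfrac{\xi\Csc}{\lambda\nrm{\nabla\fcirc(\xstar)}_\star}\log T+\Csc'\log T\big)$ with $\kappa=\lambda$.

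The main obstacle I anticipate is the first step — getting the geometric inequality with the correct constant $q\kappa$ (rather than a lossy $4\kappa$ or $2\kappa$) and handling the corner case cleanly. The subtlety is that \Cref{def:uniformly_convex} places a ball of radius $\theta(1-\theta)\kappa\nrm{x-\xstar}^q$ inside $K$, and to extract the sharp bound one must push $\theta$ toward the endpoint while the point on the segment also approaches $\xstar$; the optimization of $\theta(1-\theta)$ against $(1-\theta)$-scaling of the displacement is exactly what turns the naive $\kappa/4$ into $q\kappa$ (via $\theta^*=1-1/q$ balancing $\theta(1-\theta)\nrm{x-\xstar}^q$ against $(1-\theta)\nrm{x-\xstar}$, so the effective constant is $\frac{\theta^*(1-\theta^*)}{1-\theta^*}\cdot\frac{1}{\text{something}}$ — I would verify this against the $q=2$ case which must reproduce the $\gamst$ used in \Cref{thm:main}). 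Everything after that — convexity, Hölder, and solving the self-bounding inequality — is routine and parallels \Cref{thm:main} verbatim; the additive $\Csc'\log T$ term is always dominated or carried along harmlessly. One should also note that here, unlike for sphere-enclosedness, no Condition \three-type directionality assumption on $\nabla\fcirc(\xstar)$ is needed, because uniform convexity of $K$ holds globally and the first-order optimality $\nabla\fcirc(\xstar)\in-N_K(\xstar)$ already supplies the needed alignment with the inward ball.
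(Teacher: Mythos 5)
Your proposal follows essentially the same route as the paper: lower-bound the regret via convexity of $\fcirc$ and a uniform-convexity inequality of the form $\inpr{\nabla\fcirc(\xstar), x-\xstar}\geq c\,\kappa\,\nrm{\nabla\fcirc(\xstar)}_\star\nrm{x-\xstar}^q$, convert norms with $\xi$, relate $\sum_t\nrm{x_t-\xstar}_2^2$ to $\sum_t\nrm{x_t-\xstar}_2^q$ by Jensen/H\"older, and solve the resulting self-bounding inequality (the paper writes this as $\Reg_T=2\Reg_T-\Reg_T$ and applies $a\sqrt{x}-bx^{q/2}\leq a^{\frac{q}{q-1}}/(qb)^{\frac{1}{q-1}}$, which is exactly your ``cleaner'' variant). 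The one genuine error is your claimed geometric constant $c=q$: it is not derivable from \Cref{def:uniformly_convex}, and is in fact false — dividing the optimality inequality by $\theta$ leaves a factor $(1-\theta)$, so letting $\theta\to 0$ gives the sharp constant $c=1$ (the paper's \Cref{lem:unif_convex} settles for $c=1/4$ via $\theta=1/2$); e.g.\ for the unit Euclidean ball, which is $(1/2,2)$-uniformly convex, the sharp inequality is $\inpr{g,x-\xstar}=\tfrac12\nrm{g}_2\nrm{x-\xstar}_2^2$, matching $c=1$ and ruling out $c=q=2$. The $q$ appearing in the denominator $(q\kappa\nrm{\nabla\fcirc(\xstar)}_\star)^{\frac{1}{q-1}}$ of the theorem comes from the final Fenchel--Young-type optimization, not from the geometry. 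This slip is harmless for the stated bound, since replacing $q\kappa$ by $\kappa/4$ in your derivation only changes the result by the universal factor $(4q)^{\frac{1}{q-1}}\leq 8$ for $q\geq 2$, which is absorbed in the big-$O$.
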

The dependence on $T$ in this bound is $O\prn*{T^{\frac{q-2}{2(q-1)}} \prn*{ \log T }^{\frac{q}{2(q-1)}}}$,
which becomes $O(\log T)$ when $q = 2$ and $\tilde{O}(\sqrt{T})$ when $q \to \infty$, and thus interpolates between the bound over the strongly convex sets and non-curved feasible sets.
This is strictly better than the $O\prn*{T^{\frac{q-2}{q-1}}}$ bound in~\cite{kerdreux21projection}; their regret upper bound is better than $O(\sqrt{T})$ only when $q \in (2, 3)$.
  Notably, our bound matches the existing lower bound of~$\Omega\prn[\big]{T^{\frac{q-2}{2(q-1)}}}$ proven for a stochastic environment with $d = 2$~\cite[Theorem C.1]{bhaskara20online}.
For example, when $K$ is $\ell_p$-ball, 
by $\nrm{x}_2 \leq d^{\frac12 - \frac1p} \nrm{x}_p$ for any $x \in \R^d$ and $p > 2$, the regret is bounded as
$
  \Reg_T
  = 
  O \prn[\bigg]{ 
    \frac
      { \prn*{ \xi \Csc }^{\frac{p}{p-1}} d^{\frac{p-2}{2(p-1)}}}
      {
        \prn*{
        \nrm{ \nabla \fcirc(\xstar) }_\star
        }^{\frac{1}{p-1}}
      } 
      T^{\frac{p-2}{2(p-1)}}
      \prn*{ \log T }^{\frac{p}{2(p-1)}}
      + 
      \Csc' \log T
  }
  . 
$

It is worth noting that the result of \Cref{thm:main_uniform_cvx} corresponds to the result for sphere-enclosed sets when $q = 2$ and $\nrm{\cdot}$ is the Euclidean norm.
Additionally, Theorem 15 does not require the feasible set $K$ to be globally ``curved.''
In fact, the proof of Theorem 15 only uses the inequality $\inpr{\nabla f^\circ(\xstar), x - \xstar } \geq \frac{\kappa}{4} \nrm{ x - x_\star }^q \cdot \nrm{ \nabla f^\circ(\xstar) }_\star$ for all $x \in K$, which describes the local curvature around the optimal solution $\xstar$.

Before proving \Cref{thm:main_uniform_cvx}, we present the following lemma, which provides a characterization of uniformly convex sets.
This directly follows from the definition in \Cref{def:uniformly_convex}:
\begin{lemma}\label[lemma]{lem:unif_convex}
  Suppose that a convex body $K$ is $(\kappa, q)$-uniformly convex w.r.t.~a norm $\nrm{\cdot}$ for $\kappa > 0$ and $q \geq 2$.
  Let $y \in K$, $g \in \R^d$, and $y_\star \in \argmin_{y' \in K} \inpr{g, y'}$.
  Then,
  $
    \inpr{-g, y_\star - y} \geq \frac{\kappa}{4} \nrm{y - y_\star}^q \cdot \nrm{g}_\star
    .
  $
\end{lemma}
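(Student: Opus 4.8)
The plan is to unpack \Cref{def:uniformly_convex} at the two points $y$ and $y_\star$ and read off the stated gradient inequality. First I would apply the definition with the pair $(x, y) = (y, y_\star)$ and some convenient choice of $\theta \in [0,1]$; the most natural is $\theta = 1/2$, which gives
\begin{equation}
  \frac{y + y_\star}{2} + \frac{\kappa}{4}\, \nrm{y - y_\star}^q \cdot \ball_{\nrm{\cdot}} \subseteq K \per
  \n
\end{equation}
In particular, the point $z = \tfrac12(y+y_\star) + \tfrac{\kappa}{4}\nrm{y-y_\star}^q v$ lies in $K$ for every $v$ with $\nrm{v}\le 1$.

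Next I would use the optimality of $y_\star$: since $y_\star \in \argmin_{y'\in K}\inpr{g, y'}$, we have $\inpr{g, z} \ge \inpr{g, y_\star}$ for the point $z$ above. Expanding,
\begin{equation}
  \tfrac12 \inpr{g, y} + \tfrac12 \inpr{g, y_\star} + \tfrac{\kappa}{4}\nrm{y - y_\star}^q \inpr{g, v} \;\ge\; \inpr{g, y_\star}
  \com
  \n
\end{equation}
which rearranges to $\tfrac{\kappa}{4}\nrm{y-y_\star}^q \inpr{g, v} \ge \tfrac12 \inpr{g, y_\star - y}$, i.e.\ $\inpr{g, y - y_\star} \ge \tfrac{\kappa}{2}\nrm{y-y_\star}^q \inpr{-g, v}$ after sign bookkeeping. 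Now I would choose $v$ to be the (approximate) maximizer of $\inpr{-g, v}$ over $\ball_{\nrm{\cdot}}$, so that $\inpr{-g, v} = \nrm{g}_\star$ by the definition of the dual norm. Substituting and flipping signs once more yields $\inpr{-g, y_\star - y} \ge \tfrac{\kappa}{2}\nrm{y-y_\star}^q \nrm{g}_\star$, which is even a factor $2$ better than the claimed constant $\kappa/4$; the stated form with $\kappa/4$ follows a fortiori, and the slack presumably leaves room for the subdifferentiable case or a non-$\tfrac12$ choice of $\theta$.

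I do not anticipate a genuine obstacle here — the lemma is essentially a restatement of the definition combined with a first-order optimality condition. The only points requiring a little care are: (i) making sure the dual-norm supremum $\nrm{g}_\star = \sup\{\inpr{-g, v} : \nrm{v}\le 1\}$ is attained (it is, since $\ball_{\nrm{\cdot}}$ is compact in finite dimensions), and (ii) handling the degenerate case $y = y_\star$, where both sides vanish and the inequality is trivial. If one wants exactly the constant $\kappa/4$ rather than $\kappa/2$, it suffices to observe that the bound above is stronger, so no separate argument is needed.
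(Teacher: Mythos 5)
Your proof is correct and follows essentially the same route as the paper's: apply the uniform-convexity inclusion at the midpoint $\theta = \tfrac12$, invoke the linear optimality of $y_\star$ at the perturbed point, and pick the test direction realizing the dual norm. You even obtain the sharper constant $\kappa/2$ because you use the full radius $\theta(1-\theta)\kappa = \kappa/4$ available from \Cref{def:uniformly_convex} (the paper's proof only invokes the weaker inclusion with radius $\kappa/8$), and your choice of $v$ as the maximizer of $\inpr{-g, v}$ over $\ball_{\nrm{\cdot}}$ is the right one to produce $\nrm{g}_\star$ exactly, whereas the paper's choice $z = -g/\nrm{g}$ strictly speaking only yields $\nrm{g}_2^2/\nrm{g} \leq \nrm{g}_\star$.
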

The proof can be found in~\cite[Lemma 2.1]{kerdreux21linear}, and we include the proof in \Cref{app:unif_convex} for completeness.
\begin{proof}[Proof of \Cref{thm:main_uniform_cvx}]
From $\xstar = \argmin_{x \in K} \fcirc(x)$  and the first-order optimality condition,
$ \inpr{\nabla \fcirc(\xstar), x - \xstar} \geq 0 $ for all $x \in K$,
which implies that $\xstar = \argmin_{x \in K} \inpr{\nabla \fcirc(\xstar), x}$.
Hence, by combining this with \Cref{lem:unif_convex}
and that $K$ is $(\kappa, q)$-uniformly convex w.r.t.~a norm $\nrm{\cdot}$, we have
$
  \inpr{ \nabla \fcirc(\xstar), x - \xstar}
  \geq 
  \frac{\kappa}{4} \nrm{x - \xstar}^q \cdot \nrm{\nabla \fcirc(\xstar)}_\star
$
for all $x \in K$.
Using this inequality,
\begin{align}
  &
  \Reg_T 
  \geq
  \Expect{ 
    \sumT \inpr{\nabla \fcirc(\xstar), x_t - \xstar} 
  }
  \geq
  \frac{\kappa}{4} \nrm{\nabla \fcirc(\xstar)}_\star
  \,
  \Expect{
    \sumT 
    \nrm{x - \xstar}^q 
  }
  \nn 
  &
  \geq 
  \frac{\kappa}{4 \xi^q} \nrm{\nabla \fcirc(\xstar)}_\star
  \Expect{
    \sumT 
    \nrm{x - \xstar}_2^q 
  }
  \geq 
  \frac{\kappa}{4 \xi^q} \nrm{\nabla \fcirc(\xstar)}_\star
  \,
  T^{1 - \frac{q}{2}}
  \,
  \prn*{ 
  \Expect{
    \sumT 
    \nrm{x - \xstar}_2^2
  }
  }^{q/2}
  \com 
  \label{eq:unif_lb}
\end{align}
where the first inequality follows from the convexity of $\fcirc$, 
the second inequality by $\nrm{x}_2 \leq \xi \nrm{x}$ for any $x \in \R^d$,
and the last inequality by Jensen's inequality and the fact that $x \mapsto x^{q/2}$ is convex for $q \geq 2$.
Combining~\eqref{eq:uol_upper_bound} and~\eqref{eq:unif_lb},
we can bound the regret as
\begin{align}
  &
  \Reg_T
  = 
  2 \Reg_T - \Reg_T
  \nn
  &= 
  O \prn*{ 
    \Csc 
    \sqrt{
      \Expect{ \sumT \nrm{x_t - \xstar}_2^2 } \log T
    }
    + 
    \Csc' \log T 
  }
  -
  \frac{\kappa}{4 \xi^q} \nrm{\nabla \fcirc(\xstar)}_\star
  \,
  T^{1 - \frac{q}{2}}
  \prn*{ 
  \Expect{
    \sumT 
    \nrm{x - \xstar}_2^2
  }
  }^{q/2}
  \nn
  &= 
  O \prn*{ 
    \frac
      { \prn*{ \xi \Csc }^{\frac{q}{q-1}} }
      {
        \prn*{
        q
        \kappa
        \nrm{ \nabla \fcirc(\xstar) }_\star
        }^{\frac{1}{q-1}}
      } 
      T^{\frac{q-2}{2(q-1)}}
      \prn*{ \log T }^{\frac{q}{2(q-1)}}
      + 
      \Csc' \log T
  } 
  \com 
  \n
\end{align}
where in the last line we used the inequality 
$ a \sqrt{x} - b x^{q/2} \leq {a^{\frac{q}{q-1}}}\big/{(q b)^{\frac{1}{q-1}}}$ that holds for $a, b, x > 0$ and $q \geq 2$.
This completes the proof.
\end{proof}

The above analysis can be extended to corrupted stochastic environments in a straightforward manner:
\begin{theorem}\label[theorem]{thm:main_uniform_cvx_corrupt}
  Consider online convex optimization in corrupted stochastic environments with corruption level $C$,
  where the optimal decision is $\xtilstar = \argmin_{x \in K} \ftilcirc(x)$.
  Suppose that $K$ is $(\kappa, q)$-uniformly convex w.r.t.~a norm $\nrm{\cdot}$ for some $q \geq 2$
  and that $\nabla \ftilcirc(\xtilstar) \neq 0$.
  Then, any algorithm with the bound~\eqref{eq:uol_upper_bound} achieves
  \begin{equation}
    \Reg_T 
    = 
    O\prn*{ 
      \calR + C^{\frac{1}{q}} \, \calR^{\frac{q-1}{q}} + \Csc' \log T
    }
    \quad 
    \mbox{for} 
    \quad
    \calR
    =
    \frac
    { \prn*{ \xi \Csc }^{\frac{q}{q-1}} }
    {
      \prn[\big]{
      q
      \kappa
      \nrm{ \nabla \ftilcirc(\xtilstar) }_\star
      }^{\frac{1}{q-1}}
    }
    T^{\frac{q-2}{2(q-1)}}
    \prn*{ \log T }^{\frac{q}{2(q-1)}}
    \per
    \n
  \end{equation}
  When $K$ is $\lambda$-strongly convex w.r.t.~$\nrm{\cdot}$,
  $
    \Reg_T
    = 
    O \prn*{ 
      \frac
        { \xi \Csc }
        {
          \lambda
          \nrm{ \nabla \ftilcirc(\xtilstar) }_\star
        } 
      \log T
      +
      \sqrt{
        \frac
        { C \xi \Csc }
        {
          \lambda
          \nrm{ \nabla \ftilcirc(\xtilstar) }_\star
        } 
      \log T
      }
      + 
      \Csc' \log T
    }
    \per 
  $  
\end{theorem}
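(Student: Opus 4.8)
The plan is to run the argument behind \Cref{thm:main_uniform_cvx} on a ``clean'' pseudo-regret and then absorb the corruption through a self-bounding step, paralleling how \Cref{thm:main_corruption} extends \Cref{thm:main}. Write $\xstar = \argmin_{x\in K}\E[\sum_t f_t(x)]$ for the true comparator, $\xtilstar = \argmin_{x\in K}\ftilcirc(x)$, and set $Q = \E[\sum_t \nrm{x_t - \xtilstar}_2^2]$, $Q' = \E[\sum_t \nrm{x_t - \xstar}_2^2]$, and $b = \tfrac{\kappa}{4\xi^q}\nrm{\nabla\ftilcirc(\xtilstar)}_\star T^{1-q/2}$, so that $\calR = \Theta\bigl((\xi\Csc\sqrt{\log T})^{q/(q-1)} b^{-1/(q-1)}\bigr)$ and $\calR^{(q-1)/q} = \Theta\bigl(\xi\Csc\sqrt{\log T}\, b^{-1/q}\bigr)$. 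The first step is to bound the regret against the \emph{true} $\xstar$: since~\eqref{eq:uol_upper_bound} holds for any fixed comparator in $K$ (and $\xstar,\xtilstar$ are deterministic here, the distributions being fixed), convexity of $f_t$ gives $\Reg_T \le \E[\sum_t \inpr{\nabla f_t(x_t), x_t - \xstar}] = O(\Csc\sqrt{Q'\log T} + \Csc'\log T)$ with \emph{no} corruption term, precisely because the comparator is $\xstar$. I then split $Q' \le 2Q + 2T\nrm{\xstar - \xtilstar}_2^2$.

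Second, I would control the shift $\nrm{\xstar - \xtilstar}$. From $|\E[f_t(x)] - \ftilcirc(x)| \le \E[\nrm{f_t - \tilde f_t}_\infty]$ for each fixed $x$, summing over $t$ gives $|\E[\sum_t f_t(x)] - T\ftilcirc(x)| \le C$ uniformly in $x$, hence $\ftilcirc(\xstar) - \ftilcirc(\xtilstar) \le 2C/T$. On the other hand, first-order optimality of $\xtilstar$ makes it the constrained minimizer of $\inpr{\nabla\ftilcirc(\xtilstar), \cdot}$ over $K$, so \Cref{lem:unif_convex} together with convexity of $\ftilcirc$ yields $\ftilcirc(\xstar) - \ftilcirc(\xtilstar) \ge \tfrac{\kappa}{4}\nrm{\xstar - \xtilstar}^q \nrm{\nabla\ftilcirc(\xtilstar)}_\star$. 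Combining the two and using $\nrm{\cdot}_2 \le \xi\nrm{\cdot}$ gives $T\nrm{\xstar-\xtilstar}_2^2 = O\bigl(\xi^2(C/(\kappa\nrm{\nabla\ftilcirc(\xtilstar)}_\star))^{2/q} T^{1-2/q}\bigr)$, and a one-line computation turns this into $\Csc\sqrt{T\nrm{\xstar-\xtilstar}_2^2\log T} = O(C^{1/q}\calR^{(q-1)/q})$.

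Third, it remains to bound $Q$, which I would do by self-bounding the clean pseudo-regret $\hat R := \E[\sum_t (\ftilcirc(x_t) - \ftilcirc(\xtilstar))]$. Exactly as in \Cref{thm:main_uniform_cvx} (first-order optimality of $\xtilstar$, \Cref{lem:unif_convex}, $\nrm{\cdot}_2 \le \xi\nrm{\cdot}$, and Jensen), $\hat R \ge b Q^{q/2}$. For the matching upper bound, $\hat R = \E[\sum_t(\tilde f_t(x_t) - \tilde f_t(\xtilstar))] \le \E[\sum_t(f_t(x_t) - f_t(\xtilstar))] + 2C \le \E[\sum_t \inpr{\nabla f_t(x_t), x_t - \xtilstar}] + 2C = O(\Csc\sqrt{Q\log T} + \Csc'\log T + C)$, invoking~\eqref{eq:uol_upper_bound} with comparator $\xtilstar$. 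The scalar inequality $bQ^{q/2} = O(\Csc\sqrt{Q\log T} + \Csc'\log T + C)$, solved in $\sqrt Q$, gives $\sqrt Q = O\bigl((\Csc\sqrt{\log T}/b)^{1/(q-1)} + (\Csc'\log T/b)^{1/q} + (C/b)^{1/q}\bigr)$; plugging this back into $\Csc\sqrt{Q\log T}$ and applying the weighted AM--GM bound $u^{1/q} v^{(q-1)/q} \le \tfrac1q u + \tfrac{q-1}{q} v$ to the $\Csc'\log T$ cross-term yields $\Csc\sqrt{Q\log T} = O(\calR + \Csc'\log T + C^{1/q}\calR^{(q-1)/q})$. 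Substituting this and the shift bound into the first step gives $\Reg_T = O(\calR + C^{1/q}\calR^{(q-1)/q} + \Csc'\log T)$, and the $q = 2$ specialization is the strongly convex bound.

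I expect the crux to be exactly the manoeuvre that keeps the corruption out of the additive $\Theta(C)$ regime: one compares the algorithm against the \emph{true} $\xstar$ (so that step is corruption-free) and controls the shift $\nrm{\xstar - \xtilstar}$ separately through the curvature of $\ftilcirc$ around $\xtilstar$; the tempting shortcut $\Reg_T \le \hat R + 2C$ would leave a stray $+\Theta(C)$ that is vacuous once $C$ is of order $T$, whereas $C^{1/q}\calR^{(q-1)/q}$ still interpolates down to $\tilde O(\sqrt T)$. The remaining care-points are routine bookkeeping: applying~\eqref{eq:uol_upper_bound} with the two deterministic comparators $\xstar$ and $\xtilstar$ rather than the running iterate's argmin, resolving the scalar self-bounding inequality with all three dominating terms on its right-hand side, and — if one wants the bound to remain meaningful for very large $C$ — capping $\nrm{\xstar-\xtilstar}_2$ by the diameter $D$; the $q = 2$ version of all this is precisely what \Cref{thm:main_corruption} carries out.
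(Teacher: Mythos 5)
Your proposal is correct, but it follows a genuinely different route from the paper's. The paper's proof of \Cref{thm:main_uniform_cvx_corrupt} is a short weighted self-bounding argument: it first shows the lower bound $\Reg_T \geq \frac{\kappa}{4\xi^q}\nrm{\nabla\ftilcirc(\xtilstar)}_\star T^{1-q/2}\prn[\big]{\E\brk[\big]{\sumT\nrm{x_t-\xtilstar}_2^2}}^{q/2} - 2C$ (the analogue of \eqref{eq:unif_lb} combined with the $-2C$ step from \Cref{thm:main_corruption}), then writes $\Reg_T = (1+\lambda)\Reg_T - \lambda\Reg_T$ for a tunable $\lambda\in(0,1]$, applies \eqref{eq:uol_upper_bound} to the first copy and the curvature lower bound to the second, uses $a\sqrt{x}-bx^{q/2}\leq a^{\frac{q}{q-1}}/(qb)^{\frac{1}{q-1}}$, and finally sets $\lambda=\prn{\calR/(C+\calR)}^{\frac{q-1}{q}}$. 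The corruption therefore enters only as $2\lambda C=O(C^{1/q}\calR^{(q-1)/q})$ --- this $\lambda$-weighting, not your comparator-shift decomposition, is how the paper dodges the stray additive $\Theta(C)$ you flag. Your proof instead keeps the upper bound corruption-free by comparing against the true $\xstar$, splits off the shift $T\nrm{\xstar-\xtilstar}_2^2$, controls it by sandwiching $\ftilcirc(\xstar)-\ftilcirc(\xtilstar)$ between $2C/T$ and $\frac{\kappa}{4}\nrm{\xstar-\xtilstar}^q\nrm{\nabla\ftilcirc(\xtilstar)}_\star$ via \Cref{lem:unif_convex}, and self-bounds the clean pseudo-regret against $\xtilstar$. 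This costs more bookkeeping but buys genuine precision: the paper's displays pass freely between $\xstar$ and $\xtilstar$ (the curvature inequality holds around $\xtilstar$, while \eqref{eq:uol_upper_bound} is effectively invoked for $\xstar$), and patching that rigorously requires either your shift bound or an extra additive $2C$ arising from $\E\brk[\big]{\sumT(f_t(\xtilstar)-f_t(\xstar))}\leq 2C$ when switching comparators; your version resolves this cleanly. Two cosmetic points: the parenthetical $\calR=\Theta\prn[\big]{(\xi\Csc\sqrt{\log T})^{q/(q-1)}b^{-1/(q-1)}}$ double-counts a power of $\xi$ (the factor $\xi^{q}$ already sits inside your $b$), and, as you note yourself, $\nrm{\xstar-\xtilstar}_2$ should be capped at $D$ for very large $C$; neither affects the structure of the argument or the final bound.
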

The proof can be found in \Cref{app:unif_convex_corruption}.
When $q = 2$, the dependence on the corruption level $C$ is the same as that in \Cref{thm:main_corruption}.

\section{Conclusion}\label{sec:conclusion}
In this work, we introduce a new curvature condition for achieving fast rates in online convex optimization.
Under this condition, we developed a new analysis to achieve fast rates by exploiting the curvature of feasible sets.
In particular, by the algorithm adaptive to the curvature of loss functions, we proved an $O(\rho \log T)$ regret bound for $(\rho, \xstar, \fcirc)$-sphere enclosed feasible sets.
There are several advantages of our approach: it can exploit the curvature of loss functions, can achieve the $O(\log T)$ regret bound only with local curvature properties, and can work robustly even in environments where loss vectors do not satisfy the ideal conditions.
Notably, following a similar analysis, we proved the matching regret upper bound for uniformly convex feasible sets, which include strongly convex sets and $\ell_p$-balls for $p \in [2, \infty)$ as special cases.
This regret bound interpolates the $O(\log T)$ regret over strongly convex sets and the $O(\sqrt{T})$ regret over non-curved sets.


\begin{ack}
The authors would like to express their gratitude to Taiji Suzuki for the insightful discussions that led to the idea of exploiting the curvature of feasible sets in online learning.
The authors would also like to grateful to the anonymous reviewers for their insightful feedback and constructive suggestions, which have helped to significantly improve the manuscript.
TT was supported by JST ACT-X Grant Number JPMJAX210E and JSPS KAKENHI Grant Number JP24K23852.
\end{ack}


\bibliography{ref.bib}


\newpage 
\appendix

\section{Additional related work}\label{app:additional_related_work}
This appendix discusses the additional related work.

\paragraph{Fast rates on strongly or uniformly convex sets}
Exploiting the curvature of feasible sets has been considered in OLO.
In addition to the previously discussed studies~\citep{huang17following,molinaro22strong,kerdreux21projection},
in online learning with a hint, where a context $m_t$ satisfying $m_t^\top x_t \geq c \nrm{x_t}_2^2$ is given every round, we can achieve an $O(\frac{1}{c} \log T)$ regret~\citep{dekel17online}, which was further extended in \cite{bhaskara20online} with the lower bound for uniformly convex sets.
The curvature was also exploited for
improving a regret upper bound and reducing the number of linear optimization oracle calls
  in constructing Frank-Wolfe-based algorithms~\citep{wan21projection,mhammedi22exploiting}.
Beyond the scope of online learning, exploiting the curvature of feasible sets has been investigated also in offline optimization, where the goal is to solve $\min_{x \in K} f(x)$ for a given smooth convex function~$f$~\citep{levitin66constrained,demyanov70approximate,dunn79rates,garber15faster,kerdreux21projection}.

\paragraph{Fast rates on curved loss functions}
One classical and seminal work to exploit the curvature of loss functions is by~\citet{hazan07logarithmic}.
Our approach is based on the results of universal online learning,
the motivation of which is to be adaptive to parameters of loss curvature without knowing them. 
This line of investigation was initiated by~\citet{erven16metagrad,erven21metagrad}, who establish an algorithm, MetaGrad, that achieves an $O(\frac{d}{\beta} \log T)$ regret bound if loss functions are $\beta$-exp-concave and an $O(\sqrt{T \log\log T})$ bound otherwise, without knowing the curvature of loss functions.
The underlying idea is to run several experts in parallel with different curvature parameters, and then another expert algorithm integrates their results to choose decisions.
This algorithm was later extended to achieve an $O(\frac{1}{\alpha}\log T)$ regret bound when the loss functions are $\alpha$-strongly convex~\citep{wang20adaptivity}.
Roughly speaking, this was made possible by considering MetaGrad with additional experts of OGD with a learning rate of $\Theta(1/t)$.
They provided a bound of $\sumT \inpr{\nabla f_t(x_t), x_t - \xstar} = O(G \sqrt{ \sumT \nrm{x_t - \xstar}_2^2 \log T} + G D \log T)$ for $D = \max_{x, y \in K} \nrm{x - y}_2$.
This universal online learning framework has been extended to a simpler form \citep{zhang22simple} and to a form with the path-length bound \citep{yan23universal}.
It is worth noting that these algorithms are efficient since the number of experts is at most $O(\log T)$.

\paragraph{Intermediate environments in OCO}
The corrupted stochastic environments considered in this paper are similar to the formulations investigated in the context of expert problems and multi-armed bandit problems~\citep{lykouris2018stochastic,ito21optimal}, and this environment is also referred to as stochastic environments with adversarial corruptions.
\citet{sachs22between,sachs23accelerated} considered a more general environment, the stochastically extended adversary (SEA) model.
It would be important future work to extend our results to the SEA model.
Note that they do not consider the curvature of feasible sets.

\section{Proof of \Cref{prop:ellipsoid_mes}}\label{app:ellipsoid_mes}
\begin{proof}  
Since $K$ is $W_\lambda$ and $\xstar = (0, -\lambda)$,
the optimization problem we need to solve is formulated as follows:
\begin{equation}\label{eq:opt_1}
  \minimize_{r > 0 , \, c > 0} \quad r^2
  \quad
  \mbox{subject to}
  \quad
  \sup_{(x,y) \in K}
  \nrm*{
    \begin{pmatrix}
      x \\
      y
    \end{pmatrix}
    - 
    \begin{pmatrix}
      0 \\
      c
    \end{pmatrix}
  }^2 \leq r^2 \com \;
  \text{$\mathbb{B}((0,c), r)$ is tangent to $\xstar$}
  \per
\end{equation}
From geometric observations, we have $c - r = - \lambda$.
Hence, the optimization problem~\eqref{eq:opt_1} can be rewritten as
\begin{equation}\label{eq:opt_2}
  \minimize_{c \in \R} \quad (c + \lambda)^2
  \quad
  \mbox{subject to}
  \quad
  \sup_{(x,y) \in K}
  \set{ x^2 + y^2 - 2 c y} \leq 2 c \lambda + \lambda^2
  \per
\end{equation}

In the following, to make the constraint in the optimization problem~\eqref{eq:opt_2} simpler,
we consider the following optimization problem:
\begin{equation}\label{eq:opt_3}
  \maximize_{(x,y) \in \R^2} \quad
  { x^2 + y^2 - 2 c y} 
  \quad
  \mbox{subject to}
  \quad
  x^2 + \frac{y^2}{\lambda^2} \leq 1
  \per 
  \n
\end{equation}
By the standard method of Lagrange multiplier, one can compute that the optimal value of this optimization problem is 
$
\max\set*{ 
  - 2 \lambda c + \lambda^2, 
  2 \lambda c + \lambda^2,
  \frac{1}{\frac{1}{\lambda^2} - 1} c^2 + 1
}
\per 
$
Since the inequality
\begin{equation}
  \max\set*{ 
    - 2 \lambda c + \lambda^2, 
    2 \lambda c + \lambda^2,
    \frac{1}{\frac{1}{\lambda^2} - 1} c^2 + 1
  }
  \leq 
  2 \lambda c + \lambda^2
  \n
\end{equation}
only holds when $c = \frac{1 - \lambda^2}{\lambda}$,
the feasible set of the optimization problem~\eqref{eq:opt_2} is singleton set $\set*{\frac{1 - \lambda^2}{\lambda}}$.
Combining this fact with $c - r = -\lambda$, we get the desired result.
\end{proof}

\section{Discussion when feasible set is polytope}\label{app:polytope}
Here we discuss the pros/cons of our regret bound against an existing bound when feasible set $K$ is a polytope.
The results mentioned in Section~\ref{sec:upper_bounds} mainly focus on the case where the feasible set is curved.
However, as one can see from the definition of the sphere-enclosedness, even if the feasible set is polytope or does not have the curvature, 
a regret upper bound better than $O(\sqrt{T})$ can be achieved.

In the existing study, the following upper bound is known in stochastic OCO over polytope~\citep[Corollary 11]{huang17following}.
\begin{theorem}\label{thm:constant_polytope}
  Consider online online linear optimization in stochastic environments with $g^\circ = \Expect{g_t}$. 
  Assume that $K$ is polytope and $\nrm{g_t}_\infty \leq M$.
  Further assume that there exsits $r > 0$ such that $\Phi(\cdot) = \max_{x \in K} \inpr{x, \cdot}$ is differentiable for any $\nu$ such that $\nrm{\nu - g^\circ} \leq r$.
  Then, the regret of FTL is bounded by
  $
  \Reg_T 
  =
  O \prn*{
    \frac{M^3 D}{r^2} \log \prn*{\frac{M^2 d}{r^2}}
  }
  .
  $
\end{theorem}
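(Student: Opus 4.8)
The plan (essentially the analysis of \citet{huang17following}) is to combine the standard follow-the-leader (FTL) regret bound with the observation that, over a polytope $K$, the FTL iterate is \emph{frozen} at the optimal vertex as soon as the empirical mean of the loss vectors enters the region where the support function of $K$ is differentiable. First I would record the shape of the iterate: since $f_t(\cdot)=\inpr{g_t,\cdot}$, FTL plays $x_t\in\argmin_{x\in K}\inpr{\sum_{s<t}g_s,\,x}$, which by positive homogeneity of linear optimization over $K$ equals $\argmin_{x\in K}\inpr{\bar g_{t-1},\,x}$ with the empirical mean $\bar g_{t-1}:=\tfrac1{t-1}\sum_{s=1}^{t-1}g_s$; this minimizer is unique and is a vertex of $K$ exactly when the support function $\Phi(\nu)=\max_{x\in K}\inpr{x,\nu}$ is differentiable at the relevant direction, because the argmin/argmax set is precisely the face of $K$ exposed by that direction. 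The comparator is $\xstar=\argmin_{x\in K}\inpr{g^\circ,x}$, the vertex exposed by $g^\circ$.

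The geometric heart of the argument is the claim that \emph{if $\Phi$ is differentiable at every $\nu$ with $\nrm{\nu-g^\circ}\le r$, then all such $\nu$ expose the same single vertex $\xstar$} (for the minimization version of the statement one reads the hypothesis at $-g^\circ$; this harmless sign distinction I will not belabor). This holds because $\Phi$ is piecewise linear with exposed-face map $\nu\mapsto\argmax_{x\in K}\inpr{x,\nu}$, the set of $\nu$ at which $\Phi$ fails to be differentiable equals the union of the normal cones of the positive-dimensional faces of $K$, i.e.\ the union of the boundaries of the (full-dimensional) vertex normal cones, and these cones tile $\R^d$. A ball disjoint from that set is connected, hence contained in the interior of a single vertex normal cone, on which the exposed-face map is the constant vertex; since $g^\circ$ lies in the ball, that vertex is $\xstar$. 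Consequently, whenever $\bar g_{t-1}$ lands in $\ball(g^\circ,r)$ we have $x_t=\xstar$ and the instantaneous regret $\inpr{g_t,x_t-\xstar}$ is $0$, while on the remaining rounds it is at most $O(MD)$ by bounding $\inpr{g_t,x_t-\xstar}$ with the product of the relevant norms of $g_t$ and of $x_t-\xstar\in K-K$. (Equivalently one may route this through the be-the-leader lemma $\Reg_T\le\sum_t\inpr{g_t,x_t-x_{t+1}}$, whose summand vanishes once $\bar g_{t-1}$ and $\bar g_t$ both lie in the ball.)

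It then remains to count the ``bad'' rounds. Since the coordinates of each $g_t$ lie in $[-M,M]$ and $g_1,\dots,g_T$ are i.i.d.\ with mean $g^\circ$, Hoeffding's inequality together with a union bound over the $d$ coordinates gives $\P[\bar g_{t-1}\notin\ball(g^\circ,r)]\le 2d\exp(-c\,(t-1)r^2/M^2)$ for an absolute constant $c$. Combining with the per-round bounds,
\begin{equation}
  \Reg_T \;\le\; O(MD)\sum_{t=1}^{T}\P\!\left[\bar g_{t-1}\notin\ball(g^\circ,r)\right] + O(MD) \;\le\; O(MD)\sum_{t=1}^{T}2d\,e^{-c(t-1)r^2/M^2} + O(MD),
\end{equation}
the last term covering $t=1$. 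Splitting the geometric-type sum at $t_0\asymp\frac{M^2}{r^2}\log\frac{M^2 d}{r^2}$ makes the first $t_0$ terms contribute $O(MD\,t_0)$ and the tail $t>t_0$ contribute only $O(MD)$, giving $\Reg_T=O\!\big(\tfrac{M^3 D}{r^2}\log\tfrac{M^2 d}{r^2}\big)$ after the dimensional factors are absorbed in accordance with the norm conventions under which the bound is stated.

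The step I expect to be the main obstacle is the geometric claim: proving carefully, via the normal-fan structure of the polytope and a connectedness argument, that differentiability of $\Phi$ on an entire ball forces the exposed vertex to be globally constant there, and getting the sign conventions relating $\Phi$, the linear-minimization oracle, and the ball around $g^\circ$ exactly right. The concentration and the summation are routine, but the split point $t_0$ and the norm bookkeeping must be arranged so that the ambient dimension $d$ enters only inside the logarithm.
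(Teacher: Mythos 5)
The paper does not prove \Cref{thm:constant_polytope} at all: it is quoted verbatim as an existing result, namely \cite[Corollary~11]{huang17following}, and \Cref{app:polytope} only uses the statement for comparison purposes. So there is no in-paper proof to compare against; your write-up should be judged against the cited source, and it is in fact a faithful reconstruction of that argument. The three ingredients you identify --- (i) FTL plays the unique vertex exposed by the empirical mean direction, (ii) differentiability of the support function on a ball forces, via the normal fan and connectedness, that the whole ball exposes the single vertex $\xstar$, so the instantaneous regret vanishes once $\bar g_{t-1}$ enters the ball, and (iii) Hoeffding plus a tail sum split at $t_0 \asymp (M^2/r^2)\log(M^2 d/r^2)$ --- are exactly the mechanism behind the cited bound, and your handling of the expectation (conditioning on $x_t$ being $\calF_{t-1}$-measurable so that $\E_t\brk{\inpr{g_t, x_t - \xstar}} = \inpr{g^\circ, x_t - \xstar} \geq 0$) is correct. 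The only loose ends are the ones you already flag yourself: the sign convention relating the max-form $\Phi$ to the minimization comparator, and which norm the radius-$r$ ball is taken in (this determines whether $d$ lands inside or outside the logarithm after the union bound); both are bookkeeping issues inherited from the statement rather than gaps in your argument.
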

The comparison of this bound with our regret upper bound is not straightforward.
If $\nabla f^\circ (\xstar)$ is not toward the ``unfavorable'' direction in $K$, then it is trivial that polytope $K$ is $(\rho, \xstar, \fcirc)$-sphere-enclosed for some $\rho$, 
and thus the regret bound in \Cref{thm:main} can be achieved.
When $T$ is large enough, the bound in \Cref{thm:constant_polytope} is better since it does not depend on $T$.
However, their regret upper bound depends on $1/r^2$ and $M^3$, and the relation between them and $1/\nrm{\nabla \fcirc(x_t)}_2$ is unclear, and our bound can be smaller than their bound.
Note that the ``unfavorable'' direction coincides between these upper bounds when OLO is considered.

While the direct comparison is not straightforward, we would like to emphasize that our regret upper bound, in contrast to their bound, is obtained as a corollary of the general analysis, and our bound inherits all the advantages discussed in \Cref{sec:upper_bounds}. 
In particular, while their bound is valid only in stochastic environments,
our regret guarantee is valid in stochastic, adversarial, and corrupted stochastic environments.

\section{Proof of \Cref{thm:main_corruption}}\label{app:corruption}
\begin{proof}
  Recalling that
  $\xtilstar = \argmin_{x \in K} \ftilcirc(x)$,
  we can bound the regret from below as
  \begin{align}
    \Reg_T 
    &=
    \max_{x \in K}
    \Expect{ \sumT \prn*{ f_t(x_t) - f_t(x)} }
    \nn 
    &=
    \max_{x \in K}
    \set[\Bigg]{
    \Expect{ \sumT \prn*{ \tilde{f}_t(x_t) - \tilde{f}_t(x)} }
    +
    \Expect{ \sumT \prn*{ \prn*{ f_t(x_t) - \tilde{f}_t(x_t) }  - \prn*{ f_t(x) - \tilde{f}_t(x) } } }
    }
    \nn 
    &\geq
    \max_{x \in K}
    \Expect{ \sumT \prn*{ \tilde{f}_t(x_t) - \tilde{f}_t(x)} }
    - 2 C
    \per 
    \n
  \end{align}
  The first term in the last inequality is further bounded from below as
  \begin{align}
    &
    \max_{x \in K}
    \Expect{ \sumT \prn*{ \tilde{f}_t(x_t) - \tilde{f}_t(x)} }
    \geq
    \Expect{ \sumT \prn*{ \ftilcirc(x_t) - \ftilcirc(\xstar)} }
    \nn 
    &
    \geq
    \Expect{ 
      \sumT \inpr{\nabla \ftilcirc(\xtilstar), x_t - \xstar} 
    }
    \geq
    \Expect{ 
      \sumT \gamtilstar \nrm{x_t - \xstar}_2^2
    }
    \com 
    \n
  \end{align}
  where the first inequality follows by the definition of $\xtilstar$ and the last inequality follows by $x_t \in K \subseteq \tilde{B}_{\gamst}^K$.
  Combining the above inequalities with~\eqref{eq:uol_upper_bound}, we have 
  $
    \Reg_T 
    =
    O\prn*{
      \Csc \sqrt{\frac{\Reg_T + C}{\gamtilstar} \log T}
      +
      \Csc' \log T
    }
    \per 
  $
  Solving this inequation and following the similar analysis as the proof of \Cref{thm:main} complete the proof.
\end{proof}

\section{Proof of \Cref{thm:main_sim_curvature}}\label{app:sim_curvature}

\begin{proof}
  Following the same argument as in the proof of \Cref{thm:main}, we have
  \begin{equation}
    \Reg_T 
    = 
    2 \Reg_T - \Reg_T
    \leq 
    2 \Reg_T - \Expect{\sumT \gamst \nrm{x_t - \xstar}_2^2} 
    \per 
    \label{eq:loss_curve_1}
  \end{equation}
  From the strong convexity of $f_t$, we also have
  \begin{align}
    \Reg_T 
    &\leq
    \Expect{ 
      \sumT 
      \prn*{
      \inpr{\nabla f_t(x_t), x_t - \xstar} 
      -
      \frac{\alpha}{2}
      \nrm{x_t - \xstar}^2
      }
    }
    \nn
    &\leq
    \Expect{ 
      \sumT 
      \prn*{
      \inpr{\nabla f_t(x_t), x_t - \xstar} 
      -
      \frac{\alpha}{2 \xi}
      \nrm{x_t - \xstar}_2^2
      }
    }
    \per 
    \label{eq:loss_curve_2}
  \end{align}
  Plugging \eqref{eq:loss_curve_2} in \eqref{eq:loss_curve_1} and from \Cref{lem:universal_ol} and Jensen's inequality,
  \begin{align}
    \Reg_T
    &
    = 
    O \prn*{ 
      \Csc \sqrt{\E\brk*{ \sumT \nrm{x_t - \xstar}_2^2 } \log T} 
      + 
      \Csc' \log T
    }
    -
    \prn*{\frac{\alpha}{2\xi} + \gamst}
    \Expect{
    \sumT
    \nrm{x_t - \xstar}_2^2
    }
    \nn 
    &
    =
    O\prn*{
      \frac{\Csc^2}{\alpha / \xi + \gamst} \log T
      +
      \Csc' \log T
    }
    \com 
    \n
  \end{align}
  which completes the proof for the strongly convex loss functions.
  
  Next, we consider the case where $f_t$'s are exp-concave.
  By \cite[Lemma 3]{hazan07logarithmic}, the $G$-Lipschitzness and $\beta$-exp-concavity of $f_t$ implies
  \begin{equation}
    f_t(\xstar)
    \geq 
    f_t(x_t) 
    +
    \inpr{\nabla f_t(x_t), \xstar - x_t}
    +
    \frac{\beta'}{2} 
    \prn*{\inpr{\nabla f_t(x_t), x_t - \xstar}}^2
  \n
  \end{equation}
  for $\beta' \leq \frac12 \min\set*{\frac{1}{4 G D}, \beta}$.
  Using this and \Cref{lem:universal_ol} to follow a similar argument as the strongly-convex case, we can bound the regret as
  \begin{align}
    \Reg_T
    &= 
    O \prn[\Bigg]{ 
      \sqrt{
        \min \set*{
          \Csc^2 \Expect{\sumT \nrm{x_t - \xstar}_2^2} 
          \com \,
          \Cec^2 \Expect{\sumT \prn*{\inpr{\nabla f_t(x_t), x_t - \xstar}}^2 }
        }
        \log T
      } 
      \nn 
      &\qquad
      \quad
      +
      \max\set*{\Csc', \Cec'} \log T
    }
    -
    \gamst
    \Expect{
    \sumT
    \nrm{x_t - \xstar}_2^2
    }
    -
    \beta'
    \Expect{
    \sumT
    \prn*{\inpr{\nabla f_t(x_t), x_t - \xstar}}^2
    }
    \nn 
    &=
    O\prn*{
      \min \set*{
        \frac{\Csc^2}{\gamst}
        \com
        \frac{\Cec^2}{\beta' + \gamst / G^2}
      }
      +
      \max\set*{\Csc', \Cec'} \log T
    }
    \com 
    \n
  \end{align}
  where in the last inequality we used
  $ \nrm{x_t - \xstar}_2^2 \geq \frac{1}{G^2} \prn*{\nabla f_t(x_t)^\top (x_t - \xstar)}^2 $ that holds by the Cauchy--Schwarz inequality.
\end{proof}

\section{Proof of \Cref{lem:unif_convex}}\label{app:unif_convex}
\begin{proof}
  Since $K$ is $(\kappa, q)$-uniformly convex w.r.t.~norm $\|\cdot\|$,
  \begin{equation}
    \frac{y + y_\star}{2}
    +
    \frac{\kappa}{8} \nrm{y - y_\star}^q \cdot \ball_{\|\cdot\|} \subseteq K.
    \n
  \end{equation}
  Hence, for any $z \in \ball_{\|\cdot\|}$,
  the definition of $y_\star$ implies that 
  \begin{equation}
    \inpr{g, y_\star}
    \leq 
    \inpr*{g, \frac{y + y_\star}{2} + \frac{\kappa}{8} \nrm{y - y_\star}^q z}
    =
    \inpr*{g, \frac{y + y_\star}{2}}
    +
    \inpr*{g, \frac{\kappa}{8} \nrm{y - y_\star}^q z}
    \per
    \n
  \end{equation}
  Rearranging the last inequality implies
  $
  \inpr{- g, y_\star - y}
  \geq
  \frac{\kappa}{4} \nrm{y - y_\star}^q \inpr{- g, z}
  .
  $
  Choosing $z = -g / \nrm{g} \in \ball_{\|\cdot\|}$ completes the proof.
\end{proof}

\section{Proof of \Cref{thm:main_uniform_cvx_corrupt}}\label{app:unif_convex_corruption}
\begin{proof}
The regret is bounded from below as
\begin{equation}\label{eq:unif_lb_corrupt}
  \Reg_T 
  \geq
  \E\brk*{ 
    \sumT \inpr{\nabla \ftilcirc(\xtilstar), x_t - \xstar} 
  }
  - 
  2 C
  \geq
  \frac{\kappa}{4 \xi^q} \nrm{\nabla \ftilcirc(\xtilstar)}_\star
  \,
  T^{1 - \frac{q}{2}}
  \,
  \prn*{ 
  \E\brk*{
    \sumT 
    \nrm{x - \xstar}_2^2
  }
  }^{q/2}
  -
  2 C
  \com 
\end{equation}
where the first inequality follows from the same argument as in the proof of \Cref{thm:main_corruption} in \Cref{app:corruption} and the second inequality from the same argument as in \eqref{eq:unif_lb}.
Recall that
$
\calR
=
\frac
{ \prn*{ \xi \Csc }^{\frac{q}{q-1}} }
{
  \prn*{
  q
  \kappa
  \nrm{ \nabla \ftilcirc(\xtilstar) }_\star
  }^{\frac{1}{q-1}}
}
T^{\frac{q-2}{2(q-1)}}
\prn*{ \log T }^{\frac{q}{2(q-1)}}
$.
Then from~\eqref{eq:uol_upper_bound} and~\eqref{eq:unif_lb_corrupt},
for any $\lambda \in (0, 1]$ we have 
\begin{align}
  \Reg_T 
  &=
  (1 + \lambda) \Reg_T - \lambda \Reg_T 
  \nn 
  &\leq 
  (1 + \lambda)
  \Csc
  \sqrt{\E\brk*{ \sumT \nrm{x_t - \xstar}_2^2 } \, \log T}
  -
  \frac{\kappa}{4 \xi^q} \nrm{\nabla \ftilcirc(\xtilstar)}_\star
  \,
  T^{1 - \frac{q}{2}}
  \,
  \prn*{ 
  \E\brk*{
    \sumT 
    \nrm{x - \xstar}_2^2
  }
  }^{q/2}
  \nn 
  &\qquad+
  2 \lambda C
  +
  (1 + \lambda) \Csc' \log T
  \nn 
  &\leq
  \frac{(1 + \lambda)^{\frac{q}{q-1}}}{\lambda^{\frac{1}{q-1}}}
  \calR
  +
  2 \lambda C
  +
  2 \Csc' \log T
  \nn 
  &\leq
  \frac{4}{\lambda^{\frac{1}{q-1}}}
  \frac
    { \Csc^{\frac{q}{q-1}} }
    {
      \prn*{
        q z
      }^{\frac{1}{q-1}}
    } 
  T^{\frac{q-2}{2(q-1)}}
  \prn*{ \log T }^{\frac{q}{2(q-1)}}
  +
  2 \lambda C
  +
  2 \Csc' \log T
  \label{eq:corrupt_unif_1}
\end{align}
where in the second inequality we used 
$ a \sqrt{x} - b x^{q/2} \leq {a^{\frac{q}{q-1}}}\big/{(q b)^{\frac{1}{q-1}}}$ that holds for $a, b, x > 0$ and $q \geq 2$
and in the last inequality we used 
$
(1 + \lambda)^{\frac{q}{q-1}}
\leq  
2^{\frac{q}{q-1}}
\leq 
4.
$
Choosing $\lambda = \prn*{\frac{\calR}{C + \calR}}^{\frac{q-1}{q}} \in (0, 1]$ in~\eqref{eq:corrupt_unif_1} gives 
$\Reg_T = O(\calR + C^{\frac{1}{q}} \calR^{\frac{q-1}{q}} + \Csc' \log T)$, which completes the proof.
\end{proof}

\section{Connection between uniform convexity in Banach space and uniformly convex sets}\label{app:unif_convex_relation}
This appendix discusses the connection between the uniform convexity in Banach space and the uniformly convex sets.
While the notion of uniformly convex set was employed in the context of achieving fast rates by exploiting the curvature of feasible sets~\cite{kerdreux21projection},
some papers in the context of online learning with a hint consider \emph{uniformly convex Banach spaces} \citep{dekel17online,bhaskara20online}.
This appendix may be useful in making the claims of the main body clearer by clarifying these relationships.

\paragraph{Uniformly convex space and modulus of uniform convexity}
We start from the definition of the uniform convexity in Banach spaces \citep[Definition 4.16]{pisier216martingales}.
\begin{definition}
A Banach space $(B,\nrm{\cdot})$ is 
\emph{uniformly convex} if
for any $\epsilon \in (0,2]$
there exists a $\delta > 0$ such that 
\begin{equation}
\forall 
x, y \in B 
\com \quad
\nrm{x} \leq 1, \nrm{y} \leq 1, \nrm{x - y} \geq \epsilon
\implies
\nrm*{\frac{x+y}{2}} \leq 1 - \delta
\per
\n
\end{equation}
The \emph{modulus of uniform convexity} $\delta_B(\epsilon)$ is the best possible $\delta$ for that $\epsilon$, that is,
\begin{equation}
  \delta_B(\epsilon)
  =
  \inf\set*{
    1 - \nrm*{\frac{x+y}{2}} 
    \colon
    \nrm{x} \leq 1, \nrm{y} \leq 1, \nrm{x - y} \geq \epsilon
  }
  \per
  \n
\end{equation}
Further, we say that $B$ is $(C,q)$-uniformly convex if $\delta_B(\epsilon) = C \epsilon^q$.
\end{definition}
We say that a Banach space $B$ is uniformly convex if $\delta_B(\epsilon) > 0$ for any $\epsilon \in (0,2]$.
The modulus of convexity captures the strength of convexity, and intuitively, if the convexity of the space is large, then any two arbitrarily chosen points on the unit sphere will have their midpoints located deep inside the unit sphere.
From this intuition, one can imagine that $\ell_\infty$ space is not uniformly convex.
In fact, $x = (1,1,1,\dots)$, $y=(-1,1,1,\dots)$ satisfies $\nrm{x}_\infty = \nrm{y}_\infty = 1$, $\nrm{x - y}_\infty = 2$ but $1 - \nrm*{(x+y)/2}_\infty = 0$.

\paragraph{Uniformly convex sets}
Here, we adopt a slightly generalized definition of uniformly convexity.
\begin{definition}
  A convex body $K$ is $\gamma_K(\cdot)$-uniformly convex w.r.t.~a norm $\nrm{\cdot}$
  if for any $ x, y \in K $ and any $\theta \in [0,1]$, it holds that
\begin{equation}
  \theta x + (1 - \theta) y 
  +
  \theta (1 - \theta)
  \gamma_K(\nrm{x - y})
  \cdot \ball_{\nrm{\cdot}}
  \subseteq 
  K 
  \per     
  \n
\end{equation}
In particular,
we say that a convex body $K$ is $(\kappa, q)$-uniformly convex w.r.t.~a norm $\nrm{\cdot}$ (or $q$-uniformly convex) if $\gamma_K(r) \geq \kappa r^q$.
\end{definition}

\paragraph{Connection between uniform convexity in Banach space and uniformly convex sets}
It is known that the unit ball on a uniformly convex space is a uniformly convex set and their uniform convexity matches up to a constant factor~\cite[Lemma 4.2]{kerdreux21projection}.
\begin{proposition}\label{prop:ball_over_space_to_unif_cvx_set}
Suppose that a Banach space 
is uniformly convex with a modulus of convexity $\delta(\cdot)$.
Then the unit ball on the Banach space, $\ball_{\nrm{\cdot}}$, is $2 \delta(\cdot)$-uniformly convex set w.r.t.~$\nrm{\cdot}$.
That is,
for any $ x, y \in \ball_{\nrm{\cdot}}$ and any $\theta \in [0,1]$, 
$
  \theta x + (1 - \theta) y 
  +
  \theta (1 - \theta)
  2\delta(\nrm{x - y})
  \cdot \ball_{\nrm{\cdot}}
  \subseteq 
  \ball_{\nrm{\cdot}}
  .
$
\end{proposition}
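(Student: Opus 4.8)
The plan is to unwind both definitions and reduce the set-inclusion statement to a pointwise estimate on norms that is exactly what the modulus of convexity provides. First I would fix $x, y \in \ball_{\nrm{\cdot}}$ and $\theta \in [0,1]$, and let $r = \nrm{x-y}$ and $z \in \ball_{\nrm{\cdot}}$ be arbitrary. The goal is to show
\[
  \nrm[\big]{ \theta x + (1-\theta) y + \theta(1-\theta)\, 2\delta(r)\, z } \leq 1 .
\]
The natural first move is the triangle inequality, which gives an upper bound of $\nrm{\theta x + (1-\theta) y} + 2\theta(1-\theta)\delta(r)$; so it suffices to prove $\nrm{\theta x + (1-\theta)y} \leq 1 - 2\theta(1-\theta)\delta(r)$. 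This is the crux of the argument and the step I expect to be the main obstacle, because the definition of the modulus of convexity only directly controls the \emph{midpoint} $\nrm{(x+y)/2}$, i.e.\ the case $\theta = 1/2$, whereas we need a bound for every convex combination with the correct $\theta(1-\theta)$ prefactor.

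To handle general $\theta$, I would use a standard convexity/interpolation trick: write the convex combination $\theta x + (1-\theta) y$ as a further convex combination involving the midpoint $m = (x+y)/2$. Concretely, for $\theta \in [1/2,1]$ one has $\theta x + (1-\theta)y = (2\theta - 1) x + (2 - 2\theta) m$, a convex combination of $x \in \ball_{\nrm{\cdot}}$ and $m$, which has norm at most $1 - \delta(r)$ by the definition of $\delta$. Since $\nrm{x} \le 1$ and $2-2\theta = 2(1-\theta)$, the triangle inequality yields $\nrm{\theta x + (1-\theta)y} \le (2\theta-1) + 2(1-\theta)(1-\delta(r)) = 1 - 2(1-\theta)\delta(r) \le 1 - 2\theta(1-\theta)\delta(r)$, where the last inequality uses $\theta \le 1$. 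The case $\theta \in [0,1/2]$ is symmetric by swapping the roles of $x$ and $y$ (note $\nrm{x-y}$ is symmetric), giving $\nrm{\theta x + (1-\theta)y} \le 1 - 2\theta\,\delta(r) \le 1 - 2\theta(1-\theta)\delta(r)$.

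Combining the two displays closes the argument: for any $\theta \in [0,1]$, $\nrm{\theta x + (1-\theta)y + \theta(1-\theta)2\delta(r) z} \le 1 - 2\theta(1-\theta)\delta(r) + 2\theta(1-\theta)\delta(r) = 1$, so the point lies in $\ball_{\nrm{\cdot}}$, as required. Two minor technical points I would mention: one should check that $\delta(r) \le 1$ (immediate from $\delta(\epsilon) = \inf\{1 - \nrm{(x+y)/2}\}$ and nonnegativity of the norm) so that the bound $1 - 2\theta(1-\theta)\delta(r)$ is meaningful, and one should note that when $\nrm{x-y} = 0$ or $\theta \in \{0,1\}$ the inclusion is trivial. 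The whole proof is short once the $\theta = 1/2$ case is leveraged via the interpolation identity; the only real subtlety is recognizing that one must route general convex combinations through the midpoint rather than trying to generalize the definition of $\delta$ directly.
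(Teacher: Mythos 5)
Your argument is correct: the reduction via the triangle inequality to the bound $\nrm{\theta x + (1-\theta)y} \leq 1 - 2\theta(1-\theta)\delta(\nrm{x-y})$, and the routing of a general convex combination through the midpoint via $\theta x + (1-\theta)y = (2\theta-1)x + 2(1-\theta)\,\frac{x+y}{2}$ for $\theta \in [1/2,1]$ (and symmetrically for $\theta \in [0,1/2]$), all check out, including the final inequalities $1 - 2(1-\theta)\delta \leq 1 - 2\theta(1-\theta)\delta$ and $1 - 2\theta\delta \leq 1 - 2\theta(1-\theta)\delta$. Note that the paper does not actually prove this proposition; it cites it as Lemma~4.2 of \citet{kerdreux21projection}, so there is no in-paper argument to compare against --- your proof is the standard one for that cited lemma and fills the gap correctly. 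The only convention worth making explicit is $\delta(0)=0$ (or handling $x=y$ separately, as you do), since the modulus is defined only for $\epsilon \in (0,2]$.
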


In existing studies, there is no explicit discussion on whether the converse of \Cref{prop:ball_over_space_to_unif_cvx_set} is true.
However, they are necessary to convert the major result known for uniformly convex spaces (\eg~\cite[Theorem 2]{hanner56uniform}) into a result for uniformly convex balls.
In the following, we show that the converse of \Cref{prop:ball_over_space_to_unif_cvx_set} indeed holds for completeness:
\begin{proposition}\label[proposition]{prop:unif_space2set}
  If a ball $K = \ball_{\nrm{\cdot}}$ is 
  $\gamma(\cdot)$-uniformly convex set w.r.t.~a norm $\nrm{\cdot}$,
  then a Banach space with the norm $\nrm{\cdot}$ is uniformly convex with a modulus of convexity $\frac14 \gamma(\cdot)$.
\end{proposition}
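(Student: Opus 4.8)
The plan is to derive the defining inequality of uniform convexity of the Banach space $(B, \nrm{\cdot})$ directly from the set-uniform-convexity of $K = \ball_{\nrm{\cdot}}$, specialized to midpoints $\theta = 1/2$, and then convert the set inclusion into the required scalar inequality by testing it against a Hahn--Banach norming functional.

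Concretely, I would fix $\epsilon \in (0,2]$ and take arbitrary $x, y$ with $\nrm{x} \leq 1$, $\nrm{y} \leq 1$, and $\nrm{x - y} \geq \epsilon$; since $K = \ball_{\nrm{\cdot}}$, both $x$ and $y$ lie in $K$. Applying the definition of $\gamma(\cdot)$-uniform convexity of $K$ with $\theta = \tfrac12$ gives $\tfrac{x+y}{2} + \tfrac14 \gamma(\nrm{x-y}) \cdot \ball_{\nrm{\cdot}} \subseteq \ball_{\nrm{\cdot}}$, that is, $\nrm{\tfrac{x+y}{2} + \tfrac14 \gamma(\nrm{x-y})\, z} \leq 1$ for every $z$ with $\nrm{z} \leq 1$. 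Writing $u = \tfrac{x+y}{2}$, I would choose by Hahn--Banach a functional $\phi$ with $\nrm{\phi}_\star = 1$ and $\inpr{\phi, u} = \nrm{u}$ (any $\phi$ with $\nrm{\phi}_\star = 1$ if $u = 0$), and, for arbitrary $\eta > 0$, a vector $z_\eta$ in the unit ball with $\inpr{\phi, z_\eta} \geq 1 - \eta$, which exists by the definition of the dual norm. Then $1 \geq \nrm{u + \tfrac14 \gamma(\nrm{x-y}) z_\eta} \geq \inpr{\phi,\, u + \tfrac14 \gamma(\nrm{x-y}) z_\eta} = \nrm{u} + \tfrac14 \gamma(\nrm{x-y}) \inpr{\phi, z_\eta} \geq \nrm{u} + \tfrac14 \gamma(\nrm{x-y})(1-\eta)$, and letting $\eta \to 0$ yields $\nrm{\tfrac{x+y}{2}} \leq 1 - \tfrac14 \gamma(\nrm{x-y})$.

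To finish, I would use that $\gamma$ is non-decreasing together with $\nrm{x-y} \geq \epsilon$ to get $1 - \nrm{\tfrac{x+y}{2}} \geq \tfrac14 \gamma(\epsilon)$, and then take the infimum over all admissible pairs $x, y$ to conclude $\delta_B(\epsilon) \geq \tfrac14 \gamma(\epsilon)$ for every $\epsilon \in (0,2]$; in particular $\delta_B(\epsilon) > 0$, so $B$ is uniformly convex with modulus of convexity (at least) $\tfrac14 \gamma(\cdot)$, which is the assertion. The only points requiring a little care are the possible non-attainment of the supremum in the dual-norm definition, handled by the $\eta \to 0$ limiting argument above (and trivial when $u = 0$), and the use of monotonicity of $\gamma$ to pass from $\gamma(\nrm{x-y})$ to $\gamma(\epsilon)$; everything else is routine, so I expect the limiting/duality step to be the only mild obstacle.
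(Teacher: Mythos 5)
Your proposal is correct and follows essentially the same route as the paper: apply the set-uniform-convexity at the midpoint $\theta=\tfrac12$ to get $\tfrac{x+y}{2}+\tfrac14\gamma(\nrm{x-y})\cdot\ball_{\nrm{\cdot}}\subseteq\ball_{\nrm{\cdot}}$ and convert this inclusion into $\nrm{\tfrac{x+y}{2}}\leq 1-\tfrac14\gamma(\nrm{x-y})$. The only difference is the extraction step—the paper simply takes $z=\frac{(x+y)/2}{\nrm{(x+y)/2}}$ so the norm of the shifted point computes exactly as $\nrm{\tfrac{x+y}{2}}+\tfrac14\gamma(\nrm{x-y})$, avoiding your Hahn--Banach functional and the $\eta\to 0$ limit—and your explicit appeal to monotonicity of $\gamma$ when passing to the infimum is a point the paper leaves implicit.
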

\begin{proof}
Since $K$ is $\gamma(\cdot)$-uniformly convex w.r.t.~a norm $\nrm{\cdot}$,
it holds for any $x, y \in K$ and $z \in \ball_{\nrm{\cdot}}$ that
\begin{equation}
  \frac{x + y}{2} + \frac{1}{4} \gamma(\nrm{x - y}) z \in K \per
  \n
\end{equation}
Taking $z = \frac12(x+y) / \nrm{\frac12(x+y)} \in \ball_{\nrm{\cdot}}$ implies 
\begin{equation}
  \frac{x + y}{2} + \frac{1}{4} \gamma(\nrm{x - y}) \cdot \frac{(x+y) / 2}{\nrm{(x+y) / 2}} \in K \per
  \n
\end{equation}
From this observation, we obtain
\begin{align}
  \nrm*{
    \frac{x + y}{2} + \frac{1}{4} \gamma(\nrm{x - y}) \cdot \frac{(x+y) / 2}{\nrm{(x+y) / 2}}
  }
  &=
  \prn*{
    1 + \frac14 \gamma(\nrm{x - y}) \frac{1}{\nrm{(x + y) / 2}}
  }
  \nrm*{
    \frac{x + y}{2} 
  }
  \nn
  &=
  \nrm*{
    \frac{x + y}{2} 
  }
  +
  \frac{1}{4} \gamma (\nrm{x - y})
  \leq 1
  \com 
  \n
\end{align}
where the last inequality follows from the assumption that $K = \ball_{\nrm{\cdot}}$.
Hence, for any $x, y \in K = \ball_{\nrm{\cdot}}$,
it holds that 
\begin{equation}
  1 - \nrm*{\frac{x + y}{2}}
  \geq 
  \frac{1}{4} \gamma(\nrm{x - y})
  \com 
  \n
\end{equation}
which implies that 
Banach space with the norm $\nrm{\cdot}$ is
uniformly convex
with the modulus of convexity of 
$\delta(\cdot) = \frac{1}{4} \gamma(\cdot)$.
\end{proof}

\section{Comparison of sphere-enclosed condition and Bernstein condition}\label{app:connection_with_bernstein}
This appendix discusses the relation between the sphere-enclosed condition and the Bernstein condition.
We use $\E_t\brk{\,\cdot\,}$ to denote the expectation given $f_1, \dots, f_{t-1}$.

\paragraph{Bernstein condition}
The seminal paper by \citet{koolen16combining} provides the following Bernstein condition to obtain fast rates in OCO:
\begin{definition}
  In online convex optimization, 
  a sequence of loss functions $\prn{f_t}_{t=1}^T$ satisfies the $(B, \kappa)$-Bernstein condition if 10
  \begin{equation}
    \E_t\brk*{ \prn*{ 
      \inpr*{\nabla f_t(x), x - \xstar}
    }^2 }
    \leq
    B \,
    \E_t\brk*{
      \inpr*{\nabla f_t(x), x - \xstar}
    }^\kappa
    \label{eq:bernstein}
  \end{equation}
  almost surely for all $x \in K$ and $t \in [T]$.
\end{definition}
When $\kappa = 1$, this condition is also known as the Massart condition.
They proved that if the $(B,\kappa)$-Bernstein condition is satisfied, then the MetaGrad algorithm achieves a regret bound of~$\Reg_T = O\prn*{ (d \log T)^{\frac{1}{2 - \kappa}} T^{\frac{1-\kappa}{2-\kappa}}}$.

\paragraph{Sphere-enclosed condition}
Under the same assumption as in \Cref{thm:main}, 
the $(\rho,\xstar,f^\circ)$-sphere-enclosed condition implies that
\begin{equation}
    \inpr{\nabla \fcirc(\xstar), x - \xstar}
  \geq 
    \gamma^* \nrm{x - \xstar}_2^2
  =
  \frac{\nrm{\nabla \fcirc(\xstar)}_2}{2 \rho} 
    \nrm{x - \xstar}_2^2
  \n
\end{equation}
for any $x \in K$.
Rearranging the last inequality gives that
\begin{equation}
    \nrm{x - \xstar}_2^2 
  \leq
  \frac{2 \rho}{\nrm{\nabla \fcirc(\xstar)}_2}
    \inpr{\nabla \fcirc(\xstar), x - \xstar}
  \label{eq:sphere_enclosed_app_2}
\end{equation}
holds for any $x \in K$.
When loss functions are stochastic, the last inequality implies
\begin{equation}
  \E_t\brk*{ \prn*{ 
    \inpr*{\nabla f_t(x), x - \xstar}
  }^2 }
  \leq 
    G^2
    \nrm{x - \xstar}_2^2
  \leq 
    \frac{2 G^2 \rho}{\nrm{\nabla \fcirc(\xstar)}_2}
    \inpr{\nabla \fcirc(\xstar), x - \xstar}
  \com 
  \label{eq:bernstein-like-ineq}
\end{equation}
where the first inequality follows from the Cauchy--Schwarz inequality and the second inequality follows from~\eqref{eq:sphere_enclosed_app_2}.



\paragraph{Comparison between Bernstein condition and sphere-enclosed condition} 
Comparing~\eqref{eq:bernstein} and~\eqref{eq:bernstein-like-ineq},
we can see that they look similar but different and
there is no direct connection between the sphere-enclosed condition and the Bernstein condition,
since the RHS of~\eqref{eq:bernstein} has $x$ in $\nabla f_t(x)$ while the RHS of~\eqref{eq:bernstein-like-ineq} has $\xstar$ in $\nabla f^\circ(\xstar)$.
However, when loss functions are linear with $g^\circ = \nabla f^\circ(x)$ for all $x \in K$, 
\Cref{eq:bernstein-like-ineq} is equivalent to
\begin{equation}
  \E_t\brk*{ \prn*{ 
    \inpr*{\nabla f_t(x), x - \xstar}
  }^2 }
  \leq 
  \frac{2 G^2 \rho}{\nrm{g^\circ}_2}
  \, 
    \inpr{\nabla \fcirc(\xstar), x - \xstar}
  =
  \frac{2 G^2 \rho}{\nrm{g^\circ}_2}
  \, 
  \E_t\brk*{
    \inpr{\nabla f_t(x_t), x - \xstar}
  }
  \per 
  \n
\end{equation}
Therefore,
when loss functions are stochastic and linear,
the $(\rho,\xstar,f^\circ)$-sphere-enclosed condition implies the $(2 G^2 \rho / \nrm{g^\circ}_2, 1)$-Bernstein condition for $g^\circ = \nabla \fcirc (x)$.
Hence, in stochastic OLO one can directly apply the result in~\cite{koolen16combining} to obtain fast rates.
Still, our analysis deals with general convex loss functions, can also be extended to OCO over uniformly convex sets, and has several advantages as detailed in \Cref{sec:introduction}.







\bibliographystyle{plainnat}


\newpage
\section*{NeurIPS Paper Checklist}

\begin{enumerate}

\item {\bf Claims}
    \item[] Question: Do the main claims made in the abstract and introduction accurately reflect the paper's contributions and scope?
    \item[] Answer: \answerYes{} 
    \item[] Justification: In the abstract and introduction, we claim that we consider online convex optimization and introduce a new approach and analysis for achieving fast rates over curved feasible sets.
    \item[] Guidelines:
    \begin{itemize}
        \item The answer NA means that the abstract and introduction do not include the claims made in the paper.
        \item The abstract and/or introduction should clearly state the claims made, including the contributions made in the paper and important assumptions and limitations. A No or NA answer to this question will not be perceived well by the reviewers. 
        \item The claims made should match theoretical and experimental results, and reflect how much the results can be expected to generalize to other settings. 
        \item It is fine to include aspirational goals as motivation as long as it is clear that these goals are not attained by the paper. 
    \end{itemize}

\item {\bf Limitations}
    \item[] Question: Does the paper discuss the limitations of the work performed by the authors?
    \item[] Answer: \answerYes{} 
    \item[] Justification: We provided a comparison of our regret bounds against existing regret bounds in \Cref{table:regret} in the introduction, after \Cref{thm:main,thm:main_uniform_cvx}, and in \Cref{app:polytope,app:connection_with_bernstein}.
    \item[] Guidelines:
    \begin{itemize}
        \item The answer NA means that the paper has no limitation while the answer No means that the paper has limitations, but those are not discussed in the paper. 
        \item The authors are encouraged to create a separate "Limitations" section in their paper.
        \item The paper should point out any strong assumptions and how robust the results are to violations of these assumptions (e.g., independence assumptions, noiseless settings, model well-specification, asymptotic approximations only holding locally). The authors should reflect on how these assumptions might be violated in practice and what the implications would be.
        \item The authors should reflect on the scope of the claims made, e.g., if the approach was only tested on a few datasets or with a few runs. In general, empirical results often depend on implicit assumptions, which should be articulated.
        \item The authors should reflect on the factors that influence the performance of the approach. For example, a facial recognition algorithm may perform poorly when image resolution is low or images are taken in low lighting. Or a speech-to-text system might not be used reliably to provide closed captions for online lectures because it fails to handle technical jargon.
        \item The authors should discuss the computational efficiency of the proposed algorithms and how they scale with dataset size.
        \item If applicable, the authors should discuss possible limitations of their approach to address problems of privacy and fairness.
        \item While the authors might fear that complete honesty about limitations might be used by reviewers as grounds for rejection, a worse outcome might be that reviewers discover limitations that aren't acknowledged in the paper. The authors should use their best judgment and recognize that individual actions in favor of transparency play an important role in developing norms that preserve the integrity of the community. Reviewers will be specifically instructed to not penalize honesty concerning limitations.
    \end{itemize}

\item {\bf Theory Assumptions and Proofs}
    \item[] Question: For each theoretical result, does the paper provide the full set of assumptions and a complete (and correct) proof?
    \item[] Answer: \answerYes{} 
    \item[] Justification: The problem setting of online convex optimization is detailed in \Cref{sec:preliminaries} and the complete proofs are provided in appendix.
    \item[] Guidelines:
    \begin{itemize}
        \item The answer NA means that the paper does not include theoretical results. 
        \item All the theorems, formulas, and proofs in the paper should be numbered and cross-referenced.
        \item All assumptions should be clearly stated or referenced in the statement of any theorems.
        \item The proofs can either appear in the main paper or the supplemental material, but if they appear in the supplemental material, the authors are encouraged to provide a short proof sketch to provide intuition. 
        \item Inversely, any informal proof provided in the core of the paper should be complemented by formal proofs provided in appendix or supplemental material.
        \item Theorems and Lemmas that the proof relies upon should be properly referenced. 
    \end{itemize}

    \item {\bf Experimental Result Reproducibility}
    \item[] Question: Does the paper fully disclose all the information needed to reproduce the main experimental results of the paper to the extent that it affects the main claims and/or conclusions of the paper (regardless of whether the code and data are provided or not)?
    \item[] Answer: \answerNA{} 
    \item[] Justification: The focus of this study is on theory and does not include experiments.
    \item[] Guidelines:
    \begin{itemize}
        \item The answer NA means that the paper does not include experiments.
        \item If the paper includes experiments, a No answer to this question will not be perceived well by the reviewers: Making the paper reproducible is important, regardless of whether the code and data are provided or not.
        \item If the contribution is a dataset and/or model, the authors should describe the steps taken to make their results reproducible or verifiable. 
        \item Depending on the contribution, reproducibility can be accomplished in various ways. For example, if the contribution is a novel architecture, describing the architecture fully might suffice, or if the contribution is a specific model and empirical evaluation, it may be necessary to either make it possible for others to replicate the model with the same dataset, or provide access to the model. In general. releasing code and data is often one good way to accomplish this, but reproducibility can also be provided via detailed instructions for how to replicate the results, access to a hosted model (e.g., in the case of a large language model), releasing of a model checkpoint, or other means that are appropriate to the research performed.
        \item While NeurIPS does not require releasing code, the conference does require all submissions to provide some reasonable avenue for reproducibility, which may depend on the nature of the contribution. For example
        \begin{enumerate}
            \item If the contribution is primarily a new algorithm, the paper should make it clear how to reproduce that algorithm.
            \item If the contribution is primarily a new model architecture, the paper should describe the architecture clearly and fully.
            \item If the contribution is a new model (e.g., a large language model), then there should either be a way to access this model for reproducing the results or a way to reproduce the model (e.g., with an open-source dataset or instructions for how to construct the dataset).
            \item We recognize that reproducibility may be tricky in some cases, in which case authors are welcome to describe the particular way they provide for reproducibility. In the case of closed-source models, it may be that access to the model is limited in some way (e.g., to registered users), but it should be possible for other researchers to have some path to reproducing or verifying the results.
        \end{enumerate}
    \end{itemize}

\item {\bf Open access to data and code}
    \item[] Question: Does the paper provide open access to the data and code, with sufficient instructions to faithfully reproduce the main experimental results, as described in supplemental material?
    \item[] Answer: \answerNA{} 
    \item[] Justification: The focus of this study is on theory and does not provide open access to the data nor code.
    \item[] Guidelines:
    \begin{itemize}
        \item The answer NA means that paper does not include experiments requiring code.
        \item Please see the NeurIPS code and data submission guidelines (\url{https://nips.cc/public/guides/CodeSubmissionPolicy}) for more details.
        \item While we encourage the release of code and data, we understand that this might not be possible, so “No” is an acceptable answer. Papers cannot be rejected simply for not including code, unless this is central to the contribution (e.g., for a new open-source benchmark).
        \item The instructions should contain the exact command and environment needed to run to reproduce the results. See the NeurIPS code and data submission guidelines (\url{https://nips.cc/public/guides/CodeSubmissionPolicy}) for more details.
        \item The authors should provide instructions on data access and preparation, including how to access the raw data, preprocessed data, intermediate data, and generated data, etc.
        \item The authors should provide scripts to reproduce all experimental results for the new proposed method and baselines. If only a subset of experiments are reproducible, they should state which ones are omitted from the script and why.
        \item At submission time, to preserve anonymity, the authors should release anonymized versions (if applicable).
        \item Providing as much information as possible in supplemental material (appended to the paper) is recommended, but including URLs to data and code is permitted.
    \end{itemize}

\item {\bf Experimental Setting/Details}
    \item[] Question: Does the paper specify all the training and test details (e.g., data splits, hyperparameters, how they were chosen, type of optimizer, etc.) necessary to understand the results?
    \item[] Answer: \answerNA{} 
    \item[] Justification: The focus of this study is on theory and does not include experiments.
    \item[] Guidelines:
    \begin{itemize}
        \item The answer NA means that the paper does not include experiments.
        \item The experimental setting should be presented in the core of the paper to a level of detail that is necessary to appreciate the results and make sense of them.
        \item The full details can be provided either with the code, in appendix, or as supplemental material.
    \end{itemize}

\item {\bf Experiment Statistical Significance}
    \item[] Question: Does the paper report error bars suitably and correctly defined or other appropriate information about the statistical significance of the experiments?
    \item[] Answer: \answerNA{} 
    \item[] Justification: The focus of this study is on theory and does not include experiments.
    \item[] Guidelines:
    \begin{itemize}
        \item The answer NA means that the paper does not include experiments.
        \item The authors should answer "Yes" if the results are accompanied by error bars, confidence intervals, or statistical significance tests, at least for the experiments that support the main claims of the paper.
        \item The factors of variability that the error bars are capturing should be clearly stated (for example, train/test split, initialization, random drawing of some parameter, or overall run with given experimental conditions).
        \item The method for calculating the error bars should be explained (closed form formula, call to a library function, bootstrap, etc.)
        \item The assumptions made should be given (e.g., Normally distributed errors).
        \item It should be clear whether the error bar is the standard deviation or the standard error of the mean.
        \item It is OK to report 1-sigma error bars, but one should state it. The authors should preferably report a 2-sigma error bar than state that they have a 96\% CI, if the hypothesis of Normality of errors is not verified.
        \item For asymmetric distributions, the authors should be careful not to show in tables or figures symmetric error bars that would yield results that are out of range (e.g. negative error rates).
        \item If error bars are reported in tables or plots, The authors should explain in the text how they were calculated and reference the corresponding figures or tables in the text.
    \end{itemize}

\item {\bf Experiments Compute Resources}
    \item[] Question: For each experiment, does the paper provide sufficient information on the computer resources (type of compute workers, memory, time of execution) needed to reproduce the experiments?
    \item[] Answer: \answerNA{} 
    \item[] Justification: The focus of this study is on theory and does not include experiments.
    \item[] Guidelines:
    \begin{itemize}
        \item The answer NA means that the paper does not include experiments.
        \item The paper should indicate the type of compute workers CPU or GPU, internal cluster, or cloud provider, including relevant memory and storage.
        \item The paper should provide the amount of compute required for each of the individual experimental runs as well as estimate the total compute. 
        \item The paper should disclose whether the full research project required more compute than the experiments reported in the paper (e.g., preliminary or failed experiments that didn't make it into the paper). 
    \end{itemize}
    
\item {\bf Code Of Ethics}
    \item[] Question: Does the research conducted in the paper conform, in every respect, with the NeurIPS Code of Ethics \url{https://neurips.cc/public/EthicsGuidelines}?
    \item[] Answer: \answerYes{} 
    \item[] Justification: The focus of this study is on theory and does not include contents that can violate the NeurIPS Code of Ethics.
    \item[] Guidelines:
    \begin{itemize}
        \item The answer NA means that the authors have not reviewed the NeurIPS Code of Ethics.
        \item If the authors answer No, they should explain the special circumstances that require a deviation from the Code of Ethics.
        \item The authors should make sure to preserve anonymity (e.g., if there is a special consideration due to laws or regulations in their jurisdiction).
    \end{itemize}

\item {\bf Broader Impacts}
    \item[] Question: Does the paper discuss both potential positive societal impacts and negative societal impacts of the work performed?
    \item[] Answer: \answerNA{} 
    \item[] Justification: The focus of this study is on theory and does not have societal impacts.
    \item[] Guidelines:
    \begin{itemize}
        \item The answer NA means that there is no societal impact of the work performed.
        \item If the authors answer NA or No, they should explain why their work has no societal impact or why the paper does not address societal impact.
        \item Examples of negative societal impacts include potential malicious or unintended uses (e.g., disinformation, generating fake profiles, surveillance), fairness considerations (e.g., deployment of technologies that could make decisions that unfairly impact specific groups), privacy considerations, and security considerations.
        \item The conference expects that many papers will be foundational research and not tied to particular applications, let alone deployments. However, if there is a direct path to any negative applications, the authors should point it out. For example, it is legitimate to point out that an improvement in the quality of generative models could be used to generate deepfakes for disinformation. On the other hand, it is not needed to point out that a generic algorithm for optimizing neural networks could enable people to train models that generate Deepfakes faster.
        \item The authors should consider possible harms that could arise when the technology is being used as intended and functioning correctly, harms that could arise when the technology is being used as intended but gives incorrect results, and harms following from (intentional or unintentional) misuse of the technology.
        \item If there are negative societal impacts, the authors could also discuss possible mitigation strategies (e.g., gated release of models, providing defenses in addition to attacks, mechanisms for monitoring misuse, mechanisms to monitor how a system learns from feedback over time, improving the efficiency and accessibility of ML).
    \end{itemize}
    
\item {\bf Safeguards}
    \item[] Question: Does the paper describe safeguards that have been put in place for responsible release of data or models that have a high risk for misuse (e.g., pretrained language models, image generators, or scraped datasets)?
    \item[] Answer: \answerNA{} 
    \item[] Justification: The focus of this study is on theory and does not include experiments.
    \item[] Guidelines:
    \begin{itemize}
        \item The answer NA means that the paper poses no such risks.
        \item Released models that have a high risk for misuse or dual-use should be released with necessary safeguards to allow for controlled use of the model, for example by requiring that users adhere to usage guidelines or restrictions to access the model or implementing safety filters. 
        \item Datasets that have been scraped from the Internet could pose safety risks. The authors should describe how they avoided releasing unsafe images.
        \item We recognize that providing effective safeguards is challenging, and many papers do not require this, but we encourage authors to take this into account and make a best faith effort.
    \end{itemize}

\item {\bf Licenses for existing assets}
    \item[] Question: Are the creators or original owners of assets (e.g., code, data, models), used in the paper, properly credited and are the license and terms of use explicitly mentioned and properly respected?
    \item[] Answer: \answerNA{} 
    \item[] Justification: The focus of this study is on theory and does not include experiments using existing assets.
    \item[] Guidelines:
    \begin{itemize}
        \item The answer NA means that the paper does not use existing assets.
        \item The authors should cite the original paper that produced the code package or dataset.
        \item The authors should state which version of the asset is used and, if possible, include a URL.
        \item The name of the license (e.g., CC-BY 4.0) should be included for each asset.
        \item For scraped data from a particular source (e.g., website), the copyright and terms of service of that source should be provided.
        \item If assets are released, the license, copyright information, and terms of use in the package should be provided. For popular datasets, \url{paperswithcode.com/datasets} has curated licenses for some datasets. Their licensing guide can help determine the license of a dataset.
        \item For existing datasets that are re-packaged, both the original license and the license of the derived asset (if it has changed) should be provided.
        \item If this information is not available online, the authors are encouraged to reach out to the asset's creators.
    \end{itemize}

\item {\bf New Assets}
    \item[] Question: Are new assets introduced in the paper well documented and is the documentation provided alongside the assets?
    \item[] Answer: \answerNA{} 
    \item[] Justification: The focus of this study is on theory and does not introduce new assets.
    \item[] Guidelines:
    \begin{itemize}
        \item The answer NA means that the paper does not release new assets.
        \item Researchers should communicate the details of the dataset/code/model as part of their submissions via structured templates. This includes details about training, license, limitations, etc. 
        \item The paper should discuss whether and how consent was obtained from people whose asset is used.
        \item At submission time, remember to anonymize your assets (if applicable). You can either create an anonymized URL or include an anonymized zip file.
    \end{itemize}

\item {\bf Crowdsourcing and Research with Human Subjects}
    \item[] Question: For crowdsourcing experiments and research with human subjects, does the paper include the full text of instructions given to participants and screenshots, if applicable, as well as details about compensation (if any)? 
    \item[] Answer: \answerNA{} 
    \item[] Justification: The focus of this study is on theory and does not involve crowdsourcing and human subjects.
    \item[] Guidelines:
    \begin{itemize}
        \item The answer NA means that the paper does not involve crowdsourcing nor research with human subjects.
        \item Including this information in the supplemental material is fine, but if the main contribution of the paper involves human subjects, then as much detail as possible should be included in the main paper. 
        \item According to the NeurIPS Code of Ethics, workers involved in data collection, curation, or other labor should be paid at least the minimum wage in the country of the data collector. 
    \end{itemize}

\item {\bf Institutional Review Board (IRB) Approvals or Equivalent for Research with Human Subjects}
    \item[] Question: Does the paper describe potential risks incurred by study participants, whether such risks were disclosed to the subjects, and whether Institutional Review Board (IRB) approvals (or an equivalent approval/review based on the requirements of your country or institution) were obtained?
    \item[] Answer: \answerNA{} 
    \item[] Justification: The focus of this study is on theory and does not involve study participants.
    \item[] Guidelines:
    \begin{itemize}
        \item The answer NA means that the paper does not involve crowdsourcing nor research with human subjects.
        \item Depending on the country in which research is conducted, IRB approval (or equivalent) may be required for any human subjects research. If you obtained IRB approval, you should clearly state this in the paper. 
        \item We recognize that the procedures for this may vary significantly between institutions and locations, and we expect authors to adhere to the NeurIPS Code of Ethics and the guidelines for their institution. 
        \item For initial submissions, do not include any information that would break anonymity (if applicable), such as the institution conducting the review.
    \end{itemize}

\end{enumerate}

\end{document}